\newcommand{\E}{\mathbb{E}}
\newcommand{\R}{\mathbb{R}}
\DeclareMathOperator*{\argmax}{arg\,max}
\DeclareMathOperator*{\argmin}{arg\,min}
\newcommand{\ic}{\oplus}
\newcommand{\Norm}[1]{\left \lVert #1 \right \rVert}
\newcommand{\norm}[1]{\lVert #1 \rVert}
\newcommand{\set}[1]{\{#1\}}
\newcommand{\mcC}{\mathcal{C}}
\newcommand{\mcH}{\mathcal{H}}
\newcommand{\mcM}{\mathcal{M}}
\newcommand{\mcP}{\mathcal{P}}
\renewcommand{\parallel}{\,||\,}
\renewcommand{\checkmark}{\ding{51}}
\newcommand{\xmark}{\ding{55}}%
\definecolor{amber}{HTML}{FFC107}
\definecolor{mygreen}{HTML}{259B24}
\newcommand{\cc}[1]{{\color{red}[#1]}}
\newcommand{\kmcom}[1]{{\color{blue}[#1]}}
\newcommand{\kf}[1]{{\color{purple}#1}}
\newcommand{\kfcom}[1]{{\color{purple}[#1]}}
\renewcommand{\cc}[1]{}
\renewcommand{\kmcom}[1]{}
\renewcommand{\kf}[1]{}
\renewcommand{\kfcom}[1]{}
\newtheorem{proposition}{Proposition}
\newtheorem{lemma}{Lemma}
\newtheorem{definition}{Definition}
\title{Smoothness and Stability in GANs}
\author{Casey Chu\\
Stanford University\\
\texttt{caseychu@stanford.edu} \\
\And
Kentaro Minami\\
Preferred Networks, Inc.\\
\texttt{minami@preferred.jp} \\
\And
Kenji Fukumizu\\
The Institute of Statistical Mathematics / Preferred Networks, Inc.\\
\texttt{fukumizu@ism.ac.jp }
}
\begin{document}

\maketitle

\begin{abstract}
   Generative adversarial networks, or GANs, commonly display unstable behavior during training. In this work, we develop a principled theoretical framework for understanding the stability of various types of GANs. In particular, we derive conditions that guarantee eventual stationarity of the generator when it is trained with gradient descent, conditions that must be satisfied by the divergence that is minimized by the GAN and the generator's architecture. We find that existing GAN variants satisfy some, but not all, of these conditions. Using tools from convex analysis, optimal transport, and reproducing kernels, we construct a GAN that fulfills these conditions simultaneously. In the process, we explain and clarify the need for various existing GAN stabilization techniques, including Lipschitz constraints, gradient penalties, and smooth activation functions.
\end{abstract}

\section{Introduction: taming instability with smoothness}\label{sec:introduction}
Generative adversarial networks \citep{goodfellow2014generative}, or GANs, are a powerful class of generative models defined through minimax game. GANs and their variants have shown impressive performance in synthesizing various types of datasets, especially natural images. Despite these successes, the training of GANs remains quite unstable in nature, and this instability remains difficult to understand theoretically.

Since the introduction of GANs, there have been many techniques proposed to stabilize GANs training, including studies of new generator/discriminator architectures, loss functions, and regularization techniques. Notably, \citet{arjovsky2017wasserstein} proposed Wasserstein GAN (WGAN), which in principle avoids instability caused by mismatched generator and data distribution supports. In practice, this is enforced by Lipschitz constraints, which in turn motivated developments like gradient penalties \citep{gulrajani2017wgangp} and spectral normalization \citep{miyato2018spectral}. Indeed, these stabilization techniques have proven essential to achieving the latest state-of-the-art results \citep{karras2018progressive,brock2018large}.

On the other hand, a solid theoretical understanding of training stability has not been established. Several empirical observations point to an incomplete understanding. For example, why does applying a gradient penalty together spectral norm seem to improve performance \citep{miyato2018spectral}, even though in principle they serve the same purpose? Why does applying only spectral normalization with the Wasserstein loss fail \citep{miyatogithub}, even though the analysis of \cite{arjovsky2017wasserstein} suggests it should be sufficient? Why is applying gradient penalties effective, even outside their original context of the Wasserstein GAN \citep{fedus2018many}?


In this work, we develop a framework to analyze the stability of GAN training that resolves these apparent contradictions and clarifies the roles of these regularization techniques. Our approach considers the \emph{smoothness} of the loss function used. In optimization, smoothness is a well-known condition that ensures that gradient descent and its variants become stable (see e.g., \cite{bertsekas_ed2}). For example, the following well-known proposition is the starting point of our stability analysis:

\begin{restatable}[\cite{bertsekas_ed2}, Proposition 1.2.3]{proposition}{PropGradientDescent}
    \label{prop:gradient_descent}
    Suppose $f : \R^p \to \R$ is $L$-smooth and bounded below. Let $x_{k+1} := x_k - \frac{1}{L} \nabla f(x_k)$. Then $||\nabla f(x_k)|| \to 0$ as $k \to \infty$.
\end{restatable}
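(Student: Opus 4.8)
The plan is the classical three-step argument: establish a quadratic descent inequality from $L$-smoothness, use it together with boundedness below to show the gradient-norm-squared series converges, and conclude. First I would recall that ``$L$-smooth'' means $\nabla f$ is $L$-Lipschitz, and from this derive the \emph{descent lemma}: for all $x, y \in \R^p$,
\begin{equation*}
    f(y) \le f(x) + \langle \nabla f(x), y - x \rangle + \tfrac{L}{2} \norm{y - x}^2.
\end{equation*}
This follows by writing $f(y) - f(x) = \int_0^1 \langle \nabla f(x + t(y-x)), y - x \rangle \dd t$, subtracting and adding $\langle \nabla f(x), y-x\rangle$ inside the integral, and bounding the remainder by $\int_0^1 Lt\norm{y-x}^2 \dd t = \tfrac{L}{2}\norm{y-x}^2$ via Cauchy--Schwarz and the Lipschitz bound.

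Next I would substitute the gradient step $y = x_{k+1} = x_k - \tfrac{1}{L}\nabla f(x_k)$ and $x = x_k$ into the descent lemma. The cross term contributes $-\tfrac{1}{L}\norm{\nabla f(x_k)}^2$ and the quadratic term contributes $+\tfrac{1}{2L}\norm{\nabla f(x_k)}^2$, giving
\begin{equation*}
    f(x_{k+1}) \le f(x_k) - \tfrac{1}{2L}\norm{\nabla f(x_k)}^2.
\end{equation*}
In particular $\{f(x_k)\}$ is nonincreasing, and since $f$ is bounded below it converges to some finite limit.

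Finally I would telescope: summing the inequality from $k = 0$ to $N-1$ yields $\tfrac{1}{2L}\sum_{k=0}^{N-1}\norm{\nabla f(x_k)}^2 \le f(x_0) - f(x_N) \le f(x_0) - \inf f < \infty$. Letting $N \to \infty$ shows $\sum_{k=0}^\infty \norm{\nabla f(x_k)}^2 < \infty$, so the summand tends to zero, i.e.\ $\norm{\nabla f(x_k)} \to 0$. The only real content is the descent lemma; everything after it is bookkeeping. I expect the main obstacle to be nothing deep here, but rather just being careful that $L$-smoothness is used in exactly the form needed (Lipschitz gradient, not e.g.\ bounded Hessian, so that no extra differentiability is assumed) and that the step size $1/L$ is what makes the net coefficient of $\norm{\nabla f(x_k)}^2$ equal to the favorable $-\tfrac{1}{2L}$ rather than something that could vanish or change sign.
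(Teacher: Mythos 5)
Your argument is correct and follows essentially the same route as the paper: apply the descent inequality from $L$-smoothness to the step $x_{k+1} = x_k - \tfrac{1}{L}\nabla f(x_k)$ to get $f(x_{k+1}) \le f(x_k) - \tfrac{1}{2L}\norm{\nabla f(x_k)}^2$, then telescope using boundedness below. The only differences are that you prove the descent lemma where the paper cites it, and your final step (convergence of $\sum_k \norm{\nabla f(x_k)}^2$ forces the terms to zero) gives the stated conclusion $\norm{\nabla f(x_k)} \to 0$ slightly more directly than the paper's bound $\min_{0 \le k \le n-1}\norm{\nabla f(x_k)}^2 \le \tfrac{2L}{n}(f(x_0) - \inf f)$.
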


This proposition says that under a smoothness condition on the function, gradient descent with a constant step size $\frac{1}{L}$ approaches stationarity (i.e., the gradient norm approaches zero). This is a rather weak notion of convergence, as it does not guarantee that the iterates converge to a point, and even if the iterates do converge, the limit is a stationary point and not necessarily an minimizer.

Nevertheless, empirically, not even this stationarity is satisfied by GANs, which are known to frequently destabilize and diverge during training. To diagnose this instability, we consider the smoothness of the GAN's loss function. GANs are typically framed as minimax problems of the form \begin{equation}
    \vspace{-1mm}
    \inf_{\theta} \sup_{\varphi} \mathcal{J}(\mu_\theta, \varphi), \label{eq:minimax}
    \vspace{-1mm}
\end{equation} where $\mathcal{J}$ is a loss function that takes a generator distribution $\mu_\theta$ and discriminator $\varphi$, and $\theta \in \R^p$ denotes the parameters of the generator. Unfortunately, the minimax nature of this problem makes stability and convergence difficult to analyze. To make the analysis more tractable, we define $J(\mu) = \sup_\varphi \mathcal{J}(\mu, \varphi)$, so that \eqref{eq:minimax} becomes simply \begin{equation}
    \vspace{-2mm}
    \inf_\theta J(\mu_\theta).
    \vspace{2mm}
\end{equation} This choice corresponds to the common assumption that the discriminator is allowed to reach optimality at every training step. Now, the GAN algorithm can be regarded as simply gradient descent on the $\R^p \to \R$ function $\theta \mapsto J(\mu_\theta)$, which may be analyzed using \cref{prop:gradient_descent}. In particular, if this function $\theta \mapsto J(\mu_\theta)$ satisfies the smoothness assumption, then the GAN training should be stable in that it should approach stationarity under the assumption of an optimal discriminator.

In the remainder of this paper, we investigate whether the smoothness assumption is satisfied for various GAN losses. Our analysis answers two questions:
\begin{enumerate}[label=Q\arabic*.]
   \item Which existing GAN losses, if any, satisfy the smoothness condition in \cref{prop:gradient_descent}?
   \item Are there choices of loss, regularization, or architecture that enforce smoothness in GANs?
\end{enumerate}

\if0
In this paper, we develop a unified framework to analyze the stability of GAN training.
To be precise, consider the following saddle point problem
\begin{equation}
    \min_{\theta} \max_{\varphi} \mathcal{J}(\mu_\theta, \varphi),
\end{equation}
where $\{ \mu_\theta = f_{\theta \#} \omega \}_\theta$ is a parametrized family of push-forward probability measures, and $\max_\varphi$ is taken over a class of functions $\varphi: X \to \R$ with a certain regularity. In GAN literature, $\mu_\theta$ and $\varphi$ are often called a generator and a discriminator, respectively. Under a reasonable condition, the GAN training can be regarded as the gradient descent for the objective $\theta \mapsto J(\mu_\theta) := \min_{\varphi} \mathcal{J}(\mu_\theta, \varphi) = \mathcal{J}(\mu_\theta, \Phi_\theta)$. \kfcom{Weak comment: I prefer the same symbol $\varphi$, like $\varphi_\theta^o$ or $\varphi_\theta^{opt}$. (Not $\varphi_\theta^*$, as it is confusing to conjugate.) It will be also kind to remark that the optimal $\varphi$ depends on $\theta$. }
Hence, provided an optimal discriminator $\Phi_\theta$, we could analyze the training dynamics of GANs by using standard theory of gradient-based optimization. To this end, we focus on analyzing \textit{smoothness} of GAN objectives. Namely, we investigate the property that the gradient of the entire objective $J(\mu_\theta)$ is Lipschitz. Smoothness is a sufficient condition with which the gradient descent algorithm becomes stable (i.e., the gradient norm converges to zero).
However, most of common loss functions in GANs themselves are not smooth. In this sense, employing regularization techniques such as the spectral normalization \citep{miyato2018spectral} and the gradient penalty \citep{gulrajani2017wgangp} is essential for stabilizing GAN training. Our purpose is to provide a unified framework that justifies these techniques in terms of smoothness of objectives.
\fi

As results of our analysis, our contributions are as follows:
\begin{enumerate}
    \item We derive sufficient conditions for the GAN algorithm to be stationary under certain assumptions (\cref{thm:stability_of_gans}). Our conditions relate to the smoothness of GAN loss used as well as the parameterization of the generator.
    \item We show that most common GAN losses do not satisfy the all of the smoothness conditions, thereby corroborating their empirical instability.
    \item We develop regularization techniques that enforce the smoothness conditions. These regularizers recover common GAN stabilization techniques such as gradient penalties and spectral normalization, thereby placing their use on a firmer theoretical foundation.
    \item Our analysis provides several practical insights, suggesting for example the use of smooth activation functions, simultaneous spectral normalization and gradient penalties, and a particular learning rate for the generator.
\end{enumerate}

\subsection{Related work}\label{sec:related_work}


\paragraph{Divergence minimization}
Our analysis regards the GAN algorithm as minimizing a divergence between the current generator distribution and the desired data distribution, under the assumption of an optimal discriminator at every training step. This perspective originates from the earliest GAN paper, in which \cite{goodfellow2014generative} show that the original minimax GAN implicitly minimizes the Jensen--Shannon divergence. Since then, the community has introduced a large number of GAN or GAN-like variants that learn generative models by implicitly minimizing various divergences, including $f$-divergences \citep{nowozin2016f}, Wasserstein distance \citep{arjovsky2017wasserstein}, and maximum-mean discrepancy \citep{li2015generative,unterthiner2018coulomb}. Meanwhile, the non-saturating GAN \citep{goodfellow2014generative} has been shown to minimize a certain Kullback--Leibler divergence \citep{arjovsky2017principled}.  Several more theoretical works consider the topological, geometric, and convexity properties of divergence minimization \citep{arjovsky2017principled,liu2017approximation,bottou2018geometrical,farnia2018convex,chu2019probability}, perspectives that we draw heavily upon. \cite{sanjabi2018regularizedot} also prove smoothness of GAN losses in the specific case of the regularized optimal transport loss. Their assumption for smoothness is entangled in that it involves a composite condition on generators and discriminators, while our analysis addresses them separately.

\paragraph{Other approaches}
Even though many analyses, including ours, operate under the assumption of an optimal discriminator, this assumption is unrealistic in practice. \cite{li2017limitations} contrast this optimal discriminator dynamics with \emph{first-order dynamics}, which assumes that the generator and discriminator use alternating gradient updates and is what is used computationally. As this is a differing approach from ours, we only briefly mention some results in this area, which typically rely on game-theoretic notions \citep{kodali2017convergence,grnarova2017online,oliehoek2018beyond} or local analysis \citep{nagarajan2017locallystable,mescheder2018whichgans}. Some of these results rely on continuous dynamics approximations of gradient updates; in contrast, our work focuses on discrete dynamics.

\subsection{Notation}
Let $\bar \R := \R \cup \set{\infty, -\infty}$.
We let $\mcP(X)$ denote the set of all probability measures on a compact set $X \subseteq \R^d$. We let $\mcM(X)$ and $\mcC(X)$ denote the dual pair consisting of the set of all finite signed measures on $X$ and the set of all continuous functions $X \to \R$. For any statement $A$, we let $\chi\{ A \}$ be $0$ if $A$ is true and $\infty$ if $A$ is false. For a Euclidean vector $x$, its Euclidean norm is denoted by $\|x\|_2$, and the operator norm of a matrix $A$ is denoted by $\|A\|_2$, i.e., $\|A\|_2 = \sup_{\|x\|_2\leq 1}\|Ax\|_2/\|x\|_2$.  A function $f: X \to Y$ between two metric spaces is $\alpha$-Lipschitz if $d_Y(f(x_1), f(x_2)) \le \alpha d_X(x_1, x_2)$. A function $f : \R^d \to \R$ is $\beta$-smooth if its gradients are $\beta$-Lipschitz, that is, for all $x, y \in \R^d$, $\norm{\nabla f(x) - \nabla f(y)}_2 \le \beta \norm{x - y}_2$. 

\section{Smoothness of GAN losses}\label{sec:answer_q1}

\begin{table}
   \centering
   \renewcommand{\arraystretch}{1.5}
    \caption{Common GANs, their corresponding loss functions, and their optimal discriminators.}
\begin{tabular}{ccc}
    \toprule
     & Loss function $J(\mu)$ & Optimal discriminator $\Phi_\mu(x)$ \\ \midrule
    Minimax GAN & $D_{\mathrm{JS}}(\mu \parallel \mu_0)$ & $\frac{1}{2} \log \frac{\mu(x)}{\mu(x) + \mu_0(x)}$ \\
    Non-saturating GAN & $D_{\mathrm{KL}}(\frac{1}{2}\mu + \frac{1}{2}\mu_0 \parallel \mu_0)$ & $-\frac{1}{2} \log \frac{\mu_0(x)}{\mu(x) + \mu_0(x)}$ \\
    Wasserstein GAN & $W_1(\mu, \mu_0)$ & $\argmax_{f \in \mathrm{Lip}_1} \E_{y \sim \mu} [ f(y)] - \E_{y \sim \mu_0} [ f(y)]$ \\
    GMMN, Coulomb GAN & $\frac{1}{2}\mathrm{MMD}^2(\mu, \mu_0)$ &$\E_{y\sim \mu} [K(x,y)] - \E_{y\sim \mu_0} [K(x,y)]$ \\
    IPM-GAN & $ \mathrm{IPM}_{\mathcal{F}}(\mu, \mu_0)$ & $\argmax_{f \in \mathcal{F}} \E_{y \sim \mu} [ f(y)] - \E_{y \sim \mu_0} [ f(y)]$ \\
    \bottomrule
\end{tabular}
    \label{tb:gan_list}
\end{table}

This section presents \cref{thm:stability_of_gans}, which provides concise criteria for the smoothness of GAN losses.

In order to keep our analysis agnostic to the particular GAN used, let $J : \mathcal{P}(X) \to \bar\R$ be an arbitrary convex loss function, which takes a distribution over $X \subset \R^d$ and outputs a real number. Note that the typical minimax formulation of GANs can be recovered from just the loss function $J$ using convex duality. In particular, recall that the \textbf{convex conjugate} $J^\star : \mathcal{C}(X) \to \bar\R$ of $J$ satisfies the following remarkable duality, known as the Fenchel--Moreau theorem:
\begin{equation}
    J^\star(\varphi) := \sup_{\mu \in \mathcal{M}(X)} \int \varphi(x)\,d\mu - J(\mu), \qquad
    J(\mu) = \sup_{\varphi \in \mathcal{C}(X)} \int \varphi(x)\,d\mu - J^\star(\varphi).
    \label{eq:convex_conjugate}
\end{equation} 
Based on this duality, minimizing $J$ can be framed as the minimax problem 
\begin{equation} \label{eq:generic_gan}
    \inf_{\mu \in \mathcal{P}(X)} J(\mu) = \inf_{\mu \in \mathcal{P}(X)} \sup_{\varphi \in \mathcal{C}(X)} \int \varphi(x)\,d\mu - J^\star(\varphi) := \inf_{\mu \in \mathcal{P}(X)} \sup_{\varphi \in \mathcal{C}(X)} \mathcal{J}(\mu, \varphi),
\end{equation}recovering the well-known adversarial formulation of GANs. We now define the notion of an optimal discriminator for an arbitrary loss function $J$, based on this convex duality:

\begin{definition}[Optimal discriminator]\label{defi:optimal_discriminator}
Let $J : \mathcal{M}(X) \to \bar\R$ be a convex, l.s.c., proper function. An \textbf{optimal discriminator} for a probability distribution $\mu \in \mathcal{P}(X)$ is a continuous function $\Phi_\mu: X \to \R$ that attains the maximum of the second equation in \eqref{eq:convex_conjugate}, i.e., $J(\mu) = \int \Phi_\mu(x)\,d\mu - J^\star(\Phi_\mu)$.
\end{definition}
This definition recovers the optimal discriminators of many existing GAN and GAN-like algorithms \citep{farnia2018convex,chu2019probability}, most notably those in \cref{tb:gan_list}. Our analysis will apply to any algorithm in this family of algorithms. See \cref{sec:common_gan_losses} for more details on this perspective.

We also formalize the notion of a family of generators:
\begin{definition}[Family of generators]
    A \textbf{family of generators} is a set of pushforward probability measures $\set{\mu_\theta = f_{\theta \#} \omega: \theta \in \R^p}$, where $\omega$ is a fixed probability distribution on $Z$ (the \textbf{latent variable}) and $f_\theta: Z \to X$ is a measurable function (the \textbf{generator}).
\end{definition}

Now, in light of \cref{prop:gradient_descent}, we are interested in the smoothness of the mapping $\theta \mapsto J(\mu_\theta)$, which would guarantee the stationarity of gradient descent on this objective, which in turn implies stationarity of the GAN algorithm under the assumption of an optimal discriminator. The following theorem is our central result, which decomposes the smoothness of $\theta \mapsto J(\mu_\theta)$ into conditions on optimal discriminators and the family of generators.

\begin{restatable}[Smoothness decomposition for GANs]{theorem}{ThmStability}
    \label{thm:stability_of_gans}
    Let $J : \mathcal{M}(X) \to \bar\R$ be a convex function whose optimal discriminators $\Phi_\mu : X \to \R$ satisfy the following regularity conditions:
    \begin{itemize}
      \item[{\bf(D1)}] $x \mapsto \Phi_\mu(x)$ is $\alpha$-Lipschitz,
      \item[{\bf(D2)}] $x \mapsto \nabla_x \Phi_\mu(x)$ is $\beta_1$-Lipschitz,
      \item[{\bf(D3)}] $\mu \mapsto \nabla_x \Phi_\mu(x)$ is $\beta_2$-Lipschitz w.r.t.~the $1$-Wasserstein distance.
    \end{itemize}
    
    Also, let $\mu_\theta = f_{\theta\#}\omega$ be a family of generators that satisfies:
    \begin{itemize}
      \item[{\bf(G1)}] $\theta \mapsto f_\theta(z)$ is $A$-Lipschitz in expectation for $z \sim \omega$, i.e., $\E_{z \sim \omega} [\norm{f_{\theta_1}(z) - f_{\theta_2}(z)}_2] \le A \norm{\theta_1 - \theta_2}_2$, and
      \item[{\bf(G2)}] $\theta \mapsto D_\theta f_\theta(z)$ is $B$-Lipschitz in expectation for $z \sim \omega$, i.e., $\E_{z \sim \omega}[\norm{D_{\theta_1}f_{\theta_1}(z) - D_{\theta_2}f_{\theta_2}(z)}_2] \le B \norm{\theta_1 - \theta_2}_2$.
    \end{itemize}
    
    Then $\theta \mapsto J(\mu_\theta)$ is $L$-smooth, with $L= \alpha B + A^2(\beta_1 + \beta_2)$.
\end{restatable}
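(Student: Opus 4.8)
Since $L$-smoothness of $\theta \mapsto J(\mu_\theta)$ means precisely that $\theta \mapsto \nabla_\theta J(\mu_\theta)$ is $L$-Lipschitz, the plan is to first produce a workable formula for this gradient and then bound its variation. Writing $F(\theta,\varphi) := \int \varphi\,d\mu_\theta - J^\star(\varphi) = \E_{z\sim\omega}[\varphi(f_\theta(z))] - J^\star(\varphi)$, duality \eqref{eq:convex_conjugate} gives $J(\mu_\theta) = \sup_\varphi F(\theta,\varphi)$, the supremum being attained at $\varphi = \Phi_{\mu_\theta}$ by \cref{defi:optimal_discriminator}. An envelope-theorem argument should then yield
\begin{equation*}
    \nabla_\theta J(\mu_\theta) = \E_{z\sim\omega}\bigl[(D_\theta f_\theta(z))^\top \nabla_x \Phi_{\mu_\theta}(f_\theta(z))\bigr],
\end{equation*}
i.e.\ one may differentiate $\E_z[\Phi_{\mu_\theta}(f_\theta(z))]$ treating $\Phi_{\mu_\theta}$ as frozen, then apply the chain rule and swap $\nabla_\theta$ with $\E_z$. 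I also record the two uniform bounds used throughout: (D1) forces $\norm{\nabla_x\Phi_\mu(x)}_2 \le \alpha$ for every $x$, and the pointwise form of (G1) forces $\norm{D_\theta f_\theta(z)}_2 \le A$ for $\omega$-a.e.\ $z$.

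Next, fix $\theta_1,\theta_2 \in \R^p$ and abbreviate $M_i(z) := D_{\theta_i} f_{\theta_i}(z)$, $v_i(z) := \nabla_x\Phi_{\mu_{\theta_i}}(f_{\theta_i}(z))$, so $\nabla_\theta J(\mu_{\theta_i}) = \E_z[M_i(z)^\top v_i(z)]$. Adding and subtracting $M_2(z)^\top v_1(z)$ gives
\begin{equation*}
    \nabla_\theta J(\mu_{\theta_1}) - \nabla_\theta J(\mu_{\theta_2}) = \E_z\bigl[(M_1(z)-M_2(z))^\top v_1(z)\bigr] + \E_z\bigl[M_2(z)^\top (v_1(z)-v_2(z))\bigr].
\end{equation*}
The first term is at most $\E_z[\norm{M_1(z)-M_2(z)}_2\,\norm{v_1(z)}_2] \le \alpha\,\E_z[\norm{M_1(z)-M_2(z)}_2] \le \alpha B\norm{\theta_1-\theta_2}_2$, using the $\alpha$-bound and (G2). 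For the second term I telescope $v_1(z)-v_2(z)$ through $\nabla_x\Phi_{\mu_{\theta_1}}(f_{\theta_2}(z))$: the increment $\nabla_x\Phi_{\mu_{\theta_1}}(f_{\theta_1}(z)) - \nabla_x\Phi_{\mu_{\theta_1}}(f_{\theta_2}(z))$ has norm $\le \beta_1\norm{f_{\theta_1}(z)-f_{\theta_2}(z)}_2$ by (D2), and $\nabla_x\Phi_{\mu_{\theta_1}}(f_{\theta_2}(z)) - \nabla_x\Phi_{\mu_{\theta_2}}(f_{\theta_2}(z))$ has norm $\le \beta_2\,W_1(\mu_{\theta_1},\mu_{\theta_2})$ by (D3).

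The estimate that ties everything together is that the diagonal coupling $z\mapsto(f_{\theta_1}(z),f_{\theta_2}(z))$ of $\mu_{\theta_1}$ and $\mu_{\theta_2}$ witnesses $W_1(\mu_{\theta_1},\mu_{\theta_2}) \le \E_z[\norm{f_{\theta_1}(z)-f_{\theta_2}(z)}_2] \le A\norm{\theta_1-\theta_2}_2$, the last step being (G1). Multiplying the bound on $\norm{v_1(z)-v_2(z)}_2$ by $\norm{M_2(z)}_2 \le A$, taking $\E_z$, and using (G1) once more to control $\E_z[\norm{f_{\theta_1}(z)-f_{\theta_2}(z)}_2]$, the second term is $\le A^2(\beta_1+\beta_2)\norm{\theta_1-\theta_2}_2$. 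Summing, $\norm{\nabla_\theta J(\mu_{\theta_1}) - \nabla_\theta J(\mu_{\theta_2})}_2 \le (\alpha B + A^2(\beta_1+\beta_2))\norm{\theta_1-\theta_2}_2$, which is the claim.

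The step I expect to be the real obstacle is justifying the gradient formula rigorously — specifically, that the $\theta$-dependence of the optimal discriminator $\Phi_{\mu_\theta}$ contributes nothing to $\nabla_\theta J(\mu_\theta)$. Applying Danskin's theorem directly is awkward since the inner maximization is over the infinite-dimensional $\mathcal{C}(X)$, so I would squeeze it from convexity: Fenchel optimality of $\Phi_{\mu_{\theta_0}}$ gives the subgradient inequality $J(\nu) \ge J(\mu_{\theta_0}) + \int \Phi_{\mu_{\theta_0}}\,d(\nu-\mu_{\theta_0})$ for all $\nu$, hence $J(\mu_\theta) - J(\mu_{\theta_0}) \ge \E_z[\Phi_{\mu_{\theta_0}}(f_\theta(z)) - \Phi_{\mu_{\theta_0}}(f_{\theta_0}(z))]$, and the analogous inequality with $\Phi_{\mu_\theta}$ yields a matching upper bound; since $\mu \mapsto \nabla_x\Phi_\mu$ is continuous (from (D3)) and $\mu_\theta \to \mu_{\theta_0}$ in $W_1$ as $\theta\to\theta_0$ (from (G1)), both bounds have the same linear part in $\theta-\theta_0$, namely $\langle \E_z[(D_{\theta_0}f_{\theta_0}(z))^\top\nabla_x\Phi_{\mu_{\theta_0}}(f_{\theta_0}(z))],\,\theta-\theta_0\rangle$. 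Beyond this, routine care is needed to interchange $\nabla_\theta$ with $\E_{z\sim\omega}$ (dominated convergence, via the a.e.\ Jacobian bound) and to derive the pointwise bound $\norm{D_\theta f_\theta(z)}_2 \le A$ from (G1).
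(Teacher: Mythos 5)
Your proposal follows essentially the same route as the paper's proof: both establish the gradient formula $\nabla_\theta J(\mu_\theta)=\E_z\bigl[(D_\theta f_\theta(z))^\top\nabla_x\Phi_{\mu_\theta}(f_\theta(z))\bigr]$ (the paper delegates this to \citet{chu2019probability}, Theorem~1, whereas you sketch the Fenchel-optimality squeeze directly) and then bound $\norm{\nabla_\theta J(\mu_{\theta_1})-\nabla_\theta J(\mu_{\theta_2})}$ by the same three-piece telescoping decomposition, pairing $(\mathrm{D1},\mathrm{G2})\mapsto\alpha B$, $(\mathrm{D2},\mathrm{G1})\mapsto A^2\beta_1$, and $(\mathrm{D3},\mathrm{G1})\mapsto A^2\beta_2$ via the coupling bound $W_1(\mu_{\theta_1},\mu_{\theta_2})\le\E_z\norm{f_{\theta_1}(z)-f_{\theta_2}(z)}$. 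The only cosmetic difference is that you telescope $v_1-v_2$ by first varying the evaluation point (D2) and then the measure (D3), while the paper does it in the opposite order; your parenthetical concern about needing an essentially-bounded Jacobian rather than (G1) verbatim is well-founded but the paper tacitly makes the same strengthening (it reintroduces $A$ as $\E_z\sup_\theta\norm{D_\theta f_\theta(z)}_\mathrm{op}$ in the proof).
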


\cref{thm:stability_of_gans} connects the smoothness properties of the loss function $J$ with the smoothness properties of the optimal discriminator $\Phi_\mu$, and once paired with \cref{prop:gradient_descent}, it suggests a quantitative value $\frac{1}{L}$ for a stable generator learning rate. In order to obtain claims of stability for practically sized learning rates, it is important to tightly bound the relevant constants.

In \cref{sec:condition_d1,sec:condition_d2,sec:condition_d3}, we carefully analyze which GAN losses satisfy (D1), (D2), and (D3), and with what constants. We summarize our results in \cref{tb:conditions}: it turns out that none of the listed losses, except for one, satisfy (D1), (D2), and (D3) simultaneously with a finite constant. The MMD-based loss satisfies the three conditions, but its constant for (D1) grows as $\alpha = O(\sqrt{d})$, which is an unfavorable dependence on the data dimension $d$ that forces an unacceptably small learning rate. See for complete details of each condition. This failure of existing GANs to satisfy the stationarity conditions corroborates the observed instability of GANs.

\begin{table}[h]
    \centering
    \caption{
        Regularity of common GAN losses.
    }
    \begin{tabular}{cccc}
    \toprule
    & (D1) & (D2) & (D3) \\ \midrule
    Minimax GAN & \xmark & \xmark & \xmark \\
    Non-saturating GAN & \xmark & \xmark & \xmark \\
    Wasserstein GAN & \cellcolor{mygreen!20} \checkmark & \xmark & \textbf{?} \\
    $\mathrm{IPM}_\mathcal{S}$ & \xmark & \cellcolor{mygreen!20} \checkmark & \textbf{?} \\
    $\mathrm{MMD}^2$ & \cellcolor{amber!30} \checkmark$^*$ & \cellcolor{mygreen!20} \checkmark & \cellcolor{mygreen!20} \checkmark \\
    \bottomrule
\end{tabular}

\if0
\begin{tabular}{cccc}
    \toprule
    Loss function $J(\mu)$ & (D1) & (D2) & (D3) \\ \midrule
    Minimax GAN & $+\infty$ & $+\infty$ & ? \\
    Non-saturating GAN & $+\infty$ & $+\infty$ & $+\infty$ \\
    Wasserstein GAN & $1$ & ? & $+\infty$ \\
    GMMN & $O(d)$ & $1$ & $1$ \\
    $\mathrm{IPM}_\mathcal{S}$ & $+\infty$ & $1$ & $+\infty$ \\
    \bottomrule
\end{tabular}
\fi
    \label{tb:conditions}
\end{table}

\cref{thm:stability_of_gans} decomposes smoothness into conditions on the generator and conditions on the discriminator, allowing a clean separation of concerns. In this paper, we focus on the discriminator conditions (D1), (D2), and (D3) and only provide an extremely simple example of a generator that satisfies (G1) and (G2), in \cref{sec:experiments}. Because analysis of the generator conditions may become quite complicated and will vary with the choice of architecture considered (feedforward, convolutional, ResNet, etc.), we leave a detailed analysis of the generator conditions (G1) and (G2) as a promising avenue for future work. Indeed, such analyses may lead to new generator architectures or generator regularization techniques that stabilize GAN training.

\section{Enforcing smoothness with inf-convolutions}\label{sec:answer_q2}

In this section, we present a generic regularization technique that imposes the three conditions sufficient for stable learning on an arbitrary loss function $J$, thereby stabilizing training. In \cref{sec:answer_q1}, we observe that the Wasserstein, IPM, and MMD losses respectively satisfy (D1), (D2), and (D3) individually, but not all of of them at the same time. Using techniques from convex analysis, we convert these three GAN losses into three regularizers that, when applied simultaneously, causes the resulting loss to satisfy all the three conditions. Here, we only outline the technique; the specifics of each case are deferred to  \cref{sec:condition_d1,sec:condition_d2,sec:condition_d3}.

We start with an arbitrary base loss function $J$ to be regularized. Next, we take an existing GAN loss that satisfies the desired regularity condition and convert it into a \textbf{regularizer} function $R : \mcM(X) \to \bar\R$. Then, we consider $J \ic R$, which denotes the \textbf{inf-convolution} defined as \begin{equation}
    (J \ic R)(\xi)
    = \inf_{\tilde{\xi} \in \mathcal{M}(X)} J(\tilde{\xi}) + R(\xi - \tilde{\xi}).
\end{equation} This new function $J \ic R$ inherits the regularity of $R$, making it a stable candidate as a GAN loss. Moreover, because the inf-convolution is a commutative operation, we can sequentially apply multiple regularizers $R_1$, $R_2$, and $R_3$ without destroying the added regularity. In particular, if we carefully choose functions $R_1$, $R_2$, and $R_3$, then $\tilde{J} = J \oplus R_1 \oplus R_2 \oplus R_3$ will satisfy (D1), (D2), and (D3) simultaneously. Moreover, under some technical assumptions, this composite function $\tilde J$ inherits the original minimizers of $J$, making it a sensible GAN loss:

\begin{restatable}[Invariance of minimizers]{proposition}{PropSuperGANInvariance}\label{prop:super_gan_invariance}
Let $R_1(\xi) := \norm{\xi}_\mathrm{KR}$, $R_2(\xi) := \norm{\xi}_{\mathcal{S}*}$, and $R_3(\xi) := \frac{1}{2}\norm{\hat{\xi}}_{\mcH}^2$ be the three regularizers defined by \eqref{eq:r1}, \eqref{eq:r2}, and \eqref{eq:r3} respectively. Assume that $J : \mathcal{M}(X) \to \bar\R$ has a unique minimizer at $\mu_0$ with $J(\mu_0) = 0$, and $J(\mu) \ge c \norm{ \hat\mu - \hat\mu_0}_{\mathcal{H}}$ for some $c > 0$. Then the inf-convolution $\tilde{J} = J \ic R_1 \ic R_2 \ic R_3$ has a unique minimizer at $\mu_0$ with $\tilde{J}(\mu_0) = 0$.
\end{restatable}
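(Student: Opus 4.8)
The plan is to verify three facts in turn: that $\tilde J(\mu_0)\le 0$, that $\tilde J\ge 0$ everywhere on $\mcM(X)$, and that $\tilde J(\nu)=0$ forces $\nu=\mu_0$. The first two follow from elementary properties of the inf-convolution, while the third is where the particular choice of $R_1,R_2,R_3$ does the work.

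First I would rewrite the iterated inf-convolution in the symmetric form
\[
  \tilde J(\xi) = \inf\{\, J(\tilde\xi)+R_1(\xi_1)+R_2(\xi_2)+R_3(\xi_3) : \tilde\xi+\xi_1+\xi_2+\xi_3=\xi \,\},
\]
which also makes commutativity manifest. Each $R_i$ is a power of a norm, hence nonnegative and zero at the zero measure, so the trivial splitting $\tilde\xi=\mu_0$, $\xi_1=\xi_2=\xi_3=0$ gives $\tilde J(\mu_0)\le J(\mu_0)=0$. Conversely, since $\mu_0$ is the unique minimizer of $J$ with $J(\mu_0)=0$ we have $J\ge 0$, so every summand in the infimum is nonnegative and $\tilde J\ge 0$; hence $\tilde J(\mu_0)=0$ and $\mu_0$ is a minimizer.

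For uniqueness, suppose $\tilde J(\nu)=0$ and, for each $n$, pick a splitting $\nu=\tilde\nu_n+\xi_1^n+\xi_2^n+\xi_3^n$ whose total cost is below $1/n$; as the four nonnegative costs sum to less than $1/n$, each is individually below $1/n$. I would then push everything through the linear kernel mean embedding, so that $\hat\nu-\hat\mu_0=(\hat{\tilde\nu}_n-\hat\mu_0)+\hat\xi_1^n+\hat\xi_2^n+\hat\xi_3^n$, and bound the four terms in $\mcH$: the hypothesis $J(\mu)\ge c\,\norm{\hat\mu-\hat\mu_0}_{\mcH}$ (valid for $\tilde\nu_n$, which lies in $\mathrm{dom}\,J$ since $J(\tilde\nu_n)<\infty$) controls $\norm{\hat{\tilde\nu}_n-\hat\mu_0}_{\mcH}$; the identity $R_3(\xi_3^n)=\frac12\norm{\hat\xi_3^n}_{\mcH}^2$ controls $\norm{\hat\xi_3^n}_{\mcH}$; and for the last two I would use that $R_1=\norm{\cdot}_{\mathrm{KR}}$ and $R_2=\norm{\cdot}_{\mathcal{S}*}$ each dominate a fixed multiple of the embedding norm $\xi\mapsto\norm{\hat\xi}_{\mcH}$, a fact dual to the continuous inclusions of the RKHS $\mcH$ into the $1$-Lipschitz functions and into the smoothness class $\mathcal{S}$ --- precisely the inclusions arranged where $R_1$ and $R_2$ are constructed. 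Adding the four bounds gives $\norm{\hat\nu-\hat\mu_0}_{\mcH}\le \frac1{cn}+\frac{C_1}{n}+\frac{C_2}{n}+\sqrt{2/n}\to 0$, so $\hat\nu=\hat\mu_0$, and since the kernel is characteristic (the standing assumption making $R_3$ correspond to a genuine MMD divergence) this forces $\nu=\mu_0$.

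The main obstacle is the pair of domination estimates $\norm{\hat\xi}_{\mcH}\le C_i\,R_i(\xi)$, $i=1,2$: these are exactly where the concrete construction of the regularizers --- and the requirement that $\mcH$ embed continuously into the relevant Lipschitz and smoothness classes --- enters, so I would import them from \cref{sec:condition_d1,sec:condition_d2,sec:condition_d3} rather than reprove them. A secondary subtlety is the domain over which ``minimizer'' is understood: characteristicness gives injectivity of $\mu\mapsto\hat\mu$ only on $\mcP(X)$, which suffices when the competitor $\nu$ is a probability measure, whereas excluding signed-measure minimizers would additionally invoke integral strict positive definiteness of the kernel. Finally, non-attainment of the defining infimum is harmless, since only the embeddings $\hat\xi_i^n$ --- not the measures $\xi_i^n$ --- need be controlled, and these vanish along any near-optimal sequence.
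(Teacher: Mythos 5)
Your argument is correct, and it follows a genuinely different route from the paper's. The paper reformulates $\tilde J(\mu)=[R_3\ic R_1\ic R_2\ic J(\cdot+\mu_0)](\mu-\mu_0)$, then iterates an abstract chaining lemma (built on Str\"omberg's theorems about inf-convolutions on a normed space) three times; each application needs the ``left'' factor to be uniformly continuous on bounded sets and coercive, and the ``right'' factor to dominate the ambient norm, with the lemma carefully propagating these properties so the iteration can continue. You instead unfold the triple inf-convolution into its flat form $\inf\{J(\tilde\xi)+R_1(\xi_1)+R_2(\xi_2)+R_3(\xi_3):\tilde\xi+\sum\xi_i=\xi\}$, take a near-optimal splitting at a putative minimizer, push everything through the linear map $\xi\mapsto\hat\xi$, and bound each summand by triangle inequality. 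Both arguments hinge on the same two domination estimates ($R_1\gtrsim\|\hat\cdot\|_\mcH$ from \cref{lem:gaussian_mmd_smooth}, $R_2\gtrsim\|\hat\cdot\|_\mcH$ from \cref{lem:r3_ge_rkhs}) together with the hypothesis $J\ge c\|\hat\cdot-\hat\mu_0\|_\mcH$ and injectivity of the mean embedding, so the analytic content is shared. What your route buys is elementarity: no appeal to Str\"omberg, no need to verify uniform continuity of $R_3$ on bounded sets, no bookkeeping that the intermediate inf-convolutions retain coercivity, and an explicit quantitative rate $\|\hat\nu-\hat\mu_0\|_\mcH\lesssim n^{-1/2}$. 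What the paper's route buys is the extra conclusions of the chaining lemma (real-valuedness, uniform continuity on bounded sets, and coercivity of $\tilde J$), which are not requested by the proposition but are pleasant to have. Your remark distinguishing characteristic from integrally strictly positive definite kernels is a genuine point the paper glosses over: uniqueness over all of $\mcM(X)$ does require the stronger property, which the Gaussian kernel possesses.
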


The duality formulation \eqref{eq:generic_gan} provides a practical method for minimizing this composite function.  We leverage the duality relation $(J \ic R_1 \ic R_2 \ic R_3)^\star = J^\star + R_1^\star + R_2^\star + R_3^\star$ and apply \eqref{eq:generic_gan}:
\begin{align}
    \inf_\mu \,(J \ic R_1 \ic R_2 \ic R_3)(\mu) 
        &= \inf_\mu \sup_{\varphi} \int \varphi\, d\mu - J^\star(\varphi) - R_1^\star(\varphi) - R_2^\star(\varphi) - R_3^\star(\varphi) \\
        &=  \inf_\mu \sup_{\varphi} \mathcal{J}(\mu, \varphi ) - R_1^\star(\varphi) - R_2^\star(\varphi) - R_3^\star(\varphi). \label{eq:supergan}
\end{align} 
This minimax problem can be seen as a GAN whose discriminator objective has three added regularization terms.

\begin{table}[b]
   \centering
   \renewcommand{\arraystretch}{1.5}
   \caption{Smoothness-inducing regularizers and their convex conjugates. To enforce a regularity condition on a loss function $J$, we take a source loss that satisfies it and view it as a regularizer $R$. We then consider the inf-convolution $J \ic R$, which corresponds to an added regularization term $R^\star$ on the discriminator $\varphi$. These regularization terms correspond to existing GAN techniques.
   }
   \renewcommand{\arraystretch}{1.5}
\begin{tabular}{ccccl}
    \toprule
    Purpose & Source loss & $R(\xi)$ & $R^\star(\varphi)$ & GAN techniques \\
    \midrule
    (D1) & $W_1$ & $\norm{\xi}_\mathrm{KR}$ & $\chi\set{\norm{\varphi}_\mathrm{Lip} \leq 1}$ & spectral norm \\
    (D2) & $\mathrm{IPM}_\mathcal{S}$ & $\norm{\xi}_{\mathcal{S}*}$ & $\chi\{ \varphi \in \mathcal{S} \}$ & smooth activations, spectral norm \\
    (D3) & $\mathrm{MMD}^2$ & $\frac{1}{2}\norm{\xi}^2_\mcH$ & $\frac{1}{2}\norm{\varphi}^2_\mcH$ & gradient penalties \\
    \bottomrule
\end{tabular}

\if0
\begin{tabular}{c|cc|ccc}
    \toprule
     & \multicolumn{2}{|c|}{Euclidean duality} & \multicolumn{3}{|c}{\textbf{Generator/Discriminator duality}} \\
    Purpose & $R$ (on $\R^d$) & $R^\star$ (on $\R^d$) &
    $R$ (on $\mcM(X)$) & $R^\star$ (on $\mcC(X)$) & Described in\\
    \midrule
    Lipschitz (D1)& $\norm{\cdot}_2$ & $\chi\set{\norm{\cdot}_2 \leq 1}$ &
    $\norm{ \cdot }_{\mathrm{KR}}$ & $\chi\set{\norm{\cdot}_\mathrm{Lip} \leq 1}$ & \cref{sec:condition_d1} \\
    Smooth (D2) & \multirow{2}{*}{$\frac{1}{2} \norm{\cdot}_2^2$} & \multirow{2}{*}{$\frac{1}{2} \norm{\cdot}_2^2$} &
    $\mathrm{IPM}_\mathcal{F}$ & $\chi\mathcal{F}$ & \cref{sec:condition_d2}\\
    \phantom{Smooth }(D3) & & &
    $\frac{1}{2} \mathrm{MMD}_K^2$ & $\frac{1}{2} \norm{\cdot}_{\mathcal{H}}^2$ & \cref{sec:condition_d3}\\
    \bottomrule
\end{tabular}
\fi
   \label{tb:regularizers}
\end{table}

The concrete form of these regularizers are summarized in \cref{tb:regularizers}. Notably, we observe that we recover standard techniques for stabilizing GANs:
\begin{itemize}
    \item (D1) is enforced by \textbf{Lipschitz constraints} (i.e., \textbf{spectral normalization}) on the discriminator.
    \item (D2) is enforced by spectral normalization and a choice of Lipschitz, smooth activation functions for the discriminator.
    \item (D3) is enforced by \textbf{gradient penalties} on the discriminator.
\end{itemize}
Our analysis therefore puts these regularization techniques on a firm theoretical foundation (Proposition \ref{prop:gradient_descent} and Theorem \ref{thm:stability_of_gans}) and provides insight into their function.

\section{Enforcing (D1) with Lipschitz constraints}\label{sec:condition_d1}

In this section, we show that enforcing (D1) leads to techniques and notions commonly used to stabilize GANs, including the Wasserstein distance, Lipschitz constraints and spectral normalization. Recall that (D1) demands that the optimal discriminator $\Phi_\mu$ is Lipschitz:

\textbf{ (D1) } \textit{$x \mapsto \Phi_\mu(x)$ is $\alpha$-Lipschitz for all $\mu \in \mathcal{P}(X)$, i.e., $|\Phi_\mu(x) - \Phi_\mu(y) | \le \alpha ||x - y||_2$.}

If $\Phi_\mu$ is differentiable, this is equivalent to that the optimal discriminator has a gradient with bounded norm. This is a sensible criterion, since a discriminator whose gradient norm is too large may push the generator too hard and destabilize its training. \kfcom{This sentence is not clear.}

To check (D1), the following proposition shows that it suffices to check whether $|J(\mu) - J(\nu)| \le \alpha W_1(\mu, \nu)$ for all distributions $\mu, \nu$:

\begin{restatable}{proposition}{PropWassersteinLipschitz}
    \label{prop:wasserstein_lipschitz}
    (D1) holds if and only if $J$ is $\alpha$-Lipschitz w.r.t.~the Wasserstein-1 distance.
\end{restatable}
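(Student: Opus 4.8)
The plan is to prove the equivalence via the Kantorovich--Rubinstein duality, which states that $W_1(\mu,\nu) = \sup_{\norm{f}_{\mathrm{Lip}} \le 1} \int f\,d\mu - \int f\,d\nu$, together with the characterization of optimal discriminators in terms of the convex conjugate (\cref{defi:optimal_discriminator}). The key conceptual link is that an optimal discriminator $\Phi_\mu$ plays the role of a subgradient of $J$ at $\mu$: indeed, the equality $J(\mu) = \int \Phi_\mu\,d\mu - J^\star(\Phi_\mu)$ together with the Fenchel--Young inequality $J(\nu) \ge \int \Phi_\mu\,d\nu - J^\star(\Phi_\mu)$ for all $\nu$ means precisely that $\Phi_\mu \in \partial J(\mu)$.

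For the direction ``(D1) $\Rightarrow$ $J$ is $\alpha$-Lipschitz w.r.t.\ $W_1$'': given any $\mu,\nu \in \mathcal{P}(X)$, pick optimal discriminators $\Phi_\mu$ and $\Phi_\nu$. Using the subgradient inequalities in both directions,
\begin{equation}
    \int \Phi_\nu\,d\mu - \int \Phi_\nu\,d\nu \;\le\; J(\mu) - J(\nu) \;\le\; \int \Phi_\mu\,d\mu - \int \Phi_\mu\,d\nu.
\end{equation}
Since $\Phi_\mu$ is $\alpha$-Lipschitz by (D1), the rightmost quantity is at most $\alpha W_1(\mu,\nu)$ by Kantorovich--Rubinstein; symmetrically the leftmost is at least $-\alpha W_1(\nu,\mu) = -\alpha W_1(\mu,\nu)$. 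Hence $|J(\mu) - J(\nu)| \le \alpha W_1(\mu,\nu)$.

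For the converse ``$J$ $\alpha$-Lipschitz w.r.t.\ $W_1$ $\Rightarrow$ (D1)'': fix $\mu$ and an optimal discriminator $\Phi_\mu$, so $\Phi_\mu \in \partial J(\mu)$. Suppose for contradiction $\Phi_\mu$ is not $\alpha$-Lipschitz, i.e.\ there exist $x,y \in X$ with $\Phi_\mu(x) - \Phi_\mu(y) > \alpha\norm{x-y}_2$. The idea is to move a small amount of mass $\varepsilon$ from $y$ to $x$: consider the perturbed (signed, then re-normalized) measure; more cleanly, take $\nu_t = \mu + t(\delta_x - \delta_y)$ for small $t>0$ (staying within $\mathcal{M}(X)$, or within $\mathcal{P}(X)$ if $\mu$ has an atom at $y$, or using a smoothed bump). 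Then $W_1(\nu_t,\mu) \le t\norm{x-y}_2$, while the subgradient inequality gives $J(\nu_t) \ge J(\mu) + t(\Phi_\mu(x) - \Phi_\mu(y)) > J(\mu) + t\alpha\norm{x-y}_2 \ge J(\mu) + \alpha W_1(\nu_t,\mu)$, contradicting the $W_1$-Lipschitz bound.

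The main obstacle is the converse direction's technical handling of the perturbation: $\mu + t(\delta_x - \delta_y)$ need not be a probability measure (it may fail positivity at $y$), and $J$ is only assumed finite/proper on $\mathcal{M}(X)$, so one must be careful about whether the relevant inequalities are being applied on $\mathcal{M}(X)$ or $\mathcal{P}(X)$. I expect the cleanest fix is to work with the convex conjugate identity on all of $\mathcal{M}(X)$ (where the Fenchel--Young inequality holds unconditionally) and to note that the definition of $\norm{\cdot}_{\mathrm{KR}}$ / the $W_1$-Lipschitz hypothesis extends naturally to signed measures of total mass zero; alternatively, replace $\delta_y$ by mass drawn from a small neighborhood where $\mu$ is positive, or approximate $\delta_x,\delta_y$ by narrow bumps, at the cost of an $o(t)$ error and a limiting argument using continuity of $\Phi_\mu$. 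I would also double-check that the supremum in Kantorovich--Rubinstein over $1$-Lipschitz functions is what appears in the relevant regularizer definition $\norm{\cdot}_{\mathrm{KR}}$ referenced later in the paper, so that the statement matches the framework in \cref{tb:regularizers}.
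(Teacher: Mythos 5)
Your proof of the direction (D1) $\Rightarrow$ $W_1$-Lipschitz is correct and takes a genuinely different, substantially simpler route than the paper. You use the subgradient interpretation of the optimal discriminator directly: the Fenchel--Young inequality makes $\Phi_\mu \in \partial J(\mu)$, which sandwiches $J(\mu)-J(\nu)$ between two integrals against $\alpha$-Lipschitz discriminators, and Kantorovich--Rubinstein duality finishes immediately. The paper instead proves this implication via Ambrosio--Gigli--Savar\'e theory: it joins $\mu$ and $\nu$ by a $\mathcal{P}_p$-absolutely continuous curve, invokes the existence of a velocity field bounded by the metric derivative (\cref{lem:ambrosio_continuity}), integrates $\frac{d}{dt}J(\mu_t)=\int v_t\cdot\nabla\Phi_{\mu_t}\,d\mu_t$, applies H\"older, and only then sends $p\to1$. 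Your argument cuts out all of that machinery; it is cleaner and I would prefer it.

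For the converse ($W_1$-Lipschitz $\Rightarrow$ (D1)), there is a real gap, which you partially identify but whose proposed fixes do not close. The perturbation $\nu_t = \mu + t(\delta_x - \delta_y)$ leaves $\mathcal{P}(X)$ unless $\mu$ happens to have an atom at $y$ (or a bump of lower-bounded density there). Extending the Lipschitz hypothesis to mass-zero signed measures is not free: the hypothesis as stated is Lipschitzness over $\mathcal{P}(X)$, and $J$ can be $+\infty$ off $\mathcal{P}(X)$ (the paper later uses $J=\chi\{\mu=\mu_0\}$, which is $+\infty$ almost everywhere), so the extended inequality can simply fail. Pulling mass from ``a neighborhood where $\mu$ is positive'' breaks when $\mu$ is, say, a finite sum of atoms none of which is near $y$. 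The deeper problem is that your argument relies on the one-sided subgradient bound, which only gives a \emph{lower} bound on $J(\nu_t)$; once you repair the perturbation so that it stays in $\mathcal{P}(X)$, for instance $\nu_{t,x}=(1-t)\mu+t\delta_x$ and $\nu_{t,y}=(1-t)\mu+t\delta_y$, you can bound $\Phi_\mu(x)-\int\Phi_\mu\,d\mu$ from above but not from below, so the difference $\Phi_\mu(x)-\Phi_\mu(y)$ is never controlled. What is actually needed is the two-sided functional-derivative identity $J(\nu_t)=J(\mu)+t\int\Phi_\mu\,d(\nu_t-\mu)+o(t)$; the paper supplies this via \cref{prop:chain_rule_for_optimal_discriminators} and avoids the $\mathcal{P}(X)$ issue entirely by perturbing with pushforwards $\mu_t=(\mathrm{id}+tv)_\#\mu$, which are probability measures by construction, combining this with $L^\infty$--$L^1$ duality to pass from directional derivatives to the gradient-norm bound.
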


\cite{arjovsky2017wasserstein} show that this property does not hold for common divergences based on the Kullback--Leibler or Jensen--Shannon divergence, while it does hold for the Wasserstein-1 distance. Indeed, it is this desirable property that motivates their introduction of the Wasserstein GAN. Framed in our context, their result is summarized as follows:

\begin{restatable}{proposition}{PropGANLipschitz}\label{prop:gan_lipschitz}
    The minimax and non-saturating GAN losses do not satisfy (D1) for some $\mu_0$.
\end{restatable}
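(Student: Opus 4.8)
The plan is to leverage \cref{prop:wasserstein_lipschitz}, which reduces the claim to showing that the minimax and non-saturating GAN losses fail to be $\alpha$-Lipschitz w.r.t.\ $W_1$ for any finite $\alpha$. By \cref{tb:gan_list}, the minimax loss is $J(\mu) = D_{\mathrm{JS}}(\mu \parallel \mu_0)$ and the non-saturating loss is $J(\mu) = D_{\mathrm{KL}}(\tfrac12\mu + \tfrac12\mu_0 \parallel \mu_0)$. So it suffices to exhibit, for a suitable fixed $\mu_0$, a sequence (or one-parameter family) of distributions $\mu_t$ such that $W_1(\mu_t, \mu_0) \to 0$ while $J(\mu_t)$ stays bounded away from $0$ — or more precisely, such that $J(\mu_t)/W_1(\mu_t,\mu_0) \to \infty$. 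This is essentially the classical observation of \cite{arjovsky2017wasserstein} that KL/JS-type divergences are discontinuous (indeed maximally so) when supports fail to overlap, repackaged into the Lipschitz language of \cref{prop:wasserstein_lipschitz}.

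Concretely, I would take $X = [-1,1] \subset \R$ (or any compact interval), let $\mu_0 = \delta_0$ be a point mass at the origin, and let $\mu_t = \delta_t$ for small $t > 0$. Then $W_1(\mu_t, \mu_0) = |t| \to 0$ as $t \to 0$, but the two measures have disjoint support for every $t \neq 0$, so $D_{\mathrm{JS}}(\mu_t \parallel \mu_0) = \log 2$ is constant. Hence $D_{\mathrm{JS}}(\mu_t \parallel \mu_0)/W_1(\mu_t,\mu_0) = \log 2 / |t| \to \infty$, so no finite $\alpha$ works; by \cref{prop:wasserstein_lipschitz}, (D1) fails. For the non-saturating loss, with the same $\mu_t = \delta_t$, $\tfrac12\mu_t + \tfrac12\mu_0$ is supported on $\{0,t\}$ and is not absolutely continuous w.r.t.\ $\mu_0 = \delta_0$, so $D_{\mathrm{KL}}(\tfrac12\mu_t + \tfrac12\mu_0 \parallel \mu_0) = \infty$ for every $t \neq 0$, while $W_1(\mu_t,\mu_0) = |t|$ is finite; again the Lipschitz ratio is unbounded (in fact infinite). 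One should remark that the same conclusion persists if one insists on densities — replacing $\delta_0, \delta_t$ by narrow bumps (e.g.\ uniform on $[0,\epsilon]$ and on $[t, t+\epsilon]$ with $\epsilon < t$) keeps the supports disjoint so JS stays $\log 2$ and the non-saturating KL stays large, while $W_1 \approx t \to 0$; this addresses the objection that point masses are degenerate.

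The only genuinely delicate point is matching conventions: the table lists $J$ for the minimax GAN as $D_{\mathrm{JS}}$ (possibly up to an additive/multiplicative constant and a shift, since $\sup_\varphi \mathcal J$ in the original GAN equals $D_{\mathrm{JS}}(\mu\parallel\mu_0) - \log 2$ in some normalizations), but any affine reparameterization leaves the non-Lipschitz conclusion intact, so this is harmless. I expect the main (minor) obstacle to be simply making precise which normalization of JS/KL is meant and confirming that $D_{\mathrm{JS}}(\delta_t \parallel \delta_0)$ does not degenerate to $0$ under that normalization — but since $\delta_t$ and $\delta_0$ are mutually singular, every standard version of the Jensen--Shannon divergence attains its maximal value there, so the argument goes through verbatim. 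I would then conclude by invoking \cref{prop:wasserstein_lipschitz} to translate ``not $W_1$-Lipschitz'' into ``(D1) fails,'' completing the proof.
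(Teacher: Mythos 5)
Your proposal is correct and takes essentially the same approach as the paper: both reduce (D1) to $W_1$-Lipschitzness via \cref{prop:wasserstein_lipschitz} and then use the counterexample $\mu_0=\delta_0$, $\mu=\delta_t$, for which $D_{\mathrm{JS}}(\delta_t\parallel\delta_0)=\log 2$ and $D_{\mathrm{KL}}(\tfrac12\delta_t+\tfrac12\delta_0\parallel\delta_0)=\infty$ while $W_1(\delta_t,\delta_0)=|t|\to 0$. The extra remark about replacing point masses by narrow bumps is a nice robustness check but not present (or needed) in the paper's proof.
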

\begin{restatable}{proposition}{PropWGANLipschitz}\label{prop:wgan_lipschitz}
    The Wasserstein GAN loss satisfies (D1) with $\alpha = 1$ for any $\mu_0$.
\end{restatable}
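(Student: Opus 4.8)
The plan is to invoke \cref{prop:wasserstein_lipschitz}, which reduces (D1) to showing that the Wasserstein GAN loss $J(\mu) = W_1(\mu, \mu_0)$ is $1$-Lipschitz with respect to the $1$-Wasserstein distance. So the entire claim collapses to the inequality $|W_1(\mu, \mu_0) - W_1(\nu, \mu_0)| \le W_1(\mu, \nu)$ for all $\mu, \nu \in \mathcal{P}(X)$, valid for every choice of $\mu_0$.

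First I would recall that $W_1$ is a genuine metric on $\mathcal{P}(X)$ (this uses that $X$ is compact, so all measures have finite first moment and $W_1$ is finite and satisfies the triangle inequality — a standard fact from optimal transport, e.g. Villani). Given that, the desired bound is just the reverse triangle inequality: from $W_1(\mu, \mu_0) \le W_1(\mu, \nu) + W_1(\nu, \mu_0)$ we get $W_1(\mu,\mu_0) - W_1(\nu,\mu_0) \le W_1(\mu,\nu)$, and by symmetry in $\mu \leftrightarrow \nu$ the absolute-value version follows. Hence $J = W_1(\cdot, \mu_0)$ is $1$-Lipschitz w.r.t.\ $W_1$, and applying \cref{prop:wasserstein_lipschitz} gives that (D1) holds with $\alpha = 1$.

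There is essentially no obstacle here; the only point requiring a line of care is justifying that $W_1$ really is a metric (in particular that it is finite) on $\mathcal{P}(X)$ — but this is immediate because $X \subseteq \R^d$ is compact, so $\mathrm{diam}(X) < \infty$ and any coupling gives $W_1 \le \mathrm{diam}(X) < \infty$. One could alternatively argue directly at the level of transport plans: given a coupling $\pi_1$ of $(\mu, \mu_0)$ achieving $W_1(\mu,\mu_0)$ is not needed — instead glue an optimal coupling $\pi_{\mu\nu}$ of $(\mu,\nu)$ with an optimal coupling $\pi_{\nu\mu_0}$ of $(\nu,\mu_0)$ along their common $\nu$-marginal to produce a coupling of $(\mu,\mu_0)$, then bound its cost by the triangle inequality on $\|\cdot\|_2$ inside the integral. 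But invoking the metric property of $W_1$ is cleaner, and the proof is a two-line deduction from \cref{prop:wasserstein_lipschitz}.
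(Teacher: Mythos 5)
Your proof is correct and takes essentially the same route as the paper: reverse triangle inequality for the metric $W_1$, then \cref{prop:wasserstein_lipschitz}. The extra remarks on finiteness of $W_1$ on a compact domain and the gluing-of-couplings alternative are fine but not needed beyond what the paper already cites.
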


Our stability analysis therefore deepens the analysis of \cite{arjovsky2017wasserstein} and provides an alternative reason that the Wasserstein distance is desirable as a metric: it is part of a sufficient condition that ensures stationarity of gradient descent.

\subsection{From Wasserstein distance to Lipschitz constraints}
\label{sec:regularizer_d1_primal}
Having identified the Wasserstein GAN loss as one that satisfies (D1), we next follow the strategy outlined in \cref{sec:answer_q2} to convert it into a regularizer for an arbitrary loss function. Towards this, we define the regularizer $R_1 : \mathcal{M}(X) \to \bar\R$ and compute its convex conjugate $R_1^\star : \mathcal{C}(X) \to \bar\R$: \begin{equation}
    R_1(\xi) := \alpha \norm{\xi}_{\mathrm{KR}} = \alpha \sup_{\substack{f \in \mathcal{C}(X) \\ ||f||_{\mathrm{Lip}} \le 1}} \int f\,d\xi, \qquad
    R_1^\star(\varphi) = \begin{cases}
        0 & \norm{\varphi}_{\mathrm{Lip}} \le \alpha \\
        \infty & \text{otherwise}.
    \end{cases} \label{eq:r1}
\end{equation} This norm is the \textbf{Kantorovich--Rubinstein norm} (KR norm), which extends the Wasserstein-1 distance to $\mathcal{M}(X)$; it holds that $\norm{\mu - \nu}_{\mathrm{KR}} = W_1(\mu, \nu)$ for $\mu,\nu \in \mathcal{P}(X)$. Then, its inf-convolution with an arbitrary function inherits the Lipschitz property held by $R_1$:
\begin{restatable}[Pasch--Hausdorff]{proposition}{PropPaschHausdorff}\label{prop:pasch_hausdorff}
    Let $J : \mathcal{M}(X) \to \bar\R$ be a function, and define $\tilde{J} := J \ic R_1$. Then $\tilde{J}$ is $\alpha$-Lipschitz w.r.t.~the distance induced by the KR norm, and hence the Wasserstein-1 distance when restricted to $\mathcal{P}(X)$.
\end{restatable}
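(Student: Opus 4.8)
The plan is to establish the Lipschitz bound directly from the definition of the inf-convolution, using only the triangle inequality for the Kantorovich--Rubinstein norm; this is exactly the classical Pasch--Hausdorff (Lipschitz regularization) construction, specialized to the norm $\alpha\norm{\cdot}_{\mathrm{KR}}$ on $\mathcal{M}(X)$. Fix $\xi_1, \xi_2 \in \mathcal{M}(X)$. By symmetry in $\xi_1, \xi_2$ it suffices to prove the one-sided estimate $\tilde{J}(\xi_1) \le \tilde{J}(\xi_2) + \alpha\norm{\xi_1 - \xi_2}_{\mathrm{KR}}$, and then the absolute-value version follows by swapping the two arguments.

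The key computation I would carry out is the following chain. For an arbitrary $\tilde\xi \in \mathcal{M}(X)$, the definition of the infimum gives
\[
    \tilde{J}(\xi_1) \le J(\tilde\xi) + \alpha\norm{\xi_1 - \tilde\xi}_{\mathrm{KR}}
    \le J(\tilde\xi) + \alpha\norm{\xi_2 - \tilde\xi}_{\mathrm{KR}} + \alpha\norm{\xi_1 - \xi_2}_{\mathrm{KR}},
\]
where the second inequality uses subadditivity of $\norm{\cdot}_{\mathrm{KR}}$ applied to $\xi_1 - \tilde\xi = (\xi_2 - \tilde\xi) + (\xi_1 - \xi_2)$; this subadditivity (and positive homogeneity, so that the factor $\alpha \ge 0$ passes through cleanly) is immediate from the supremum representation $\norm{\xi}_{\mathrm{KR}} = \sup_{\norm{f}_{\mathrm{Lip}}\le 1}\int f\,d\xi$, since each admissible $f$ satisfies $\int f\,d(\eta+\zeta) = \int f\,d\eta + \int f\,d\zeta$. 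Taking the infimum over $\tilde\xi$ on the right-hand side — the last term does not depend on $\tilde\xi$ — yields $\tilde{J}(\xi_1) \le \tilde{J}(\xi_2) + \alpha\norm{\xi_1 - \xi_2}_{\mathrm{KR}}$. Interchanging $\xi_1$ and $\xi_2$ gives $|\tilde{J}(\xi_1) - \tilde{J}(\xi_2)| \le \alpha\norm{\xi_1 - \xi_2}_{\mathrm{KR}}$. The statement about $\mathcal{P}(X)$ is then immediate: restricting to $\mu,\nu \in \mathcal{P}(X)$ and invoking the identity $\norm{\mu - \nu}_{\mathrm{KR}} = W_1(\mu,\nu)$ recorded just above the proposition shows $\tilde J$ is $\alpha$-Lipschitz with respect to $W_1$.

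The one point I would be careful about — and the only step with any subtlety at all — is the bookkeeping with $\pm\infty$: the displayed chain must be read in $\bar\R$, so I would check that $\norm{\cdot}_{\mathrm{KR}}$ is subadditive as an $\bar\R$-valued seminorm (true, using $a + \infty = \infty$), that the arithmetic $J(\tilde\xi) + \infty$ never produces an illegal $\infty - \infty$ (it does not, since $\alpha\norm{\cdot}_{\mathrm{KR}} \ge 0$), and that $\tilde J$ is not identically $-\infty$ or $+\infty$ so that ``$\alpha$-Lipschitz'' is a genuine statement rather than a vacuous one — this holds as soon as $J$ is proper and bounded below on some ball, which is the setting of interest. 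None of this is deep; the entire content of the proposition is the one-line triangle-inequality estimate above.
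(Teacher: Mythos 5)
Your proof is correct and is essentially the same argument as the paper's: both derive the one-sided bound $\tilde J(\xi_1) \le \tilde J(\xi_2) + \alpha\norm{\xi_1-\xi_2}_{\mathrm{KR}}$ from the definition of the infimum plus the triangle inequality for the Kantorovich--Rubinstein norm, and then swap the arguments. If anything you are slightly more careful than the paper, which writes $W_1(\cdot,\cdot)$ while really ranging over $\mathcal{M}(X)$ where the correct object is $\norm{\cdot}_{\mathrm{KR}}$, and your remark on extended-real arithmetic is a small but genuine tightening that the paper leaves implicit.
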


Due to \cref{prop:wasserstein_lipschitz}, we now obtain a transformed loss function $\tilde{J}$ that automatically satisfies (D1). This function is a generalization of the \textbf{Pasch--Hausdorff envelope} (see Chapter 9 in \citet{rockafeller_wets}), also known as Lipschitz regularization or the McShane--Whitney extension \citep{mcshane1934extension,whitney1934analytic,kirszbraun1934zusammenziehende,hiriart1980extension}.

The convex conjugate computation in \eqref{eq:r1} shows that $\tilde{J}$ can be minimized in practice by imposing Lipschitz constraints on discriminators. Indeed, by \eqref{eq:generic_gan},
\begin{align}
    \inf_\mu \,(J \ic \alpha \norm{\cdot}_{\mathrm{KR}})(\mu)
    & = \inf_\mu \sup_\varphi \E_{x \sim \mu} [\varphi(x)] -
        J^\star(\varphi) - \chi\{ \norm{\varphi}_{\mathrm{Lip}} \le \alpha \} \\
    &= \inf_\mu \sup_{\varphi:\ \norm{\varphi}_{\mathrm{Lip}} \le \alpha} \mathcal{J}(\mu, \varphi).
\end{align}
\cite{farnia2018convex} consider this loss $\tilde{J}$ in the special case of an $f$-GAN with $J(\mu) = D_f(\mu, \mu_0)$; they showed that minimizing $\tilde J$ corresponds to training a $f$-GAN normally but constraining the discriminator to be $\alpha$-Lipschitz. We show that this technique is in fact generic for any $J$: minimizing the transformed loss can be achieved by training the GAN as normal, but imposing a Lipschitz constraint on the discriminator. 

Our analysis therefore justifies the use of Lipschitz constraints, such as spectral normalization \citep{miyato2018spectral} and weight clipping \citep{arjovsky2017principled}, for general GAN losses. However, \cref{thm:stability_of_gans} also suggests that applying only Lipschitz constraints may not be enough to stabilize GANs, as (D1) alone does not ensure that the GAN objective is smooth.

\section{Enforcing (D2) with discriminator smoothness}\label{sec:condition_d2}
(D2) demands that the optimal discriminator $\Phi_\mu$ is smooth:

\textbf{ (D2) } $x \mapsto \nabla \Phi_\mu(x)$ is $\beta_1$-Lipschitz for all $\mu \in \mathcal{P}(X)$, i.e., $\norm{\nabla\Phi_\mu(x) - \nabla\Phi_\mu(y)}_2 \le \beta_1 \norm{x - y}_2$.

Intuitively, this says that for a fixed generator $\mu$, the optimal discriminator $\Phi_\mu$ should not provide gradients that change too much spatially.

Although the Wasserstein GAN loss (D1), we see that it, along with the minimax GAN and the non-saturating GAN, do not satisfy (D2):

\begin{restatable}{proposition}{PropNonSmoothWGAN}
    \label{prop:non_smooth_wgan}
    The Wasserstein, minimax, and non-saturating GAN losses do not satisfy (D2) for some $\mu_0$.
\end{restatable}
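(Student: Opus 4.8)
My plan is to refute (D2) for each of the three losses by exhibiting, on the compact domain $X = [-1,1] \subset \R$, an explicit pair $(\mu, \mu_0)$ for which some optimal discriminator $\Phi_\mu$ is continuous yet has a derivative with a jump discontinuity at the origin; since a $\beta_1$-Lipschitz function is in particular continuous, such a $\Phi_\mu$ cannot be $\beta_1$-smooth for any finite $\beta_1$. The unifying mechanism is that a ``kink'' on the data side forces a ``kink'' into $\nabla_x\Phi_\mu$.

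\emph{Wasserstein GAN.} Take $\mu = \delta_0$ and $\mu_0 = \tfrac12 \delta_{-1} + \tfrac12 \delta_1$, so that $W_1(\mu,\mu_0) = 1$. The function $f(x) = -|x|$ is $1$-Lipschitz on $X$ and satisfies $\E_\mu f - \E_{\mu_0} f = 0 - \tfrac12(-1) - \tfrac12(-1) = 1 = W_1(\mu,\mu_0)$, so by \cref{tb:gan_list} it is an optimal discriminator $\Phi_\mu$; its derivative jumps from $+1$ to $-1$ at $x = 0$. In fact the rigidity of distance-saturating $1$-Lipschitz functions (the constraint $f(0) - f(-1) = |0-(-1)|$ forces unit slope on $[-1,0]$, and likewise $f(0)-f(1)=|0-1|$ forces slope $-1$ on $[0,1]$) shows that \emph{every} optimal discriminator equals $x \mapsto c - |x|$, so the non-smoothness is unavoidable.

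\emph{Minimax and non-saturating GANs.} For these, \cref{tb:gan_list} gives the optimal discriminator in closed form in terms of densities, so it suffices to choose densities that make this formula continuous but not $C^1$ on all of $X$. For the minimax GAN, take $\mu = \mathrm{Unif}([-1,1])$ (density $\tfrac12$; full support, so the formula is everywhere real-valued) and $\mu_0$ with density $q(x) = 1 - |x|$ on $X$; then $\Phi_\mu(x) = \tfrac12\log\frac{\mu(x)}{\mu(x)+\mu_0(x)} = -\tfrac12\log(3 - 2|x|)$, which is continuous on $X$ but has $\Phi_\mu'(0^+) = \tfrac13 \ne -\tfrac13 = \Phi_\mu'(0^-)$. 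For the non-saturating GAN, exchange the roles: take $\mu$ with density $1 - |x|$ and $\mu_0 = \mathrm{Unif}([-1,1])$ (keeping $\mu_0$ bounded away from $0$, so the division in the closed form is harmless), giving $\Phi_\mu(x) = -\tfrac12\log\frac{\mu_0(x)}{\mu(x)+\mu_0(x)} = \tfrac12\log(3 - 2|x|)$, again continuous with $\Phi_\mu'(0^\pm) = \mp\tfrac13$. In both cases $\nabla_x\Phi_\mu$ has a jump at the origin. One should also check $J(\mu) < \infty$ for these $\mu$ (immediate, as the relevant densities are bounded and the density in the denominator is bounded below on the region where the formula would otherwise blow up), so that the optimal discriminator genuinely exists and is the tabulated one.

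\emph{Main obstacle.} There is no deep difficulty here; the care required is (i) in the minimax and non-saturating cases, choosing the supports and positivity of the densities so that the tabulated closed forms stay real-valued and continuous on all of $X$ rather than diverging to $\pm\infty$ (which would yield only a degenerate ``failure'' with no continuous optimal discriminator at all), and (ii) in the Wasserstein case, if one wants the stronger conclusion that \emph{all} optimal discriminators are non-smooth, invoking the rigidity of distance-saturating $1$-Lipschitz functions so that every Kantorovich potential, not merely one, carries the kink.
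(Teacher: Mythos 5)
Your proposal is correct and follows the same strategy as the paper's proof: for each loss, exhibit a pair $(\mu,\mu_0)$ whose optimal discriminator has a derivative discontinuity (a kink) at the origin, hence is not $\beta_1$-smooth for any finite $\beta_1$. The only differences are cosmetic choices of witnesses --- you swap the roles of $\mu$ and $\mu_0$ in the Wasserstein example and use triangular/uniform densities on $[-1,1]$ in place of the paper's Laplace densities, which is if anything slightly cleaner since it respects the paper's standing assumption that $X$ is compact --- and your optional rigidity remark for the Kantorovich potential is a correct but inessential strengthening.
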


We now construct a loss that by definition satisfies (D2). Let $\mathcal{S}$ be the class of $1$-smooth functions, that is, for which $\norm{\nabla f(x) - \nabla f(y)}_2 \le \norm{x - y}_2$, and consider the \textbf{integral probability metric} (IPM) \citep{muller1997ipm} w.r.t.~$\mathcal{S}$, defined by \begin{equation}
    \operatorname{IPM}_{\mathcal{S}}(\mu, \nu) := \sup_{f \in \mathcal{S}} \int f\,d\mu - \int f\,d\nu.
\end{equation}The optimal discriminator for the loss $ \mathrm{IPM}_{\mathcal{S}}(\mu, \mu_0)$ is the function that maximizes the supremum in the definition. This function by definition belongs to $\mathcal{S}$ and therefore is $1$-smooth. Hence, this IPM loss satisfies (D2) with $\beta_1 = 1$ by construction.

\subsection{From integral probability metric to smooth discriminators}
Having identified the IPM-based loss as one that satisfies (D2), we next follow the strategy outlined in \cref{sec:answer_q2} to convert it into a regularizer for an arbitrary loss function. To do so, we define a regularizer $R_2 : \mathcal{M}(X) \to \bar\R$ and compute its convex conjugate $R_2^\star : \mathcal{C}(X) \to \bar\R$:
\begin{equation}
    R_2(\xi) := \beta_1 \norm{\xi}_{\mathcal{S}^*} = \beta_1 \sup_{f \in \mathcal{S}} \int f\,d\xi, \qquad
    R_2^\star(\varphi) = \begin{cases}
        0 & \varphi \in \beta_1 \mathcal{S} \\
        \infty & \text{otherwise}.
    \end{cases} \label{eq:r2}
\end{equation} The norm is the \textbf{dual norm} to $\mathcal{S}$, which extends the IPM to signed measures; it holds that $\operatorname{IPM}_\mathcal{S}(\mu, \nu) = \norm{\mu - \nu}_{\mathcal{S}^*}$ for $\mu, \nu \in \mathcal{P}(X)$. Similar to the situation in the previous section, inf-convolution preserves the smoothness property of $R_2$:

\begin{restatable}{proposition}{PropIPMConvolution}
    Let $J : \mathcal{M}(X) \to \bar\R$ be a convex, proper, lower semicontinuous function, and define $\tilde{J} := J \ic R_2$. Then the optimal discriminator for $\tilde{J}$ is $\beta_1$-smooth.
\end{restatable}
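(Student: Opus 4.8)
The plan is to push everything through convex-conjugate duality, exactly as in \cref{sec:answer_q2}. The one fact I need about inf-convolution is that conjugation turns it into a sum: $\tilde{J}^\star = (J \ic R_2)^\star = J^\star + R_2^\star$; this holds unconditionally (it is a one-line interchange of suprema, the same identity already used for the triple inf-convolution in \eqref{eq:supergan}). Combining it with the conjugate of $R_2$ recorded in \eqref{eq:r2}, namely $R_2^\star(\varphi) = \chi\set{\varphi \in \beta_1\mathcal{S}}$, yields
\begin{equation*}
    \tilde{J}^\star(\varphi) = J^\star(\varphi) + \chi\set{\varphi \in \beta_1\mathcal{S}},
\end{equation*}
so $\tilde J^\star(\varphi) < \infty$ only when $\varphi \in \beta_1\mathcal{S}$, i.e.\ only when $x \mapsto \nabla\varphi(x)$ is $\beta_1$-Lipschitz.

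Next I would check that $\tilde J$ is itself convex, proper, and lower semicontinuous, so that \cref{defi:optimal_discriminator} applies to it and in particular the biduality $\tilde J(\mu) = \sup_\varphi \int\varphi\,d\mu - \tilde J^\star(\varphi)$ holds by Fenchel--Moreau; convexity is immediate since $R_2$ is a seminorm and inf-convolution preserves convexity, while properness and lower semicontinuity follow from the standing assumptions on $J$ together with the coercivity of $R_2 = \beta_1\norm{\cdot}_{\mathcal{S}^*}$. With this in hand, let $\Phi_\mu$ be any optimal discriminator for $\tilde J$ at some $\mu \in \mcP(X)$: by \cref{defi:optimal_discriminator} it attains the maximum in $\tilde J(\mu) = \int\Phi_\mu\,d\mu - \tilde J^\star(\Phi_\mu)$. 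Since $X$ is compact and $\Phi_\mu$ is continuous, $\int\Phi_\mu\,d\mu$ is finite; since $\tilde J$ is proper, $\tilde J(\mu) > -\infty$ and $\tilde J^\star$ is nowhere $-\infty$; finiteness of these terms then forces $\tilde J^\star(\Phi_\mu) < \infty$, hence $\chi\set{\Phi_\mu \in \beta_1\mathcal{S}} = 0$, hence $\Phi_\mu \in \beta_1\mathcal{S}$. Unwinding the definition of $\beta_1\mathcal{S}$ gives exactly that $x\mapsto\nabla\Phi_\mu(x)$ is $\beta_1$-Lipschitz, i.e.\ $\Phi_\mu$ is $\beta_1$-smooth, so every optimal discriminator for $\tilde J$ satisfies (D2) with constant $\beta_1$.

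The main obstacle is the middle step: the conjugacy identity and the final extraction come essentially for free once $R_2^\star$ is identified as the indicator of the $\beta_1$-smooth functions, but one must genuinely verify that $\tilde J$ is convex, proper, and l.s.c.\ (equivalently, that the inf-convolution is non-degenerate and Fenchel--Moreau applies), since that is what licenses invoking \cref{defi:optimal_discriminator} for $\tilde J$ and the duality $\tilde J = \tilde J^{\star\star}$ at all. If one prefers to sidestep this, it suffices to observe that an optimal discriminator can only exist at $\mu$ with $\tilde J(\mu)$ finite, and to restrict attention to that regime.
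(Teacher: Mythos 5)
Your argument is correct and is essentially the same as the paper's: both pass to the conjugate via $(J \ic R_2)^\star = J^\star + R_2^\star$, identify $R_2^\star$ as the indicator of $\beta_1\mathcal{S}$, verify that $\tilde J$ is convex, proper, and l.s.c.\ so that the Fenchel--Moreau representation holds, and then observe that the optimal discriminator — being the maximizer in that representation — must lie in $\beta_1\mathcal{S}$ for the value not to drop to $-\infty$. The paper compresses the last step into an invocation of the ``envelope theorem,'' whereas you spell out the finiteness bookkeeping directly from \cref{defi:optimal_discriminator}; that is a presentational difference, not a substantive one.
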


Applying \eqref{eq:generic_gan} and \eqref{eq:r2}, we see that we can minimize this transformed loss function by restricting the family of discriminators to only $\beta_1$-smooth discriminators:
\begin{align}
    \inf_\mu \,(J \ic \beta_1 \norm{\cdot}_{\mathcal{S}^*})(\mu) 
    & = \inf_\mu \sup_\varphi \E_{x \sim \mu} [\varphi(x)]
    - J^\star(\varphi) - \chi\{ \varphi \in \beta_1 \mathcal{S} \} \\
    & = \inf_\mu \sup_{\varphi \in \beta_1\mathcal{S}} \mathcal{J}(\mu, \varphi).
\end{align}

In practice, we can enforce this by applying spectral normalization \citep{miyato2018spectral} and using a Lipschitz, smooth activation function such as ELU \citep{clevert2016elu} or sigmoid.

\begin{restatable}{proposition}{PropSmoothActivation} \label{prop:nn_smooth}
    Let $f : \R^d \to \R$ be a neural network consisting of $k$ layers whose linear transformations have spectral norm $1$ and whose activation functions are $1$-Lipschitz and $1$-smooth. Then $f$ is $k$-smooth.
\end{restatable}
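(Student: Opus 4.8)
The plan is to write $f$ as a composition of $k$ layer maps and reduce the statement to a clean induction: the composition of $m$ maps that are simultaneously $1$-Lipschitz and have $1$-Lipschitz Jacobians is itself $1$-Lipschitz with an $m$-Lipschitz Jacobian; specializing to scalar output and $m = k$ then yields $k$-smoothness. A key point that makes the constants \emph{add} rather than multiply is that at every stage the intermediate Jacobians have operator norm at most $1$.

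First I would fix notation: write $f = g_k \circ \cdots \circ g_1$, where $g_i(x) = \sigma_i(W_i x + b_i)$ with $\norm{W_i}_2 \le 1$, and $\sigma_i$ acts coordinatewise with $|\sigma_i'| \le 1$ (from $1$-Lipschitzness) and $\sigma_i'$ itself $1$-Lipschitz (from $1$-smoothness); if the final layer has no activation we take $\sigma_k = \mathrm{id}$, which is still $1$-Lipschitz and $1$-smooth, so the argument is insensitive to this. Then I would verify two properties of each $g_i$: (a) $g_i$ is $1$-Lipschitz, since $\norm{\sigma_i(u) - \sigma_i(v)}_2^2 = \sum_j |\sigma_i(u_j) - \sigma_i(v_j)|^2 \le \norm{u - v}_2^2$ and $\norm{W_i}_2 \le 1$; and (b) the Jacobian $D g_i(x) = \operatorname{diag}(\sigma_i'(W_i x + b_i))\,W_i$ satisfies $\norm{Dg_i(x)}_2 \le 1$ and $\norm{Dg_i(x) - Dg_i(y)}_2 \le \norm{x - y}_2$. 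For (b), the operator norm of a diagonal matrix is the largest absolute diagonal entry, so $\norm{\operatorname{diag}(\sigma_i'(W_i x + b_i) - \sigma_i'(W_i y + b_i))}_2 \le \max_j |(W_i(x-y))_j| \le \norm{W_i(x - y)}_2 \le \norm{x - y}_2$, using that $\sigma_i'$ is $1$-Lipschitz; multiplying on the right by $W_i$ and using $\norm{W_i}_2 \le 1$ gives the Jacobian bound, and $\norm{Dg_i(x)}_2 \le 1$ follows from $|\sigma_i'| \le 1$ and $\norm{W_i}_2 \le 1$.

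Next I would prove the induction lemma. For $m = 1$ it is exactly property (b). For the inductive step, write $h = h_m \circ H$ with $H := h_{m-1} \circ \cdots \circ h_1$ assumed $1$-Lipschitz with $(m-1)$-Lipschitz Jacobian (all maps being $C^1$, which holds here since $1$-smoothness forces the activations into $C^1$, so the chain rule applies). Then $h$ is $1$-Lipschitz, $Dh(x) = Dh_m(H(x))\,DH(x)$, and the product-rule splitting
\begin{equation*}
\norm{Dh(x) - Dh(y)}_2 \le \norm{Dh_m(H(x))}_2 \norm{DH(x) - DH(y)}_2 + \norm{Dh_m(H(x)) - Dh_m(H(y))}_2 \norm{DH(y)}_2
\end{equation*}
combined with $\norm{Dh_m}_2 \le 1$, $\norm{DH}_2 \le 1$, the inductive bound $\norm{DH(x) - DH(y)}_2 \le (m-1)\norm{x - y}_2$, and $\norm{Dh_m(H(x)) - Dh_m(H(y))}_2 \le \norm{H(x) - H(y)}_2 \le \norm{x - y}_2$ gives $\norm{Dh(x) - Dh(y)}_2 \le m\norm{x - y}_2$. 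Applying this with $h_i = g_i$ and $m = k$, and observing that for scalar-valued $f$ the Jacobian $Df(x)$ is the transpose of $\nabla f(x)$ so that $\norm{Df(x) - Df(y)}_2 = \norm{\nabla f(x) - \nabla f(y)}_2$, completes the proof.

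The main obstacle is purely the bookkeeping: keeping all intermediate Jacobians bounded by $1$ in operator norm (this is what yields $L = k$ rather than something exponential), and the $\ell_\infty$-versus-$\ell_2$ comparison for the diagonal activation term in step (b). There is no deep idea here — the $C^1$ regularity needed for the chain rule is handed to us by the $1$-smoothness assumption on the activations — so the only real care needed is in the norm inequalities.
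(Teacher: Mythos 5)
Your proof is correct and follows essentially the same route as the paper: establish that each layer map $\sigma_i \circ W_i$ is $1$-Lipschitz with $1$-Lipschitz Jacobian (using $\norm{\operatorname{diag}(\cdot)}_2 = \norm{\cdot}_\infty \le \norm{\cdot}_2$ for the activation part), then run the composition inequality $s(f\circ g) \le \ell(f)s(g) + s(f)\ell(g)^2$ inductively while tracking that intermediate Lipschitz constants stay at $1$, so the smoothness constants add to $k$. The only cosmetic difference is that the paper states the chain-rule inequality as a standalone lemma and then recurses, whereas you fold it directly into the inductive step; the estimates are identical.
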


\section{Enforcing (D3) with gradient penalties}\label{sec:condition_d3}
(D3) is the following smoothness condition:

\textbf{ (D3) }
\textit{$\mu \mapsto \nabla \Phi_\mu(x)$ is $\beta_2$-Lipschitz for any $x \in X$, i.e., $\norm{\nabla \Phi_\mu(x) - \nabla \Phi_\nu(x)}_2 \le \beta_2 W_1(\mu, \nu)$.}

(D3) requires that the gradients of the optimal discriminator do not change too rapidly in response to changes in $\mu$. Indeed, if the discriminator's gradients are too sensitive to changes in the generator, the generator may not be able to accurately follow those gradients as it updates itself using a finite step size. \kfcom{The previous sentence is not clear.} In finite-dimensional optimization of a function $f: \R^d \to \R$, this condition is analogous to $f$ having a Lipschitz gradient. 

We now present an equivalent characterization of (D3) that is easier to check in practice. We define the \textbf{Bregman divergence} of a convex function $J : \mathcal{M}(X) \to \bar\R$ by
\begin{equation}
    \mathfrak{D}_J(\nu, \mu)
    := J(\nu) - J(\mu) - \int \Phi_\mu(x)\, d(\nu - \mu),
\end{equation} where $\Phi_\mu$ is the optimal discriminator for $J$ at $\mu$. Then, (D3) is characterized in terms of the Bregman divergence and the KR norm as follows:
\begin{restatable}{proposition}{PropVariationalSmoothness}\label{prop:variational_smoothness_bregman}
    Let $J : \mcM(X) \to \R$ be a convex function. Then $J$ satisfies (D3) if and only if
    $
        \mathfrak{D}_J(\nu, \mu)
        \le \frac{\beta_2}{2} \norm{\mu - \nu}_\mathrm{KR}^2
    $ for all $\mu, \nu \in \mathcal{M}(X)$.
\end{restatable}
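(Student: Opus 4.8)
The plan is to reduce the claim to the classical convex-analysis equivalence ``$\beta$-Lipschitz gradient $\Leftrightarrow$ Bregman divergence dominated by $\tfrac{\beta}{2}\|\cdot\|^2$'', carried out in the dual pair $(\mcM(X),\mcC(X))$ with $\norm{\cdot}_{\mathrm{KR}}$ as the primal norm. The first step is to reformulate (D3) as a statement about the ``gradient map'' $\mu\mapsto\Phi_\mu$. By \cref{defi:optimal_discriminator} the identity $J(\mu)=\int\Phi_\mu\,d\mu-J^\star(\Phi_\mu)$ is precisely the Fenchel equality, i.e.\ $\Phi_\mu\in\partial J(\mu)$, so $\Phi_\mu$ is the subgradient of $J$ at $\mu$. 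Next, compute the dual norm: for $\varphi\in\mcC(X)$ one has $\sup\{\int\varphi\,d\xi:\norm{\xi}_{\mathrm{KR}}\le1\}=\norm{\varphi}_{\mathrm{Lip}}$; ``$\le$'' is immediate since every $1$-Lipschitz $\varphi$ is admissible in the definition of $\norm{\cdot}_{\mathrm{KR}}$, and ``$\ge$'' follows by testing against the normalized dipole $\xi=(\delta_x-\delta_y)/\norm{x-y}_2$, for which $\norm{\xi}_{\mathrm{KR}}=W_1(\delta_x,\delta_y)/\norm{x-y}_2=1$ and $\int\varphi\,d\xi=(\varphi(x)-\varphi(y))/\norm{x-y}_2$. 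Since a Lipschitz constant equals the supremum of gradient norms on the convex set $X$, we have $\sup_x\norm{\nabla\Phi_\mu(x)-\nabla\Phi_\nu(x)}_2=\norm{\Phi_\mu-\Phi_\nu}_{\mathrm{Lip}}$, so (D3) is equivalent to $\norm{\Phi_\mu-\Phi_\nu}_{\mathrm{Lip}}\le\beta_2\norm{\mu-\nu}_{\mathrm{KR}}$ for all $\mu,\nu$. Finally observe that both sides of the asserted Bregman inequality are $+\infty$ unless $\mu-\nu$ lies in $\mcM_0(X):=\{\xi:\xi(X)=0\}$, so it suffices to treat that case, on which $\norm{\cdot}_{\mathrm{KR}}$ is a genuine norm.

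For the direction (D3) $\Rightarrow$ the Bregman bound, set $\mu_t:=(1-t)\mu+t\nu$ and $h(t):=J(\mu_t)$. Because $J$ is convex and real-valued, $h$ is a finite convex function, hence absolutely continuous on $[0,1]$, and for a.e.\ $t$ it is differentiable with $h'(t)=\int\Phi_{\mu_t}\,d(\nu-\mu)$ (the subgradient inequality for $\Phi_{\mu_t}\in\partial J(\mu_t)$ shows this number lies in $\partial h(t)$, which is a singleton wherever $h$ is differentiable). The fundamental theorem of calculus gives $J(\nu)-J(\mu)=\int_0^1\int\Phi_{\mu_t}\,d(\nu-\mu)\,dt$, hence $\mathfrak{D}_J(\nu,\mu)=\int_0^1\int(\Phi_{\mu_t}-\Phi_\mu)\,d(\nu-\mu)\,dt$. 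Bounding the integrand via the pairing inequality $\int g\,d\xi\le\norm{g}_{\mathrm{Lip}}\norm{\xi}_{\mathrm{KR}}$ and then (D3) gives $\int(\Phi_{\mu_t}-\Phi_\mu)\,d(\nu-\mu)\le\beta_2\norm{\mu_t-\mu}_{\mathrm{KR}}\norm{\nu-\mu}_{\mathrm{KR}}=\beta_2 t\norm{\nu-\mu}_{\mathrm{KR}}^2$, and integrating yields $\mathfrak{D}_J(\nu,\mu)\le\tfrac{\beta_2}{2}\norm{\mu-\nu}_{\mathrm{KR}}^2$.

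For the converse, I would run the co-coercivity argument adapted to this norm. Fix $\mu$ and set $\psi(\xi):=J(\xi)-\int\Phi_\mu\,d\xi$; then $\psi$ is convex, $0\in\partial\psi(\mu)$ (so $\mu$ minimizes $\psi$), and $\mathfrak{D}_\psi=\mathfrak{D}_J$, so the hypothesis becomes the ``descent lemma'' $\psi(w)\le\psi(\xi)+\int(\Phi_\xi-\Phi_\mu)\,d(w-\xi)+\tfrac{\beta_2}{2}\norm{w-\xi}_{\mathrm{KR}}^2$. Taking $\xi=\nu$, using $\psi(\mu)=\min\psi$, and minimizing the right-hand side over $w=\nu+s(\delta_b-\delta_a)$ — where $a,b$ are chosen so that $(\Phi_\nu-\Phi_\mu)(a)-(\Phi_\nu-\Phi_\mu)(b)$ comes within $\varepsilon$ of $\norm{\Phi_\nu-\Phi_\mu}_{\mathrm{Lip}}\norm{a-b}_2$ and $s=\norm{\Phi_\nu-\Phi_\mu}_{\mathrm{Lip}}/(\beta_2\norm{a-b}_2)$ — gives, after letting $\varepsilon\to0$, $\psi(\mu)\le\psi(\nu)-\tfrac{1}{2\beta_2}\norm{\Phi_\nu-\Phi_\mu}_{\mathrm{Lip}}^2$, i.e.\ $\mathfrak{D}_J(\nu,\mu)\ge\tfrac{1}{2\beta_2}\norm{\Phi_\nu-\Phi_\mu}_{\mathrm{Lip}}^2$. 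Adding this to the same inequality with $\mu$ and $\nu$ swapped, and using $\mathfrak{D}_J(\nu,\mu)+\mathfrak{D}_J(\mu,\nu)=\int(\Phi_\nu-\Phi_\mu)\,d(\nu-\mu)$, produces the co-coercivity estimate $\int(\Phi_\nu-\Phi_\mu)\,d(\nu-\mu)\ge\tfrac{1}{\beta_2}\norm{\Phi_\mu-\Phi_\nu}_{\mathrm{Lip}}^2$; one last application of the pairing inequality to the left side cancels a factor of $\norm{\Phi_\mu-\Phi_\nu}_{\mathrm{Lip}}$ and yields $\norm{\Phi_\mu-\Phi_\nu}_{\mathrm{Lip}}\le\beta_2\norm{\mu-\nu}_{\mathrm{KR}}$, which is the reformulated (D3).

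The main obstacle is the converse direction: making the co-coercivity step rigorous requires that the auxiliary measures $\nu+s(\delta_b-\delta_a)$ remain in the domain of $J$ — automatic here, since $J$ is real-valued on all of $\mcM(X)$ — and some care with non-attainment of the supremum defining $\norm{\cdot}_{\mathrm{KR}}^{*}$, which is why one works with $\varepsilon$-optimal dipoles and passes to the limit. A secondary point worth stating explicitly is the reduction in the first paragraph: the equivalence of (D3) with the Lipschitz condition on $\mu\mapsto\Phi_\mu$ rests on identifying $\sup_x\norm{\nabla g(x)}_2$ with $\norm{g}_{\mathrm{Lip}}$ (so convexity of $X$ is used) and on restricting to $\mu-\nu\in\mcM_0(X)$, off which both sides of the inequality are infinite.
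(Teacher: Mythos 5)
Your proposal is correct and follows essentially the same two-step strategy as the paper's proof: the forward direction integrates the subgradient pairing along the mixture path $\mu_t=(1-t)\mu+t\nu$ and bounds the integrand by the Lipschitz/KR duality pairing combined with (D3); the converse passes through the same intermediate co-coercivity estimate $\mathfrak{D}_J(\nu,\mu)\ge\frac{1}{2\beta_2}\norm{\Phi_\nu-\Phi_\mu}_{\mathrm{Lip}}^2$, then symmetrizes and cancels a factor of $\norm{\Phi_\mu-\Phi_\nu}_{\mathrm{Lip}}$. The one substantive difference is how the inner minimization is carried out: the paper takes the infimum over all $\nu\in\mcM(X)$ and appeals to Lemma~\ref{lem:kr_norm_sq_conjugate} (the conjugate of $\frac{\lambda}{2}\norm{\cdot}_{\mathrm{KR}}^2$ is $\frac{1}{2\lambda}\norm{\cdot}_{\mathrm{Lip}}^2$), whereas you perform the optimization explicitly over scaled dipoles $w=\nu+s(\delta_b-\delta_a)$ with an $\varepsilon$-optimal pair $(a,b)$ and let $\varepsilon\to0$. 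Your route is more self-contained (it effectively re-proves the lower-bound half of that conjugate lemma in place), while the paper's is cleaner given that the lemma is already established; the underlying computation is identical. Your preliminary remarks on the dual-norm identification $\norm{\cdot}_{\mathrm{KR}}^*=\norm{\cdot}_{\mathrm{Lip}}$ and the restriction to $\mcM_0(X)$ are sound and make the reduction of (D3) to $\norm{\Phi_\mu-\Phi_\nu}_{\mathrm{Lip}}\le\beta_2\norm{\mu-\nu}_{\mathrm{KR}}$ explicit, which the paper leaves implicit.
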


It is straightforward to compute the Bregman divergence corresponding to several popular GANs:
   \begin{gather}
      \mathfrak{D}_{D_{\mathrm{JS}}(\cdot \parallel \mu_0)}(\nu, \mu) = D_{\mathrm{KL}}(\tfrac{1}{2}\nu + \tfrac{1}{2}\mu_0 \parallel \tfrac{1}{2}\mu + \tfrac{1}{2}\mu_0) + \tfrac{1}{2}D_{\mathrm{KL}}(\nu \parallel \mu), \\
      \mathfrak{D}_{D_{\mathrm{KL}}(\frac{1}{2}\cdot + \frac{1}{2}\mu_0 \parallel \mu_0)}(\nu, \mu) = D_{\mathrm{KL}}(\tfrac{1}{2}\nu + \tfrac{1}{2}\mu_0 \parallel \tfrac{1}{2}\mu + \tfrac{1}{2}\mu_0), \\
      \mathfrak{D}_{\frac{1}{2}\mathrm{MMD}^2(\cdot, \mu_0)}(\nu, \mu) = \tfrac{1}{2}\mathrm{MMD}^2(\nu, \mu).
   \end{gather} 
The first two Bregman divergences are not bounded above by $||\mu - \nu||_{\mathrm{KR}}^2$ for reasons similar to those discussed in \cref{sec:condition_d1}, and hence:
\begin{restatable}{proposition}{PropVariationalSmoothnessGAN}
The minimax and non-saturating GAN losses do not satisfy (D3) for some $\mu_0$.
\end{restatable}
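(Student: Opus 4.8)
The plan is to use \cref{prop:variational_smoothness_bregman}, which reduces (D3) to showing that the Bregman divergence $\mathfrak{D}_J(\nu,\mu)$ is \emph{not} bounded above by $\tfrac{\beta_2}{2}\norm{\mu-\nu}_{\mathrm{KR}}^2$ for any finite $\beta_2$, given the explicit Bregman divergence formulas already displayed for $D_{\mathrm{JS}}(\cdot\parallel\mu_0)$ and $D_{\mathrm{KL}}(\tfrac12\cdot+\tfrac12\mu_0\parallel\mu_0)$. So the core of the argument is to exhibit, for a suitable choice of $\mu_0$, a sequence of pairs $(\mu_n,\nu_n)$ along which the KR norm $\norm{\mu_n-\nu_n}_{\mathrm{KR}}=W_1(\mu_n,\nu_n)$ goes to zero (or stays bounded) while the relevant KL-type quantity blows up, so that no single constant $\beta_2$ can work. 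This is the same obstruction flagged in \cref{sec:condition_d1} and used for the (D1) failure (\cref{prop:gan_lipschitz}): KL- and JS-based quantities are sensitive to small shifts of near-singular measures in a way that $W_1$ is not.

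First I would fix $X=[0,1]$ (or some compact interval in $\R^d$) and take $\mu_0$ to be, say, uniform on $X$, so that everything is well-defined and the discriminators exist. For the non-saturating case, the Bregman divergence is $\mathfrak{D}(\nu,\mu)=D_{\mathrm{KL}}(\tfrac12\nu+\tfrac12\mu_0\parallel\tfrac12\mu+\tfrac12\mu_0)$; I would choose $\mu=\mu_n$ and $\nu=\nu_n$ to be measures whose densities concentrate so that $\tfrac12\nu_n+\tfrac12\mu_0$ and $\tfrac12\mu_n+\tfrac12\mu_0$, while close in $W_1$ (because mass is only transported a short distance), have KL divergence that diverges — e.g.\ by letting $\mu_n$ and $\nu_n$ be narrow bumps at nearby but distinct locations, with heights growing, so the mixtures disagree sharply on a shrinking set. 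Concretely one can push the standard construction: let $\mu_n$ be a point-mass-like bump at $0$ and $\nu_n$ the same bump translated by $\varepsilon_n\to 0$; then $W_1(\mu_n,\nu_n)\le\varepsilon_n\to 0$, but because adding $\mu_0$ does not fully regularize a sufficiently tall-and-narrow bump, $D_{\mathrm{KL}}(\tfrac12\nu_n+\tfrac12\mu_0\parallel\tfrac12\mu_n+\tfrac12\mu_0)$ can be made to go to infinity (or at least to exceed $n\,W_1(\mu_n,\nu_n)^2$), contradicting the bound for any fixed $\beta_2$. For the minimax GAN, $\mathfrak{D}_{D_{\mathrm{JS}}(\cdot\parallel\mu_0)}(\nu,\mu)$ contains the extra nonnegative term $\tfrac12 D_{\mathrm{KL}}(\nu\parallel\mu)$, so the same construction — which already makes the shared first term blow up — suffices; alternatively, one can make $\nu_n\perp\mu_n$ so that $D_{\mathrm{KL}}(\nu_n\parallel\mu_n)=\infty$ directly while $W_1(\mu_n,\nu_n)$ stays small, which handles the minimax case even more cleanly.

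I would then combine both cases into one statement: for this $\mu_0$ and this sequence, $\mathfrak{D}_J(\nu_n,\mu_n)/\norm{\mu_n-\nu_n}_{\mathrm{KR}}^2\to\infty$, so by \cref{prop:variational_smoothness_bregman} neither loss satisfies (D3). The main obstacle is the analytic bookkeeping of the explicit construction: one must choose the bump heights and widths (and the shift $\varepsilon_n$) carefully so that the KL/JS quantity genuinely diverges faster than $W_1^2$ after the regularizing mixture with $\mu_0$ is taken into account — the mixture with $\mu_0$ bounds the density ratio from below by a constant on the support of $\mu_0$, so one needs the bumps to be tall and narrow enough that, on the small region where $\mu_n$ and $\nu_n$ disagree, the log-ratio term dominates; getting the rates to line up (e.g.\ width $\sim\varepsilon_n$, height $\sim 1/\varepsilon_n$ or faster) is the delicate part, but it is essentially the same estimate used by \citet{arjovsky2017principled} and in the (D1) proof, so I would borrow that computation rather than reinvent it.
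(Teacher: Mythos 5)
Your reduction via \cref{prop:variational_smoothness_bregman} and the displayed Bregman-divergence identities is exactly the paper's starting point, so the high-level route is the same. However, the detailed bump/rate construction you sketch for the non-saturating case — bumps of height $\sim 1/\varepsilon_n$, widths $\sim\varepsilon_n$, with $W_1\to 0$, and a careful comparison of the KL growth against $W_1^2$ — is unnecessary, and this is where the paper diverges from you. The paper's entire proof is: choose $\nu$ that is \emph{not absolutely continuous} with respect to $\tfrac12\mu+\tfrac12\mu_0$; then $D_{\mathrm{KL}}(\tfrac12\nu+\tfrac12\mu_0\parallel\tfrac12\mu+\tfrac12\mu_0)=\infty$, so both Bregman divergences are $+\infty$, while $\norm{\mu-\nu}_{\mathrm{KR}}$ is finite (as it always is on a compact $X$), so the inequality fails for every finite $\beta_2$. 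No limits, no rates. You in fact identify this ``make the KL literally infinite'' trick yourself — but you deploy it only as an ``alternative'' for the minimax case via the $\tfrac12 D_{\mathrm{KL}}(\nu\parallel\mu)$ term. The observation you are missing is that the same singularity kills the shared first term $D_{\mathrm{KL}}(\tfrac12\nu+\tfrac12\mu_0\parallel\tfrac12\mu+\tfrac12\mu_0)$ as well (if $\nu\not\ll\tfrac12\mu+\tfrac12\mu_0$ then $\tfrac12\nu+\tfrac12\mu_0\not\ll\tfrac12\mu+\tfrac12\mu_0$), so it handles the non-saturating loss too, and no bump bookkeeping is ever needed. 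Your proof would go through if the rates are worked out, but you are doing substantially more analytic work than the statement requires, and your own remark about the ``delicate part'' is a sign that a cleaner example should be sought first.
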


Even so, the Bregman divergence for the non-saturating loss is always less than that of the minimax GAN, suggesting that the non-saturating loss should be stable in more situations than the minimax GAN. On the other hand, the MMD-based loss \citep{li2015generative} does satisfy (D3) when its kernel is the Gaussian kernel $K(x,y)=e^{-\pi||x-y||^2}$:

\begin{restatable}{proposition}{PropVariationaSmoothnessMMD}
The MMD loss with Gaussian kernel satisfies (D3) with $\beta_2 = 2\pi$ for all $\mu_0$.
\end{restatable}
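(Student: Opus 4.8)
The plan is to verify (D3) through its Bregman-divergence reformulation rather than by differentiating discriminators directly. By \cref{prop:variational_smoothness_bregman}, the loss $J(\mu)=\tfrac12\mathrm{MMD}^2(\mu,\mu_0)$ satisfies (D3) with constant $\beta_2$ iff $\mathfrak{D}_J(\nu,\mu)\le\frac{\beta_2}{2}\norm{\mu-\nu}_\mathrm{KR}^2$ for all $\mu,\nu$. Since we already know $\mathfrak{D}_{\frac12\mathrm{MMD}^2(\cdot,\mu_0)}(\nu,\mu)=\tfrac12\mathrm{MMD}^2(\nu,\mu)$, and $\norm{\mu-\nu}_\mathrm{KR}=W_1(\mu,\nu)$ on $\mathcal{P}(X)$, the whole claim reduces to the comparison inequality $\mathrm{MMD}^2(\nu,\mu)\le 2\pi\,\norm{\mu-\nu}_\mathrm{KR}^2$, i.e.\ $\mathrm{MMD}(\mu,\nu)\le\sqrt{2\pi}\,\norm{\mu-\nu}_\mathrm{KR}$, which then yields $\beta_2=2\pi$.

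The key step is a Lipschitz estimate on the canonical feature map $\phi:X\to\mcH$, $\phi(x)=K(x,\cdot)$. A direct computation gives $\norm{\phi(x)-\phi(y)}_\mcH^2 = K(x,x)-2K(x,y)+K(y,y)=2\bigl(1-e^{-\pi\norm{x-y}_2^2}\bigr)$, and the elementary bound $1-e^{-a}\le a$ for $a\ge0$ shows $\norm{\phi(x)-\phi(y)}_\mcH\le\sqrt{2\pi}\,\norm{x-y}_2$. By the reproducing property and Cauchy--Schwarz, every $g\in\mcH$ with $\norm{g}_\mcH\le1$ satisfies $|g(x)-g(y)|=|\langle g,\phi(x)-\phi(y)\rangle_\mcH|\le\sqrt{2\pi}\,\norm{x-y}_2$; hence the unit ball of $\mcH$ lies inside $\sqrt{2\pi}$ times the $1$-Lipschitz ball. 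Comparing the two variational representations $\mathrm{MMD}(\mu,\nu)=\sup_{\norm{g}_\mcH\le1}\int g\,d(\mu-\nu)$ and $\norm{\mu-\nu}_\mathrm{KR}=\sup_{\norm{f}_\mathrm{Lip}\le1}\int f\,d(\mu-\nu)$ then gives $\mathrm{MMD}(\mu,\nu)\le\sqrt{2\pi}\,\norm{\mu-\nu}_\mathrm{KR}$, completing the argument.

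As a sanity check (and an independent route), one can verify (D3) by brute force: differentiating $\Phi_\mu(x)=\E_{y\sim\mu}[K(x,y)]-\E_{y\sim\mu_0}[K(x,y)]$ under the integral---justified by dominated convergence since $\nabla_xK$ and its derivatives are bounded on the compact $X$---gives $\nabla_x\Phi_\mu(x)-\nabla_x\Phi_\nu(x)=\E_{y\sim\mu}[h(x-y)]-\E_{y\sim\nu}[h(x-y)]$ with $h(u)=-2\pi u\,e^{-\pi\norm{u}_2^2}$. Its Jacobian is $Dh(u)=-2\pi e^{-\pi\norm{u}_2^2}(I-2\pi uu^\top)$, with operator norm $2\pi e^{-\pi\norm{u}_2^2}\max\{1,\,2\pi\norm{u}_2^2-1\}$, which is maximized at $u=0$ with value $2\pi$; so $h$ is $2\pi$-Lipschitz. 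Testing against each unit vector $e$ (so that $y\mapsto\langle e,h(x-y)\rangle$ is $2\pi$-Lipschitz), Kantorovich--Rubinstein duality and a supremum over $e$ give $\norm{\nabla_x\Phi_\mu(x)-\nabla_x\Phi_\nu(x)}_2\le 2\pi\,W_1(\mu,\nu)$, again with $\beta_2=2\pi$.

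I do not anticipate a substantive obstacle; the only care needed is bookkeeping---justifying differentiation under the expectation and matching the KR norm with $W_1$ on $\mathcal{P}(X)$---and confirming that the constant is genuinely $2\pi$: the inequality $1-e^{-a}\le a$ is tight as $a\to0$, which mirrors the fact that $Dh$ attains its largest operator norm exactly at $u=0$, so $2\pi$ is not improvable by this method. I would present the first (Bregman) route, as it is the shorter one and reuses machinery already in the paper.
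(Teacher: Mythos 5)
Your proof is correct, and while the overall scaffolding of your first route matches the paper's---both invoke \cref{prop:variational_smoothness_bregman} to reduce (D3) to the inequality $\tfrac12\mathrm{MMD}^2(\mu,\nu)\le\pi\,\norm{\mu-\nu}_\mathrm{KR}^2$---the way you establish that comparison inequality is genuinely different, and in my view cleaner. The paper's \cref{lem:gaussian_mmd_smooth} bounds $\mathrm{MMD}^2$ by $\|\mu-\nu\|_\mathrm{KR}^2$ via a double-difference quotient on $K$, which it then controls by computing $\sup_{x,y}\norm{\nabla_x\nabla_y K(x,y)}_{\mathrm{op}}$ explicitly, diagonalizing the resulting rank-one-perturbed matrix and maximizing the eigenvalue envelope. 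Your argument instead observes that the canonical feature map $\phi(x)=K(x,\cdot)$ is $\sqrt{2\pi}$-Lipschitz into $\mcH$ (from $\norm{\phi(x)-\phi(y)}_\mcH^2=2(1-e^{-\pi\norm{x-y}^2})$ together with $1-e^{-a}\le a$), hence the unit RKHS ball sits inside the $\sqrt{2\pi}$-Lipschitz ball and $\mathrm{MMD}\le\sqrt{2\pi}\,\norm{\cdot}_\mathrm{KR}$ follows by comparing the two variational formulas. This sidesteps the eigenvalue analysis entirely and makes the origin of the constant transparent. Your second, ``brute-force'' route---differentiating the closed-form optimal discriminator and bounding the Jacobian of $h(u)=-2\pi u\,e^{-\pi\norm u^2}$---is essentially the same operator-norm computation the paper hides inside \cref{lem:gaussian_mmd_smooth}, but applied directly to $\nabla_x\Phi_\mu-\nabla_x\Phi_\nu$ so that it bypasses the Bregman characterization altogether; that makes for a nice self-contained verification of (D3) from its literal definition. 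Both routes are sound and both tighten at $u=0$, so $\beta_2=2\pi$ is the right constant.
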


\subsection{From maximum mean discrepancy to gradient penalties}
Having identified the MMD-based loss as one that satisfies (D3), we next follow the strategy outlined in \cref{sec:answer_q2} to convert it into a regularizer for an arbitrary loss function. To do so, we define the regularizer $R_3 : \mathcal{M}(X) \to \bar\R$ and compute its convex conjugate $R_3^\star : \mathcal{C}(X) \to \bar\R$:
\begin{equation}
    R_3(\xi) := \frac{\beta_2}{4\pi}\norm{\hat\xi}_{\mathcal{H}}^2 , \qquad
    R_3^\star(\varphi) = \frac{\pi}{\beta_2} \norm{\varphi}_{\mathcal{H}}^2. \label{eq:r3}
\end{equation} The norm is the norm of a \textbf{reproducing kernel Hilbert space norm} (RKHS) $\mathcal{H}$ with Gaussian kernel; this norm extends the MMD to signed measures, as it holds that $\operatorname{MMD}(\mu, \nu) = \norm{\hat\mu - \hat\nu}_{\mathcal{H}}$ for $\mu, \nu \in \mathcal{P}(X)$. Here, $\hat\xi = \int K(x,\cdot)\,\xi(dx) \in \mathcal{H}$ denotes the \emph{mean embedding} of a signed measure $\xi \in \mathcal{M}(X)$; we also adopt the convention that $||\varphi||_\mathcal{H} = \infty$ if $\varphi \not\in \mathcal{H}$. Similar to the situation in the previous sections, inf-convolution preserves the smoothness property of $R_3$:

\begin{restatable}[Moreau--Yosida regularization]{proposition}{PropMoreauYosida}
    Suppose $J : \mathcal{M}(X) \to \bar\R$ is convex, and define $\tilde{J} := J \ic R_3$. Then $\tilde{J}$ is convex, and $\mathfrak{D}_{\tilde{J}}(\nu, \mu) \le \frac{\beta_2}{2}||\mu - \nu||_{\mathrm{KR}}^2$.
\end{restatable}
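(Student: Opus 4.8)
The statement has two parts: convexity of $\tilde J = J \ic R_3$, and the Bregman inequality $\mathfrak D_{\tilde J}(\nu,\mu) \le \frac{\beta_2}{2}\norm{\mu-\nu}_\mathrm{KR}^2$. Convexity is immediate: $R_3(\xi) = \frac{\beta_2}{4\pi}\norm{\hat\xi}_\mcH^2$ is convex because $\xi \mapsto \hat\xi$ is linear and $\norm{\cdot}_\mcH^2$ is convex, and the inf-convolution of two convex functions on $\mcM(X)$ is convex. So the work is entirely in the second part, and the plan is the classical Moreau--Yosida argument: first verify the Bregman bound for the quadratic $R_3$ on all of $\mcM(X)$, then transport it through the inf-convolution.

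First I would record that $R_3$ is everywhere differentiable with $\nabla R_3(\xi) = \Phi^{R_3}_\xi = \frac{\beta_2}{2\pi}\hat\xi$, so that for any signed measures $\xi,\xi' \in \mcM(X)$ a one-line computation collapses the Bregman divergence of a quadratic to
\begin{equation}
    \mathfrak D_{R_3}(\xi',\xi) = \frac{\beta_2}{4\pi}\norm{\hat\xi - \hat\xi'}_\mcH^2 = \frac{\beta_2}{4\pi}\big\lVert \widehat{\xi-\xi'}\,\big\rVert_\mcH^2 .
\end{equation}
Next I would use the Lipschitz property of the Gaussian feature map: since $\norm{K(x,\cdot)-K(y,\cdot)}_\mcH^2 = 2 - 2e^{-\pi\norm{x-y}_2^2} \le 2\pi\norm{x-y}_2^2$, every $g \in \mcH$ with $\norm{g}_\mcH \le 1$ is $\sqrt{2\pi}$-Lipschitz, hence $\norm{\hat\eta}_\mcH = \sup_{\norm g_\mcH \le 1}\int g\,d\eta \le \sqrt{2\pi}\,\norm{\eta}_\mathrm{KR}$ for any $\eta \in \mcM(X)$. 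Applying this with $\eta = \xi - \xi'$ yields $\mathfrak D_{R_3}(\xi',\xi) \le \frac{\beta_2}{2}\norm{\xi-\xi'}_\mathrm{KR}^2$; this is the signed-measure counterpart of combining \cref{prop:variational_smoothness_bregman} with the Gaussian-kernel MMD computation.

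Then I would transfer the estimate to $\tilde J$. Fix $\mu,\nu$ and let $\mu^*$ attain $\tilde J(\mu) = J(\mu^*) + R_3(\mu-\mu^*)$. By the subgradient calculus for inf-convolutions (together with first-order optimality of the inner problem and the differentiability of $R_3$), the optimal discriminator $\Phi := \Phi^{\tilde J}_\mu$ can be taken equal to $\nabla R_3(\mu-\mu^*) = \Phi^{R_3}_{\mu-\mu^*}$. Using the test point $\tilde\nu = \mu^*$ in the definition of $\tilde J(\nu)$ gives $\tilde J(\nu) \le J(\mu^*) + R_3(\nu-\mu^*)$; subtracting $\tilde J(\mu) = J(\mu^*) + R_3(\mu-\mu^*)$ and $\int \Phi\,d(\nu-\mu)$, the $J(\mu^*)$ terms cancel and, since $\Phi = \Phi^{R_3}_{\mu-\mu^*}$, the remainder is precisely $\mathfrak D_{R_3}(\nu-\mu^*,\mu-\mu^*)$. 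Hence
\begin{equation}
    \mathfrak D_{\tilde J}(\nu,\mu) \le \mathfrak D_{R_3}(\nu-\mu^*,\mu-\mu^*) \le \frac{\beta_2}{2}\norm{(\mu-\mu^*)-(\nu-\mu^*)}_\mathrm{KR}^2 = \frac{\beta_2}{2}\norm{\mu-\nu}_\mathrm{KR}^2,
\end{equation}
which is the claim.

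The main obstacle is the infinite-dimensional convex analysis in the last step: one must ensure the infimum defining $\tilde J(\mu)$ is attained (existence of $\mu^*$) and that the optimal discriminator of the inf-convolution really equals $\nabla R_3(\mu-\mu^*)$, all within the dual pairing $(\mcM(X),\mcC(X))$ rather than a reflexive Banach space. I expect this to go through under the standing hypotheses ($J$ convex, and mild properness/lower-semicontinuity, with $R_3$ finite, continuous, and coercive in the mean-embedding seminorm), if necessary by arguing with near-minimizers and passing to a limit. Everything else — the quadratic identity for $\mathfrak D_{R_3}$, the kernel Lipschitz estimate, and the telescoping cancellation — is routine.
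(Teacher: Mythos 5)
Your argument is correct and matches the paper's structure: identify $\Phi^{\tilde J}_\mu$ with $\nabla R_3(\mu - \mu^*)$ via the envelope theorem, bound $\tilde J(\nu)$ with the test point $\tilde\nu = \mu^*$ so the $J(\mu^*)$ terms cancel, recognize what remains as a Bregman divergence of the quadratic $R_3$, and convert the resulting RKHS-norm estimate into the KR norm. The one place you genuinely diverge is in establishing that last conversion: you derive $\norm{\hat\eta}_\mcH \le \sqrt{2\pi}\norm{\eta}_\mathrm{KR}$ directly from the feature-map Lipschitz bound $\norm{K(x,\cdot) - K(y,\cdot)}_\mcH^2 = 2(1 - e^{-\pi\norm{x-y}^2}) \le 2\pi\norm{x-y}^2$, which shows every unit-ball element of $\mcH$ is $\sqrt{2\pi}$-Lipschitz, whereas the paper routes through \cref{lem:gaussian_mmd_smooth}, whose proof bounds $\sup_{x,y}\norm{\nabla_x\nabla_y K(x,y)}_2$ via an explicit eigendecomposition of the kernel Hessian. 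Your version of this sub-lemma is shorter, avoids the eigenvalue computation, and applies verbatim to arbitrary signed measures $\eta \in \mcM(X)$ without needing to remark that the paper's MMD statement extends from probability measures. Both proofs leave the same technical gap: attainment of the minimizer $\mu^*$ and the validity of the envelope identity in the non-reflexive pairing $(\mcM(X), \mcC(X))$ are asserted on the strength of strong convexity in the mean-embedding seminorm without a compactness argument, a point you correctly flag as the one requiring the most care.
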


By \cref{prop:variational_smoothness_bregman}, this transformed loss function satisfies (D3), having inherited the regularity properties of the squared MMD. This transformed function is a generalization of \textbf{Moreau--Yosida regularization} or the Moreau envelope (see Chapter 1 in \citet{rockafeller_wets}). It is well-known that in the case of a function $f : \R^n \to \R$, this regularization results in a function with Lipschitz gradients, so it is unsurprising that this property carries over to the infinite-dimensional case.

Applying \eqref{eq:generic_gan} and \eqref{eq:r3}, we see that the transformed loss function can be minimized as a GAN by implementing an RKHS squared norm penalty on the discriminator:\
\begin{align}
    \inf_\mu \,(J \ic \tfrac{\beta_2 }{4\pi}|| \cdot ||^2_{\mathcal{H}})(\mu) 
        &= \inf_\mu \sup_{\varphi} \E_{x \sim \mu} [\varphi(x)] - J^\star(\varphi) - \tfrac{\pi}{\beta_2} || \varphi ||_{\mathcal{H}}^2.
\end{align}

Computationally, the RKHS norm is difficult to evaluate. We propose taking advantage of the following infinite series representation of $||f||^2_{\mathcal{H}}$ in terms of the derivatives of $f$ \citep{fasshauer2011reproducing,novak2018reproducing}:

\begin{restatable}{proposition}{PropGaussianNormExpansion}
    Let $\mathcal{H}$ be an RKHS with the Gaussian kernel $K(x,y) = e^{-\pi||x-y||^2}$. Then for $f \in \mathcal{H}$, 
      \begin{align}
         ||f||^2_{\mathcal{H}} 
         &= \sum_{k=0}^\infty \,(4\pi)^{-k}\! \sum_{k_1 + \cdots + k_d = k} \frac{1}{\prod_{i=1}^d k_i!} \,||\partial^{k_1}_{x_1} \cdots \partial^{k_d}_{x_d} f ||^2_{L^2(\R^d)} \\
         &= ||f||^2_{L^2(\R^d)} + \tfrac{1}{4\pi}||\nabla f||^2_{L^2(\R^d)} + \tfrac{1}{16\pi^2} || \nabla^2 f||^2_{L^2(\R^d)}+ \textrm{other terms}. \label{eq:rkhs_norm_series}
      \end{align} 
\end{restatable}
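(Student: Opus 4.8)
The plan is to move to the Fourier side, where the Gaussian RKHS norm becomes a weighted $L^2$ norm with an explicit, factorizable weight, and then expand that weight as a power series. Throughout I would use the unitary Fourier convention $\widehat g(\xi) = \int_{\R^d} g(x)\, e^{-2\pi i x\cdot\xi}\, dx$, chosen precisely because under it the Gaussian $\phi(x) := e^{-\pi\norm{x}_2^2}$ is self-dual, $\widehat\phi = \phi$. The first step is the standard Fourier characterization of the RKHS of a translation-invariant kernel: since $K(x,y) = \phi(x-y)$ with spectral density $\widehat\phi > 0$ everywhere, Bochner's theorem gives $\mcH = \{ f \in L^2(\R^d) : \int_{\R^d} |\widehat f(\xi)|^2 / \widehat\phi(\xi)\, d\xi < \infty \}$ with $\norm{f}_{\mcH}^2 = \int_{\R^d} |\widehat f(\xi)|^2 / \widehat\phi(\xi)\, d\xi$; for our kernel this reads $\norm{f}_{\mcH}^2 = \int_{\R^d} |\widehat f(\xi)|^2\, e^{\pi\norm{\xi}_2^2}\, d\xi$. (This identity, and indeed the whole proposition, is essentially contained in the cited references \citep{fasshauer2011reproducing,novak2018reproducing}; I would include the derivation for completeness.)

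Next I would expand the weight coordinatewise, $e^{\pi\norm{\xi}_2^2} = \prod_{i=1}^d e^{\pi \xi_i^2} = \sum_{k_1,\dots,k_d \ge 0} \frac{\pi^{k_1 + \cdots + k_d}}{\prod_{i=1}^d k_i!}\, \xi_1^{2k_1}\cdots\xi_d^{2k_d}$, substitute into the norm, and interchange the (nonnegative) sum with the integral by Tonelli, obtaining $\norm{f}_{\mcH}^2 = \sum_{k_1,\dots,k_d \ge 0} \frac{\pi^{k}}{\prod_i k_i!} \int_{\R^d} |\widehat f(\xi)|^2\, \xi_1^{2k_1}\cdots\xi_d^{2k_d}\, d\xi$, where $k := k_1 + \cdots + k_d$. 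For each multi-index I would use the differentiation rule $\widehat{\partial^{k_1}_{x_1}\cdots\partial^{k_d}_{x_d} f}(\xi) = (2\pi i)^{k}\, \xi_1^{k_1}\cdots\xi_d^{k_d}\, \widehat f(\xi)$ together with Plancherel's theorem to rewrite $\int_{\R^d} |\widehat f(\xi)|^2 \xi_1^{2k_1}\cdots\xi_d^{2k_d}\, d\xi = (2\pi)^{-2k}\, \norm{\partial^{k_1}_{x_1}\cdots\partial^{k_d}_{x_d} f}_{L^2(\R^d)}^2$. Collecting the scalar factors via $\pi^{k}(2\pi)^{-2k} = (4\pi)^{-k}$ and grouping multi-indices by their total order $k$ then produces exactly the stated series, and reading off the $k=0,1,2$ terms gives the displayed expansion \eqref{eq:rkhs_norm_series}.

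The only genuine subtlety — and hence the main obstacle — is analytic bookkeeping rather than any computation: one must check that $f \in \mcH$ indeed makes $\int |\widehat f|^2 e^{\pi\norm{\xi}_2^2}\,d\xi$ finite (so that every weak derivative $\partial^{k_1}_{x_1}\cdots\partial^{k_d}_{x_d} f$ lies in $L^2$ and the Fourier-multiplier identity for derivatives is applicable as an $L^2$ statement), and that the term-by-term interchange of summation and integration is legitimate. Both are immediate from the nonnegativity of every integrand in sight (Tonelli / monotone convergence), so no delicate estimates are needed; the remaining pieces — the Bochner characterization of the norm, the self-duality of the $e^{-\pi\norm{\cdot}_2^2}$ Gaussian under the $2\pi$-normalized transform, and the constant-chase $\pi^{k}(2\pi)^{-2k}=(4\pi)^{-k}$ — are routine.
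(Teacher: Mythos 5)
Your proof is correct and follows essentially the same route as the paper's: pass to the Fourier side, express the Gaussian RKHS norm as a weighted $L^2$ norm of $\widehat f$, expand the exponential weight in a power series, and convert monomial weights into $L^2$ norms of derivatives via Plancherel. The only cosmetic differences are your choice of the $e^{-2\pi i x\cdot\xi}$ Fourier convention (which makes $e^{-\pi\|\cdot\|^2}$ self-dual and the constants clean) and that you cite the Bochner characterization of the RKHS norm where the paper instead verifies the reproducing property of its defined inner product directly.
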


In an ideal world, we would use this expression as a penalty on the discriminator to enforce (D3). Of course, as an infinite series, this formulation is computationally impractical. However, the first two terms are very close to common GAN techniques like gradient penalties \citep{gulrajani2017wgangp} and penalizing the output of the discriminator \citep{karras2018progressive}. We therefore interpret these common practices as partially applying the penalty given by the RKHS norm squared, approximately enforcing (D3). We view the choice of only using the leading terms as a disadvantageous but practical necessity.

Interestingly, according to our analysis, gradient penalties and spectral normalization are not interchangeable, even though both techniques were designed to constrain the Lipschitz constant of the discriminator. Instead, our analysis suggests that they serve different purposes: gradient penalties enforce the variational smoothness (D3), while spectral normalization enforces Lipschitz continuity (D1). This demystifies the puzzling observation of \cite{miyatogithub} that GANs using only spectral normalization with a WGAN loss do not seem to train well; it also explains why using both spectral normalization and a gradient penalty is a reasonable strategy. It also motivates the use of gradient penalties applied to losses other than the Wasserstein loss \citep{fedus2018many}.

\section{Verifying the theoretical learning rate} \label{sec:experiments}
In this section, we empirically test the theoretical learning rate given by \cref{thm:stability_of_gans,prop:gradient_descent} as well as our regularization scheme \eqref{eq:supergan} based on inf-convolutions. We approximately implement our composite regularization scheme \eqref{eq:supergan} on a trivial base loss of $J(\mu) = \chi\{ \mu = \mu_0 \}$ by alternating stochastic gradient steps on \begin{equation}
     \inf_\mu \sup_\varphi \E_{x \sim \mu}[ \varphi(x)] - \E_{x \sim \mu_0}[\varphi(x)] - \frac{ \pi}{\beta_2} \E_{x \sim \tilde\mu}\Big[ \varphi(x)^2 + \frac{1}{4\pi}||\nabla \varphi(x) ||^2 \Big],
     \label{eq:experimental_gan}
\end{equation} where $\tilde\mu$ is a random interpolate between samples from $\mu$ and $\mu_0$, as used in \cite{gulrajani2017wgangp}. The regularization term is a truncation of the series for the squared RKHS norm \eqref{eq:rkhs_norm_series} and approximately enforces (D3).  The discriminator is a 7-layer convolutional neural network with spectral normalization\footnote{Whereas \cite{miyato2018spectral} divide each layer by the spectral norm of the convolutional kernel, we divide by the spectral norm of the convolutional operator itself, computed using the same power iteration algorithm applied to the operator. This is so that the layers are truly $1$-Lipschitz, which is critical for our theory.} and ELU activations, an architecture that enforces (D1) and (D2). We include a final scalar multiplication by $\alpha$ so that by \cref{prop:nn_smooth}, $\beta_1 = 7\alpha$. We take two discriminator steps for every generator step, to better approximate our assumption of an optimal discriminator.

For the generator, we use an extremely simple particle-based generator which satisfies (G1) and (G2), in order to minimize the number of confounding factors in our experiment. Let $\omega$ be the discrete uniform distribution on $Z = \{ 1, \ldots, N \}$. For an $N \times d$ matrix $\theta$ and $z \in Z$, define $f_\theta: Z \to \R^d$ so that $f_\theta(z)$ is the $z$th row of $\theta$. The particle generator $\mu_\theta = f_{\theta\#} \omega$ satisfies (G1) with $A = \frac{1}{\sqrt{N}}$, since
\begin{equation}
    \E_z[\| f_\theta(z) - f_{\theta'}(z) \|_2]
    = \frac{1}{N} \sum_{z = 1}^n \| \theta_z - \theta'_z \|_2
    \leq \frac{1}{\sqrt{N}} \| \theta - \theta' \|_{\mathrm{F}},
\end{equation}
and it satisfies (G2) with $B = 0$, since $D_\theta f_\theta(z)$ is constant w.r.t.~$\theta$.
With this setup, \cref{thm:stability_of_gans} suggests a theoretical learning rate of \begin{equation}
    \gamma_0 = \frac{1}{L} = \frac{1}{\alpha B + A^2(\beta_1 + \beta_2)} = \frac{N}{7\alpha + \beta_2}.
\end{equation}

We randomly generated hyperparameter settings for the Lipschitz constant $\alpha$, the smoothness constant $\beta_2$, the number of particles $N$, and the learning rate $\gamma$. We trained each model for 100,000 steps on CIFAR-10 and evaluate each model using the Fr\'{e}chet Inception Distance (FID) of \cite{hausel2017fid}. We hypothesize that stability is correlated with image quality; \cref{fig:fid} plots the FID for each hyperparameter setting in terms of the ratio of the true learning rate $\gamma$ and the theoretically motivated learning rate $\gamma_0$. We find that the best FID scores are obtained in the region where $\gamma/\gamma_0$ is between 1 and 1000. For small learning rates $\gamma/\gamma_0 \ll 1$, we observe that the convergence is too slow to make a reasonable progress on the objective, whereas as the learning rate gets larger $\gamma/\gamma_0 \gg 1$, we observe a steady increase in FID, signalling unstable behavior. It also makes sense that learning rates slightly above the optimal rate produce good results, since our theoretical learning rate is a conservative lower bound. Note that our intention is to test our theory, not to generate good images, which is difficult due to our weak choice of generator. Overall, this experiment shows that our theory and regularization scheme are sensible.

\begin{figure}[tb]
    \centering
    \vspace*{-10mm}
    \includegraphics[width=2.5in]{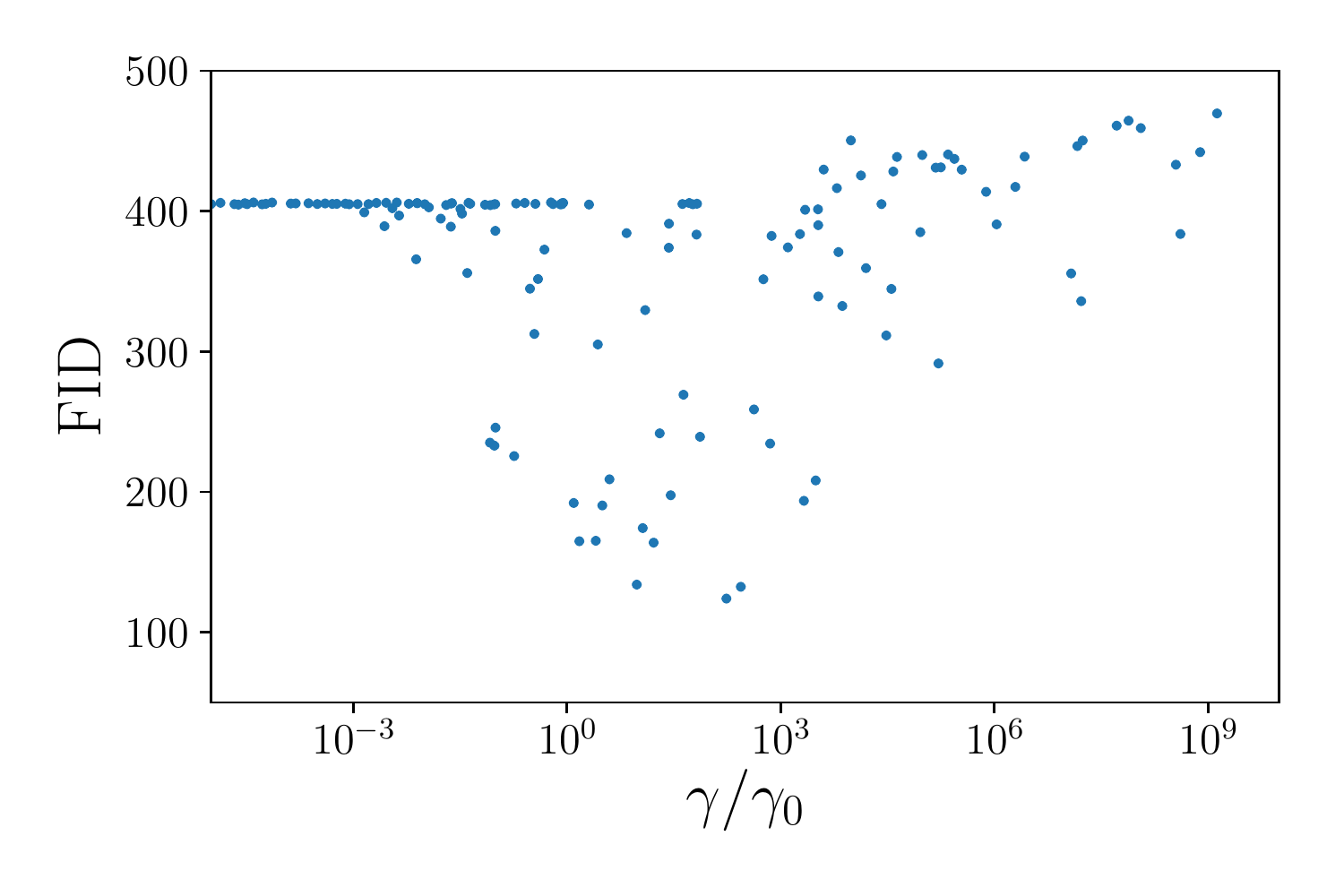}
    \vspace*{-6mm}
    \caption{FID of simple particle generators with various learning rates. The horizontal axis shows the ratio $\gamma/\gamma_0$ between the actual learning rate $\gamma$ and the theoretical learning rate $\gamma_0$ suggested by \cref{thm:stability_of_gans} and \cref{prop:gradient_descent}. The vertical axis shows the FID after 100,000 SGD iterations.}
    \vspace*{-1mm}
    \label{fig:fid}
\end{figure}

\section{Future work}

\paragraph{Inexact gradient descent}
In this paper, we employed several assumptions in order to regard the GAN algorithm as gradient descent. However, real-world GAN algorithms must be treated as ``inexact'' descent algorithms. As such, future work includes: (i) relaxing the optimal discriminator assumption (cf.~\cite{sanjabi2018regularizedot}) or providing a stability result for discrete simultaneous gradient descent (cf.~continuous time analysis in \citet{nagarajan2017locallystable, mescheder2018whichgans}), (ii) addressing stochastic approximations of gradients (i.e., SGD), and (iii) providing error bounds for the truncated gradient penalty used in \eqref{eq:experimental_gan}.

\paragraph{Generator architectures}Another important direction of research is to seek more powerful generator architectures that satisfy our smoothness assumptions (G1) and (G2). In practice, generators are often implemented as deep neural networks, and involve some specific architectures such as deconvolution layers \citep{radford2015dcgan} and residual blocks (e.g., \citet{gulrajani2017wgangp, miyato2018spectral}). In this paper, we did not provide results on the smoothness of general classes of generators, since our focus is to analyze stability properties influenced by the choice of loss function $J$ (and therefore optimal discriminators). However, our conditions (G1) and (G2) shed light on how to obtain smoothly parameterized neural networks, which is left for future work.


\subsubsection*{Acknowledgments}

We would like to thank Kohei Hayashi, Katsuhiko Ishiguro, Masanori Koyama, Shin-ichi Maeda, Takeru Miyato, Masaki Watanabe, and Shoichiro Yamaguchi for helpful discussions.

\bibliography{lipschitz}
\bibliographystyle{iclr2020_conference}

\appendix
\section{Inf-convolution in $\R^d$}\label{sec:warmup_finite_dim}

To gain intuition on the inf-convolution, we present a finite-dimensional analogue of the techniques in \cref{sec:answer_q2}. For simplicity of presentation, we will omit any regularity conditions (e.g., lower semicontinuity). We refer readers to Chapter 12 of \citet{bauschke_combettes} for a detailed introduction.

Let $J$ and $R$ be convex functions on $\R^d$.
The inf-convolution of $J$ and $R$ is a function $J \star R$ defined as
\[
    (J \ic R) (x)
    := \inf_{z \in \R^d} J(z) + R(x - z).
\]
The inf-convolution is often called the epigraphic sum since the epigraph of $J \star R$ coincides with the Minkowski sum of epigraphs of $J$ and $R$, as \cref{fig:inf_conv} illustrates. The inf-convolution is associative and commutative operation; that is, it is always true that $(J_1 \ic J_2) \ic J_3 = J_1 \ic (J_2 \ic J_3) =: J_1 \ic J_2 \ic J_3$ and $J_1 \ic J_2 = J_2 \ic J_1$.

\begin{figure}[tb]
    \centering
    \includegraphics[width=\linewidth]{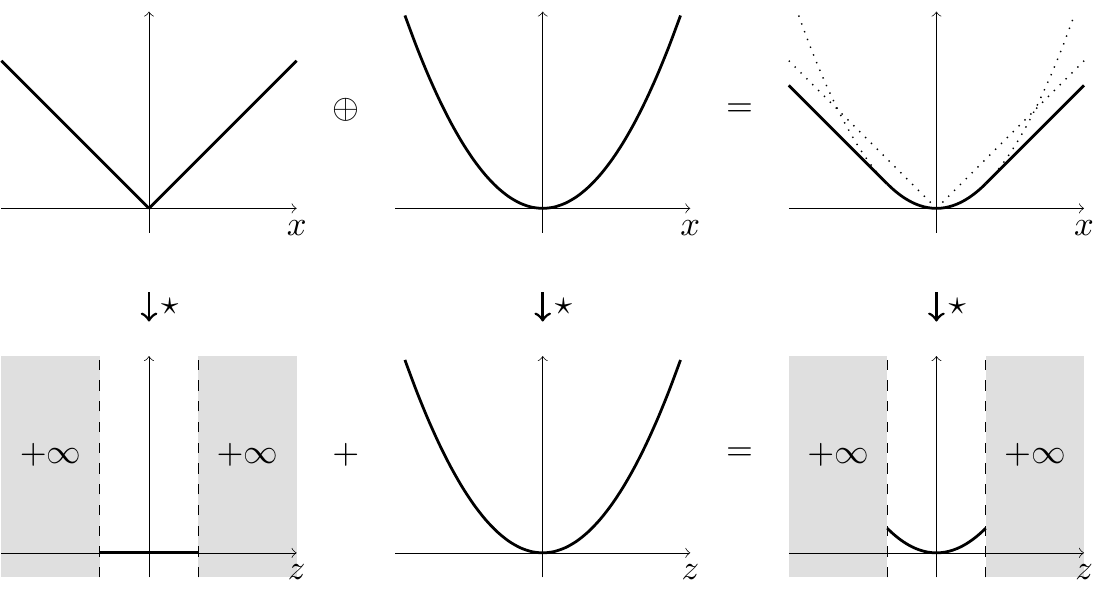}
    \caption{Illustration of inf-convolutions and their convex conjugates in $\R$. \textbf{Top row}: Generally, inf-convolutions inherit the regularity of their parent functions. In this example, $J_1(x) = |x|$ is $1$-Lipschitz but not smooth, while $J_2(x) = x^2/2$ is $1$-smooth but not Lipschitz. The inf-convolution $J_1 \oplus J_2$ is the well-known Huber function, which is both $1$-Lipschitz and $1$-smooth. \textbf{Bottom row}: Convex conjugates of the functions in the top row. The conjugate of $J_1 \oplus J_2$ is given as the sum of conjugates $J^\star_1(z) = \chi\set{|z| \leq 1}$ and $J^\star_2(z) = z^2/2$.}
    \label{fig:inf_conv}
\end{figure}

There are two important special cases of inf-convolutions: The first one is the Pasch--Hausdorff envelope $J_\alpha$, which is the inf-convolution between $J$ and $\alpha \norm{\cdot}_2$ ($\alpha > 0$). It is known that $J_\alpha$ becomes $\alpha$-Lipschitz.
The second important example is the Moreau envelope $J^\beta = J \ic \frac{1}{2\beta} \norm{\cdot}_2^2$, i.e., the inf-convolution with the quadratic regularizer $\frac{1}{2\beta} \norm{\cdot}_2^2$. The Moreau envelope $J^\beta$ is always differentiable, and the gradient of $J^\beta$ is $\beta$-Lipschitz (thus $J^\beta$ is $\beta$-smooth).

It is worth noting that the set of minimizers does not change after these two operations. More generally, we have the following result:

\begin{proposition}\label{prop:minimizer_finite_dim}
Let $J, R: \R^d \to \bar\R$ be proper and lower semicontinuous functions with $\min J > -\infty$ and $\min R = 0$. Suppose $R(0) = 0$ and $R(x) \geq \psi( \norm{x}_2)$ for some increasing function $\psi: \R_{\ge 0} \to \R$. Then, $\min J \oplus R$ = $\min J$ and $\argmin J \oplus R = \argmin J$.
\end{proposition}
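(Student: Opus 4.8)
The plan is to prove the two claims $\min J \oplus R = \min J$ and $\argmin J \oplus R = \argmin J$ by a pair of inequalities in each case, leaning on the hypotheses $R(0) = 0$, $\min R = 0$, and the coercivity-type bound $R(x) \ge \psi(\norm{x}_2)$.

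First I would establish $\min J \oplus R \le \min J$: for any $x^\star \in \argmin J$, plugging $z = x^\star$ into the defining infimum gives $(J \oplus R)(x^\star) \le J(x^\star) + R(0) = J(x^\star) = \min J$. For the reverse inequality, note that for every $x$ and every $z$ we have $J(z) + R(x - z) \ge \min J + \min R = \min J + 0 = \min J$, so taking the infimum over $z$ yields $(J \oplus R)(x) \ge \min J$ for all $x$; hence $\min J \oplus R \ge \min J$. Combining, $\min J \oplus R = \min J$, and moreover the first part of the argument shows $\argmin J \subseteq \argmin J \oplus R$.

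Next I would prove the reverse inclusion $\argmin J \oplus R \subseteq \argmin J$. Suppose $x \in \argmin J \oplus R$, so $(J \oplus R)(x) = \min J$. Because $J$ and $R$ are proper and lower semicontinuous and $R$ is coercive (from $R(x) \ge \psi(\norm{x}_2)$ with $\psi$ increasing and $R(0)=0$, so the infimum defining $(J\oplus R)(x)$ can be restricted to a bounded set of $z$), the infimum in $(J \oplus R)(x) = \inf_z J(z) + R(x-z)$ is attained at some $z^\star$; I would spell out the standard argument that a lower semicontinuous function bounded below and with bounded sublevel sets attains its minimum on $\R^d$. Then $J(z^\star) + R(x - z^\star) = \min J$. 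Since $J(z^\star) \ge \min J$ and $R(x - z^\star) \ge 0$, both inequalities must be equalities: $J(z^\star) = \min J$ and $R(x - z^\star) = 0$. From $R(x - z^\star) = 0$ and $R(y) \ge \psi(\norm{y}_2)$ with $\psi$ increasing, I need $\psi(\norm{x - z^\star}_2) \le 0$; combined with $\psi$ increasing and the fact that $R(0) = 0 = \min R$ forces $\psi(0) \le 0$, I would argue that $\psi(t) > \psi(0) \ge$ (something forcing strict positivity) for $t > 0$ — more carefully, since $\psi$ is strictly increasing (or at least the hypothesis should be read to give $\psi(t) > 0$ for $t>0$, matching the finite-dimensional statements about Pasch--Hausdorff and Moreau envelopes), $\norm{x - z^\star}_2 = 0$, i.e., $x = z^\star$. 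Hence $J(x) = J(z^\star) = \min J$, so $x \in \argmin J$.

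The main obstacle is the attainment of the infimum and the precise use of the coercivity hypothesis: one must argue that the map $z \mapsto J(z) + R(x - z)$ is lower semicontinuous, bounded below (by $\min J$), and has bounded sublevel sets (since $R(x-z) \to \infty$ as $\norm{z} \to \infty$ while $J(z) \ge \min J > -\infty$), so a minimizer exists. A secondary subtlety is the interpretation of ``increasing $\psi$'': to conclude $x = z^\star$ from $R(x - z^\star) = 0$ one genuinely needs $\psi(t) > 0$ for all $t > 0$, which follows if $\psi$ is strictly increasing with $\psi(0) = 0$ — and $\psi(0) = 0$ is itself consistent with (indeed forced by, given $R(0) = 0$ and $R \ge \psi(\norm{\cdot}_2)$, one has $\psi(0) \le R(0) = 0$, while $\min R = 0$ prevents $R$, hence $\psi$, from dipping below $0$ near $0$). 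Once these points are nailed down, the rest is the short two-inequality bookkeeping above.
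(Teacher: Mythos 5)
The paper states Proposition~\ref{prop:minimizer_finite_dim} without proof, but the structure of your argument can be compared against the paper's proof of the infinite-dimensional analogue, Lemma~\ref{lem:inf_conv_chaining}. Your first two paragraphs (the two-sided bound $\min J \oplus R = \min J$ and the inclusion $\argmin J \subseteq \argmin J \oplus R$) are correct and match the standard approach. Your diagnosis of the subtlety around ``increasing'' $\psi$ is also on target: to close the proof one genuinely needs $\psi(t) > 0$ for $t > 0$, and this does not follow from the stated hypotheses (from $R(0) \ge \psi(0)$ one only gets $\psi(0) \le 0$; an increasing $\psi$ may remain negative on an interval). In fact, with $\psi(t) = t - 1$ and $R(x) = \max(0, \norm{x}_2 - 1)$, $J(x) = \norm{x}_2$, one has $(J\oplus R)(x) = 0$ for all $\norm{x}_2 \le 1$, so $\argmin J \oplus R$ strictly contains $\argmin J = \{0\}$ — the proposition as literally stated is false, and the intended reading (matching the $G(\cdot) \ge c\norm{\cdot}$ hypothesis of Lemma~\ref{lem:inf_conv_chaining}) must be $\psi(t) > 0$ for $t > 0$.

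However, there is a second and independent gap in your reverse inclusion that you should not paper over: your claim that $R$ is coercive. From $R(x) \ge \psi(\norm{x}_2)$ with $\psi$ merely increasing (even strictly, even with $\psi(t) > 0$ for $t > 0$), coercivity does not follow — take $\psi(t) = 1 - e^{-t}$, which is strictly increasing, positive on $(0,\infty)$, and bounded. Without coercivity, the map $z \mapsto J(z) + R(x-z)$ need not have bounded sublevel sets, and the Weierstrass-type attainment you invoke fails. The repair is to drop attainment altogether and run the minimizing-sequence argument as in Lemma~\ref{lem:inf_conv_chaining}: pick $z_n$ with $J(z_n) + R(x - z_n) \to \min J$; since each summand is bounded below by its own minimum, $J(z_n) \to \min J$ and $R(x - z_n) \to 0$; by $\psi(\delta) > 0$ for each $\delta > 0$ (and $\psi$ increasing) one deduces $z_n \to x$; then lower semicontinuity of $J$ gives $J(x) \le \liminf J(z_n) = \min J$, hence $x \in \argmin J$. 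This avoids both coercivity and attainment, uses only l.s.c.\ of $J$ (which is exactly what the hypothesis gives you), and parallels the paper's own argument in the infinite-dimensional lemma, where coercivity and uniform continuity on bounded sets are indeed carried along, but only because they are separately needed for the Strömberg results and for the chaining in Proposition~\ref{prop:super_gan_invariance}, not for the bare inclusion you are proving.
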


To sum up, given a function $J$, we can always construct a regularized alternative $J_\alpha^\beta$ that is $\alpha$-Lipschitz and $\beta$-smooth and has the same minimizers as $J$.

The next question is how to implement the inf-convolution in GAN-like optimization problems. For this, it is convenient to consider the convex conjugate. Recall that the Fenchel--Moreau theorem says that there is a duality between a convex function $J$ and its convex conjugate $J^\star$ as $J(x) = \sup_{z \in \R^d} \langle x, z \rangle - J^\star(z)$ and $J^\star(z) = \sup_{x \in \R^d} \langle x, z \rangle - J(x)$.
The important property is that the convex conjugate of the inf-convolution is the sum of convex conjugates, that is, we always have
\[
    (J \ic R)^\star(z) = J^\star(z) + R^\star(z).
\]
This property can be useful for implementing the regularized objective $J_\alpha^\beta$ as follows. First, we can check that the convex conjugates of the norm and the squared norm are given as $(\norm{\cdot}_2)^\star = \chi\set{\norm{\cdot} \leq 1}$ and $(\frac{1}{2}\norm{\cdot}_2^2)^\star = \frac{1}{2} \norm{\cdot}_2^2$.
Hence, we have
\[
    J_\alpha^\beta(x)
    := \left( J \ic \alpha \norm{\cdot}_2 \ic \frac{1}{2\beta} \norm{\cdot}_2^2 \right) (x)
    = \sup_{z: \ \norm{z}_2 \leq \alpha} \langle x, z \rangle - J^\star(z) - \frac{\beta}{2} \norm{z}_2^2,
\]
which means that minimizing $J_\alpha^\beta$ can be recast in min-max problem with the norm clipping and $\ell_2$-regularization on the dual variable $z$.

\section{Common GAN losses}
\label{sec:common_gan_losses}
For completeness and clarity, we explicitly write out the expressions for the losses listed in \cref{tb:gan_list}. For more detailed computations of optimal discriminators, see \cite{chu2019probability}; for more details on the convex duality interpretation, see \cite{farnia2018convex}.

\paragraph{Minimax GAN}

\citet{goodfellow2014generative} originally proposed the minimax GAN and showed that the corresponding loss function for the minimax GAN is the Jensen--Shannon divergence, defined as
\[
    J(\mu) := D_\mathrm{JS}(\mu \parallel \mu_0)
    := \frac{1}{2} D_\mathrm{KL} (\mu \parallel \tfrac{1}{2} \mu + \tfrac{1}{2} \mu_0)
    + \frac{1}{2} D_\mathrm{KL} (\mu_0 \parallel \tfrac{1}{2} \mu + \tfrac{1}{2} \mu_0),
\]
where $\mu_0 \in \mcP(X)$ is a fixed probability measure (usually the empirical measure of the data), and $D_\mathrm{KL}(\mu \parallel \nu)$ is the Kullback--Leibler divergence between $\mu$ and $\nu$. The optimal discriminator in the sense of \cref{defi:optimal_discriminator} is given as
\[
    \Phi_\mu(x) = \frac{1}{2} \log \frac{d\mu}{d(\mu + \mu_0)}(x),
\] where $\frac{d\mu}{d(\mu + \mu_0)}$ is the Radon--Nikodym derivative. If $\mu$ and $\mu_0$ have densities $\mu(x)$ and $\mu_0(x)$, then \[
	\frac{d\mu}{d(\mu + \mu_0)}(x) = \frac{\mu(x)}{\mu(x) + \mu_0(x)},
\] so our optimal discriminator matches that of \citet{goodfellow2014generative} up to a constant factor and logarithm. To recover the minimax formulation, the convex duality \eqref{eq:generic_gan} yields:
\[
   \begin{aligned}
      \inf_\mu D_{\mathrm{JS}}(\mu, \mu_0) 
         &= \inf_\mu \sup_\varphi \E_{x\sim\mu} [\varphi(x)] - (\underbrace{- \tfrac{1}{2}\E_{x\sim\mu_0}[\log(1-e^{2\varphi(x)+\log 2}) ] - \tfrac{1}{2}\log 2}_{(D_{\mathrm{JS}}(\cdot, \mu_0))^\star(\varphi)}) \\
         &= \inf_\mu \sup_D \tfrac{1}{2}\E_{x\sim\mu} [ \log (1-D(x))] + \tfrac{1}{2}\E_{x\sim\mu_0}[ \log D(x)],
   \end{aligned}
\] using the substitution $\varphi = \frac{1}{2}\log (1-D) - \frac{1}{2}\log 2$.

\paragraph{Non-saturating GAN}
\citet{goodfellow2014generative} also proposed the heuristic non-saturating GAN. Theorem 2.5 of \citet{arjovsky2017principled} shows that the loss function minimized is
\[
    J(\mu) := D_\mathrm{KL}(\tfrac{1}{2} \mu + \tfrac{1}{2} \mu_0 \parallel \mu_0) = \frac{1}{2} D_{\mathrm{KL}}(\mu \parallel \mu_0) - D_{\mathrm{JS}}(\mu \parallel \mu_0).
\]
The optimal discriminator is
\[
    \Phi_\mu(x) = - \frac{1}{2} \log \frac{d\mu_0}{d (\mu + \mu_0)} (x).
\]

\paragraph{Wasserstein GAN}
\cite{arjovsky2017wasserstein} proposed the Wasserstein GAN, which minimizes the Wasserstein-1 distance between the input $\mu$ and a fixed measure $\mu_0$:
\[
    J(\mu) := W_1(\mu, \mu_0)
    := \inf_{\pi} \E_{(x,y)\sim\pi} [||x-y||],
\] where the infimum is taken over all \emph{couplings} $\pi$, probability distributions over $X \times X$ whose marginals are $\mu$ and $\mu_0$ respectively. The optimal discriminator $\Phi_\mu$ is called the Kantorovich potential in the optimal transport literature \citep{villani2009}. The convex duality $\eqref{eq:generic_gan}$ recover the Wasserstein GAN:
\[
   \begin{aligned}
      \inf_\mu W_1(\mu, \mu_0) 
         &= \inf_\mu \sup_\varphi \E_{x\sim\mu} [\varphi(x)] - (\underbrace{\E_{x\sim\mu_0}[\varphi(x) ] + \chi \{ ||\varphi||_{\mathrm{Lip}} \le 1 \}}_{(W_1(\cdot,\mu_0))^\star(\varphi)}) \\
         &= \inf_\mu \sup_{||\varphi||_{\mathrm{Lip}} \le 1 } \E_{x\sim\mu} [\varphi(x)] - \E_{x\sim\mu_0}[\varphi(x) ],
   \end{aligned}
\] an expression of Kantorovich--Rubinstein duality. The Lipschitz constraint on the discriminator is typically enforced by spectral normalization \citep{miyato2018spectral}, less frequently by weight clipping \citep{arjovsky2017wasserstein}, or heuristically by gradient penalties \citep{gulrajani2017wgangp} (although this work shows that gradient penalties may serve a different purpose altogether).

\paragraph{Maximum mean discrepancy}

Given a positive definite kernel $K: X \times X \to \R$, the maximum mean discrepancy (MMD, \citet{gretton2012kerneltest}) between $\mu$ and $\nu$ is defined by
\[
    J(\mu) := \frac{1}{2}\mathrm{MMD}_K^2(\mu, \nu) := \frac{1}{2}\int K(x,y) \,(\mu - \nu)(dx)\,(\mu - \nu)(dy).
\]
where $(\mcH, \norm{\cdot}_\mcH)$ is the reproducing kernel Hilbert space (RKHS) for $K$. The generative moment-matching network (GMMN, \citet{li2015generative}) and the Coulomb GAN \citep{unterthiner2018coulomb} use the squared MMD as the loss function. The optimal discriminator in this case is \[
	\Phi_\mu(x) = \E_{y\sim \mu} [K(x,y)] - \E_{y\sim \mu_0} [K(x,y)],
\] which in constrast to other GANs, may be approximated by simple Monte Carlo, rather than an auxiliary optimization problem.

Note that MMD-GANs \citep{li2017mmdgan,arbel2018gradient} minimize a modified version of the MMD, the Optimized MMD \citep{fukumizu2009kernel,arbel2018gradient}. These MMD-GANs are adversarial in a way that does not arise from convex duality, so our theory currently does not apply to these GANs.

\paragraph{Integral probability metrics} An integral probability metric \citep{muller1997ipm} is defined by \[
    J(\mu) := \operatorname{IPM}_{\mathcal{F}}(\mu, \mu_0) := \sup_{f \in \mathcal{F}} \int f\,d\mu - \int f\,d\mu_0,
\] where $\mathcal{F}$ is a class of functions. The optimal discriminator is the function that maximizes the supremum in the definition. The Wasserstein distance may be thought of as an IPM with $\mathcal{F}$ containing all $1$-Lipschitz functions. The MMD may be thought of as an IPM with $\mathcal{F}$ all functions with RKHS norm at most $1$, but no GANs based on MMD are actually trained this way, as it is difficult to constrain the discriminator to such functions.

\section{Optimal discriminators are functional derivatives}\label{sec:apx_functional_derivative}

Let $J: \mcP(X) \to \bar\R$ be a convex function. Recall the definition of the optimal discriminator (\cref{defi:optimal_discriminator}): \[
    \Phi_\mu \in \argmax_{\varphi \in \mcC(X)} \int \varphi \, d \mu - J^\star(\varphi).
\]
This definition can be understood as an infinite dimensional analogue of subgradients. In fact, in finite-dimensional convex analysis, $z$ is a subgradient of $f: \R^d \to \bar\R$ if and only if it can be written as $z \in \argmax_{z'} \langle z', x \rangle - f^\star(z')$. The calculus of subgradients shares many properties with the standard calculus of derivatives, such as chain rules \citep{rockafeller_wets}. This motivate us to investigate derivative-like features of optimal discriminators.

We introduce the functional derivative, also known as the von Mises influence function:

\begin{definition}[Functional derivative]\label{defi:functional_derivative}
Let $J: \mcP(X) \to \bar\R$ be a function of probability measures. We say that a continuous function $\Phi_\mu: X \to \R$ is a \textbf{functional derivative} of $J$ at $\mu$ if
\[
    J(\mu + \epsilon \xi)
    = J(\mu) + \epsilon \int \Phi_\mu \, d \xi + O(\epsilon^2)
\]
holds for any $\xi = \nu - \mu$ with $\nu \in \mcP(X)$.
\end{definition}

Under this definition, optimal discriminators are actually functional derivatives.

\begin{proposition}[\citet{chu2019probability}, Theorem 2]\label{prop:optimal_discriminators_are_functional_derivatives}
Let $J: \mcM(X) \to \bar \R$ be a proper, lower semicontinuous, and convex function. If there exists a maximizer $\Phi_\mu$ of $\varphi \mapsto \int \varphi \,d \mu - J^\star(\varphi)$, then $\Phi_\mu$ is a functional derivative of $J$ at $\mu$ in the sense of \cref{defi:functional_derivative}.
\end{proposition}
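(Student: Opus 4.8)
The plan is to unfold the definition of the optimal discriminator (\cref{defi:optimal_discriminator}) and to exploit that $\Phi_\mu$, being merely \emph{one} candidate function in the supremum that defines $J$ via the Fenchel--Moreau duality \eqref{eq:convex_conjugate}, is automatically a subgradient of $J$ at $\mu$. Concretely: since $\Phi_\mu$ attains the maximum at $\mu$, we have the equality $J(\mu) = \int \Phi_\mu\, d\mu - J^\star(\Phi_\mu)$, whereas for every other $\eta \in \mcM(X)$ the same $\Phi_\mu$ is only a feasible point in the supremum defining $J(\eta)$, so $J(\eta) \ge \int \Phi_\mu\, d\eta - J^\star(\Phi_\mu)$. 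Subtracting the two gives the subgradient inequality $J(\eta) \ge J(\mu) + \int \Phi_\mu\, d(\eta - \mu)$ for all $\eta \in \mcM(X)$; equivalently, $\mathfrak{D}_J(\eta,\mu) \ge 0$.

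First I would instantiate this at $\eta = \mu + \epsilon\xi = (1-\epsilon)\mu + \epsilon\nu$, which lies in $\mcP(X)$ (hence in $\mathrm{dom}\,J$) for $\epsilon \in [0,1]$, obtaining the lower bound $J(\mu + \epsilon\xi) - J(\mu) \ge \epsilon \int \Phi_\mu\, d\xi$ with \emph{no} remainder. For the matching upper bound I would run the same argument with $\mu$ and $\mu + \epsilon\xi$ interchanged: taking an optimal discriminator $\Phi_{\mu+\epsilon\xi}$ at the perturbed measure and evaluating its subgradient inequality at $\eta = \mu$ yields $J(\mu+\epsilon\xi) - J(\mu) \le \epsilon \int \Phi_{\mu+\epsilon\xi}\, d\xi$. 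Combining the two, the quantity to be estimated is sandwiched as
\[
    0 \;\le\; \mathfrak{D}_J(\mu+\epsilon\xi,\mu) \;=\; J(\mu+\epsilon\xi) - J(\mu) - \epsilon\!\int\!\Phi_\mu\, d\xi \;\le\; \epsilon\!\int\!\big(\Phi_{\mu+\epsilon\xi} - \Phi_\mu\big)\, d\xi .
\]

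It then remains to show the right-hand remainder is $O(\epsilon^2)$. Since $\xi = \nu - \mu$ is a fixed finite signed measure, this term is controlled by $\epsilon \,\norm{\Phi_{\mu+\epsilon\xi} - \Phi_\mu}_\infty$: continuity of $\epsilon \mapsto \Phi_{\mu+\epsilon\xi}$ in the uniform norm already gives $o(\epsilon)$, and a Lipschitz bound on $\mu \mapsto \Phi_\mu$ (of the flavour quantified by (D3)) upgrades it to a genuine $O(\epsilon^2)$; either way it is absorbed into the error term of \cref{defi:functional_derivative}. An equivalent and slightly cleaner bookkeeping is to phrase the whole argument through the one-sided derivative of the convex function $g(\epsilon) := J(\mu + \epsilon\xi)$, which avoids having to assert that a maximizer exists at the perturbed point.

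I expect the main obstacle to be exactly this last step. The stated hypotheses (convex, l.s.c., proper, plus existence of a maximizer at the single point $\mu$) buy the exact lower bound and the $O(\epsilon)$ upper bound essentially for free, but pushing the remainder down to $O(\epsilon^2)$ genuinely requires some regularity of the optimal-discriminator correspondence in $\mu$ — morally, differentiability of $J$ along the segment $[\mu,\nu]$. A secondary, more routine point is to justify the existence of $\Phi_{\mu+\epsilon\xi}$ for small $\epsilon$ (or to replace it by the subdifferential of $g$), and to check $\mu+\epsilon\xi \in \mathrm{dom}\,J$, which is immediate as it is a convex combination of $\mu$ and $\nu \in \mcP(X)$.
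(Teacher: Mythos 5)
The paper does not prove this result itself; \cref{prop:optimal_discriminators_are_functional_derivatives} is imported wholesale from \citet{chu2019probability} (their Theorem~2), and no argument is reproduced in the appendix. There is therefore nothing in this manuscript to compare your proposal against line by line, so let me instead assess it on its own terms.

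Your structural outline is exactly the right one, and you have correctly isolated where the stated hypotheses run out. Identifying a maximizer of $\varphi \mapsto \int\varphi\,d\mu - J^\star(\varphi)$ as an element of the subdifferential $\partial J(\mu)$ is standard Fenchel theory, and the resulting inequality $J(\eta) \ge J(\mu) + \int\Phi_\mu\,d(\eta - \mu)$ gives the exact one-sided bound $\mathfrak{D}_J(\mu + \epsilon\xi, \mu) \ge 0$ for free. The reverse bound via $\Phi_{\mu+\epsilon\xi}$ (or, as you note, via the right derivative of the convex scalar function $g(\epsilon) = J(\mu + \epsilon\xi)$, which sidesteps the existence question at perturbed points) is also correct as far as it goes.

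The genuine gap you flag is real: from ``proper, l.s.c., convex, maximizer exists at $\mu$'' alone one cannot extract an $O(\epsilon^2)$ remainder, nor even $o(\epsilon)$. Two distinct issues are conflated in the stated hypotheses. First, a single subgradient $\Phi_\mu$ witnesses only that the directional derivative $g'_+(0) \ge \int\Phi_\mu\,d\xi$; equality for \emph{all} directions $\xi$ requires the subdifferential $\partial J(\mu)$ to be a singleton, i.e.\ G\^ateaux differentiability at $\mu$, which is an extra assumption. Second, even granting differentiability, a convex function's first-order Taylor error is $o(\epsilon)$, not $O(\epsilon^2)$; the quadratic bound is precisely a smoothness (Lipschitz-gradient) condition of the (D3) flavour, which is what the rest of the paper is in the business of establishing, not assuming. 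Either the original reference carries additional hypotheses, or its notion of functional derivative uses a softer remainder, or the $O(\epsilon^2)$ is being used loosely. Your diagnosis that ``pushing the remainder down to $O(\epsilon^2)$ genuinely requires some regularity of the optimal-discriminator correspondence'' is the correct thing to say, and your suggested fix (a quantitative Lipschitz bound on $\mu \mapsto \Phi_\mu$, as in (D3)) is the natural one. In short: a correct and honest sketch of the available argument, with the right acknowledgment that the proposition as literally stated leans on structure not visible in its hypotheses.
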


The following result relates the derivative of the loss function with the derivative of the optimal discriminator:

\begin{proposition}[\citet{chu2019probability}, Theorem 1]\label{prop:chain_rule_for_optimal_discriminators}
    Let $J: \mcP(X) \to \R$ be continuously differentiable, in the sense that the functional derivative $\Phi_\mu$ exists and $(\mu, \nu) \mapsto \E_{x \sim \nu}[\Phi_\mu(x)]$ is continuous. Let $\theta \mapsto \mu_\theta$ be continuous in the sense that $\frac{1}{\norm{h}} (\mu_{\theta + h} - \mu_\theta)$ converges to a weak limit as $\norm{h} \to 0$. Then, we have
    \[
        \nabla_\theta J(\mu_\theta)
        = \nabla_\theta \E_{x \sim \mu_\theta}[\hat{\Phi}(x)],
    \]
    where $\hat{\Phi} = \Phi_{\mu_\theta}$ is treated as a function $X \to \R$ that is not dependent on $\theta$.
\end{proposition}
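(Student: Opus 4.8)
The plan is to fix an arbitrary base point $\theta_0$, write $\hat\Phi := \Phi_{\mu_{\theta_0}}$, and show that the two maps $g(\theta) := J(\mu_\theta)$ and $\bar g(\theta) := \E_{x\sim\mu_\theta}[\hat\Phi(x)] = \int \hat\Phi\,d\mu_\theta$ have the same gradient at $\theta_0$; since $\theta_0$ is arbitrary this is exactly the asserted identity. The observation that makes everything work is that $\hat\Phi$ is a functional derivative, equivalently a subgradient, of the convex function $J$ at $\mu_{\theta_0}$: the optimal-discriminator equation $J(\mu_{\theta_0}) = \int\hat\Phi\,d\mu_{\theta_0} - J^\star(\hat\Phi)$ of \cref{defi:optimal_discriminator} is precisely the Fenchel--Young equality $J(\mu_{\theta_0}) + J^\star(\hat\Phi) = \int\hat\Phi\,d\mu_{\theta_0}$, so (also by \cref{prop:optimal_discriminators_are_functional_derivatives}) we obtain the support inequality
\[
    J(\nu) \ge J(\mu_{\theta_0}) + \int \hat\Phi\,d(\nu - \mu_{\theta_0}) \qquad \text{for all } \nu \in \mcM(X),
\]
with equality at $\nu = \mu_{\theta_0}$.

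Specializing $\nu = \mu_\theta$ and rearranging, the difference $\Delta(\theta) := g(\theta) - \bar g(\theta) = J(\mu_\theta) - \int\hat\Phi\,d\mu_\theta$ satisfies $\Delta(\theta) \ge \Delta(\theta_0)$ for every $\theta \in \R^p$; that is, $\theta_0$ is a global minimizer of $\Delta$ over all of $\R^p$. Hence, as soon as $\Delta$ is known to be differentiable at $\theta_0$, its gradient there vanishes, giving $\nabla_\theta J(\mu_\theta)\big|_{\theta_0} = \nabla_\theta \E_{x\sim\mu_\theta}[\hat\Phi(x)]\big|_{\theta_0}$, which is the claim. So the proof reduces to showing that both $g$ and $\bar g$ are differentiable at $\theta_0$.

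For $\bar g$ this is routine: $\hat\Phi$ is a fixed continuous, hence bounded, function on the compact set $X$, and $\theta\mapsto\mu_\theta$ is differentiable in the stated weak sense, so differentiating under the integral sign the directional derivative of $\bar g$ along $v$ is $\int\hat\Phi\,d\dot\mu_v$, where $\dot\mu_v$ denotes the weak limit of $\tfrac1t(\mu_{\theta_0+tv}-\mu_{\theta_0})$. For $g$, I would invoke \cref{prop:optimal_discriminators_are_functional_derivatives} along the curve $t\mapsto\mu_{\theta_0+tv}$: the functional-derivative expansion of $J$ at $\mu_{\theta_0}$ gives $J(\mu_{\theta_0+tv}) - J(\mu_{\theta_0}) = \int\hat\Phi\,d(\mu_{\theta_0+tv}-\mu_{\theta_0}) + o(t)$, so the directional derivative of $g$ along $v$ also equals $\int\hat\Phi\,d\dot\mu_v$, which is linear in $v$; the continuity hypothesis on $(\mu,\nu)\mapsto\E_{x\sim\nu}[\Phi_\mu(x)]$ is then used to upgrade this to genuine differentiability of $g$ at $\theta_0$. (This computation already shows the directional derivatives of $g$ and $\bar g$ agree at $\theta_0$, which is an alternative way to finish without the minimizer argument.)

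I expect the differentiability step for $g(\theta) = J(\mu_\theta)$ to be the main obstacle. The functional-derivative property of $J$ describes what happens when $\mu_{\theta_0}$ is perturbed along a segment toward another probability measure, and one must chain it with the expansion $\mu_{\theta_0+h} = \mu_{\theta_0} + \dot\mu_h + o(\|h\|)$ while controlling the second-order remainder \emph{uniformly} as $h$ ranges over a sphere; it is not automatic that the error remains $o(\|h\|)$ rather than being merely $o$ of each one-dimensional slice. This is where the hypotheses that the functional derivative $\Phi_\mu$ exists and that $(\mu,\nu)\mapsto\E_{x\sim\nu}[\Phi_\mu(x)]$ is continuous do real work, supplying enough regularity of $\mu\mapsto\Phi_\mu$ to bound the remainder along the curve. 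Once differentiability is in hand, the minimizer-of-$\Delta$ argument (or, equivalently, the matching of directional derivatives) closes the proof immediately.
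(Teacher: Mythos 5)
The paper itself does not prove this result; it quotes it from \citet{chu2019probability}, so there is no in-paper argument to line yours up against. Taken on its own terms, your plan is sound in outline, but two issues leave a real gap, the second of which you flag but do not resolve. First, the minimizer-of-$\Delta$ argument rests on $J$ being convex: the support inequality $J(\nu)\ge J(\mu_{\theta_0})+\int\hat\Phi\,d(\nu-\mu_{\theta_0})$ is a convexity statement, not a consequence of the functional-derivative property from \cref{defi:functional_derivative}. The proposition as stated assumes only the existence of the functional derivative plus a joint continuity condition and never mentions convexity, so the subgradient route silently strengthens the hypotheses. Your fallback of matching directional derivatives is the right way to sidestep this, but it does not address the real difficulty.

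Second, and this is the genuine gap, the expansion $J(\mu_{\theta_0+tv})-J(\mu_{\theta_0})=\int\hat\Phi\,d(\mu_{\theta_0+tv}-\mu_{\theta_0})+o(t)$ does not follow from \cref{defi:functional_derivative}: that definition controls $J(\mu+\epsilon\xi)$ for a \emph{fixed} direction $\xi=\nu-\mu$ as $\epsilon\to 0$, whereas here the increment $\mu_{\theta_0+tv}-\mu_{\theta_0}$ itself varies with $t$ and does not lie along any fixed segment $[\mu_{\theta_0},\nu]$, so the $O(\epsilon^2)$ remainder cannot simply be read off. A complete argument has to actively deploy the joint continuity of $(\mu,\nu)\mapsto\E_{x\sim\nu}[\Phi_\mu(x)]$. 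For instance, applying the functional derivative along the \emph{fixed} segment from $\mu_{\theta_0}$ to $\mu_{\theta_0+tv}$ gives
\[
    \frac{J(\mu_{\theta_0+tv})-J(\mu_{\theta_0})}{t}
    = \int_0^1 \int \Phi_{(1-s)\mu_{\theta_0}+s\mu_{\theta_0+tv}} \, d\Big(\tfrac{1}{t}(\mu_{\theta_0+tv}-\mu_{\theta_0})\Big)\, ds,
\]
and one then passes to the limit $t\to 0$ inside the double integral using the continuity hypothesis and the assumed weak convergence of $\tfrac{1}{t}(\mu_{\theta_0+tv}-\mu_{\theta_0})$. Your final paragraph correctly diagnoses that this is where the work lies, but the proposal stops at the diagnosis; until that limiting argument is carried out, the identity $\nabla_\theta J(\mu_\theta)=\nabla_\theta\E_{x\sim\mu_\theta}[\hat\Phi(x)]$ is asserted rather than established.
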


We use this important computational tool in many of our proofs. For the case of the generator model $\mu_\theta = f_{\theta \#} \omega$, an important consequence of \cref{prop:chain_rule_for_optimal_discriminators} is that \[
    \nabla_\theta J(\mu_\theta)
    = \nabla_\theta \E_{z \sim \omega}[\hat{\Phi}(f_\theta(z))]
    = \E_{z \sim \omega}[\nabla_{x = f_\theta(z)} \Phi_\mu(f_\theta(z)) \cdot D_\theta f_\theta(z)].
\]
\kmcom{check the shape of Jacobian} We use this fact in the proof of \cref{thm:stability_of_gans}.

\section{Proofs for \cref{sec:introduction,sec:answer_q1}}
\label{sec:Proofs_sec2}

The following result is well known in the dynamical systems and the optimization literature. For the sake of completeness, we provide its proof.

\PropGradientDescent*
\begin{proof}
    It is known that if $f$ is $L$-smooth, then
    \[
        f(y) \le f(x) + \langle \nabla f(x), y - x \rangle + \frac{L}{2}||y-x||^2
    \]
    holds for any $x, y \in \R^d$ (see e.g. Lemma 3.4 in \citet{bubeck2015convex}). Then, we have
    \begin{align*}
        f(x_{k+1}) &
        \le f(x_k) + \langle \nabla f(x_k), x_{k+1} - x_k \rangle + \frac{L}{2}||x_{k+1} - x_k||^2 \\
        &\le f(x_k) - \frac{1}{L} ||\nabla f(x_k)||^2 + \frac{1}{2L}||\nabla f(x_k)||^2 \\
        &= f(x_k) - \frac{1}{2L}||\nabla f(x_k)||^2.
    \end{align*}
    Summing this inequality over $k$, we have \[
		\frac{1}{2L} \sum_{k=0}^{n-1} ||\nabla f(x_k)||^2 \le f(x_0) - f(x_{n}),
	\] from which we conclude that \[
		\min_{0 \le k \le n-1} ||\nabla f(x_k)||^2 \le \frac{2L}{n} (f(x_0) - \inf_x f(x)).
   \]
\end{proof}

Next, we move on to prove \cref{thm:stability_of_gans}, which we restate here for readability.

\ThmStability*

Intuitively, \cref{thm:stability_of_gans} can be understood as the chain rule. For simplicity, let us consider the smoothness of a composite function $D \circ G$, where $D$ and $G$ are functions on $\R$. A sufficient condition for $D \circ G$ to be smooth is that its second derivative is bounded. For this, suppose that (i) $D$ is $\alpha$-Lipschitz and $\beta$-smooth and (ii) $G$ is $A$-Lipschitz and $B$-smooth. By the chain rule, the second derivative is bounded as $(D \circ G)'' = (D' \circ G) G'' + (D'' \circ G) (G')^2 \leq \alpha B + A^2 \beta$, which has the same form as the consequence of \cref{thm:stability_of_gans}.
For GANs, we need somewhat more involved conditions; We need two types of smoothness (D2) and (D3) for the optimal discriminators. To this end, we utilize the functional gradient point of view that we explained in \cref{sec:apx_functional_derivative}.

\begin{proof}[Proof of \cref{thm:stability_of_gans}]
	First, using the functional gradient interpretation of the optimal discriminator, we have
	\begin{align*}
	    & || \nabla_{\theta_1} J(\mu_{\theta_1}) - \nabla_{\theta_2} J(\mu_{\theta_2}) || & \\
	    = & \Norm{\E_{z \sim \omega} \left[
    	        \nabla \Phi_{\mu_{\theta_1}}(f_{\theta_1}(z)) \cdot D_{\theta_1}f_{\theta_1}
    	        - \nabla \Phi_{\mu_{\theta_2}}(f_{\theta_2}(z)) \cdot D_{\theta_2}f_{\theta_2}
	        \right]} & \text{\cref{prop:chain_rule_for_optimal_discriminators}} \\
	   = & \big \lVert \E_{z \sim \omega} \big [
	        \nabla \Phi_{\mu_{\theta_1}}(f_{\theta_1}(z)) \cdot
	        (D_{\theta_1} f_{\theta_1} - D_{\theta_2} f_{\theta_2}) & \\
	   & \phantom{\big \lVert \E_{z \sim \omega} \big [}
	        + \left(
	            \nabla \Phi_{\mu_{\theta_1}}(f_{\theta_1}(z))
	            -  \nabla \Phi_{\mu_{\theta_2}}(f_{\theta_1}(z))
	        \right) \cdot D_{\theta_2} f_{\theta_2} & \\
	   & \phantom{\big \lVert \E_{z \sim \omega} \big [}
	        + \left(
	            \nabla \Phi_{\mu_{\theta_2}}(f_{\theta_1}(z))
	            -  \nabla \Phi_{\mu_{\theta_2}}(f_{\theta_2}(z))
	        \right) \cdot D_{\theta_2} f_{\theta_2} \big] \big \rVert & \\
	   \le & \Norm{\E_{z \sim \omega} [
	        \nabla \Phi_{\mu_{\theta_1}}(f_{\theta_1}(z)) \cdot
	        (D_{\theta_1} f_{\theta_1} - D_{\theta_2} f_{\theta_2})
	   ]} & \text{(a)}\\
	   & + \Norm{\E_{z \sim \omega} [
	        \left(
	            \nabla \Phi_{\mu_{\theta_1}}(f_{\theta_1}(z))
	            -  \nabla \Phi_{\mu_{\theta_2}}(f_{\theta_1}(z))
	        \right) \cdot D_{\theta_2} f_{\theta_2}
	   ]} & \text{(b)} \\
	   & + \Norm{\E_{z \sim \omega}[
	        \left(
	            \nabla \Phi_{\mu_{\theta_2}}(f_{\theta_1}(z))
	            -  \nabla \Phi_{\mu_{\theta_2}}(f_{\theta_2}(z))
	        \right) \cdot D_{\theta_2} f_{\theta_2}
	   ]}. & \text{(c)}
	\end{align*}
	
	 By assumption, there are bounded non-negative numbers $\alpha$, $\beta_1$, $\beta_2$, $A$, and $B$ such that
	\begin{align*}
	    \text{(D1$'$)} \qquad & \alpha = \sup_{\mu \in \mcP(X)}  \sup_{x \sim \mu} \Norm{\Phi_\mu(x)}_2, \\
	    \text{(D2$'$)} \qquad & \sup_{\mu \in \mcP(X)} \Norm{\nabla_{x_1} \Phi_\mu(x_1) - \nabla_{x_2} \Phi_\mu(x_2)}_2 \le \beta_1 \norm{x_1 - x_2}_2\\
	    \text{(D3$'$)} \qquad & \sup_{x \in X} \Norm{\nabla_x \Phi_{\mu_1}(x) - \nabla_x \Phi_{\mu_2}(x)}_2 \le \beta_2 W_1(\mu_1, \mu_2), \\
	    \text{(G1$'$)} \qquad & A =  \E_{z \sim \omega} \sup_\theta || D_\theta f_\theta(z) ||_\mathrm{op}, \quad \text{and}\\
	    \text{(G2$'$)} \qquad & \E_{z \sim \omega} || D_{\theta_1} f_{\theta_1}(z) - D_{\theta_2} f_{\theta_2}(z) ||_\mathrm{op} \le B ||\theta_1 - \theta_2||.
	\end{align*}
	Here, we wrote $\sup_{x \sim \mu}f(x)$ for the essential supremum of $f$ w.r.t.~$\mu$.
	From (D1$'$) and (G2$'$), the first term (a) is bounded as 
	\[
	    \Norm{\E_{z \sim \omega} [
	        \nabla \Phi_{\mu_{\theta_1}}(f_{\theta_1}(z)) \cdot
	        (D_{\theta_1} f_{\theta_1} - D_{\theta_2} f_{\theta_2})
	   ]} \leq \alpha B \norm{\theta_1 - \theta_2}.
	\]
	From (D3$'$), (G1$'$), and the Cauchy-Schwarz inequality, the second term (b) is bounded as
	\begin{align*}
	   & \Norm{\E_{z \sim \omega} [
	        \left(
	            \nabla \Phi_{\mu_{\theta_1}}(f_{\theta_1}(z))
	            -  \nabla \Phi_{\mu_{\theta_2}}(f_{\theta_1}(z))
	        \right) \cdot D_{\theta_2} f_{\theta_2}
	   ]} \\
	   \le & A \beta_2 W_1(\mu_{\theta_1}, \mu_{\theta_2}) \\
	   \le & A \beta_2 \E_{z \sim \omega} \norm{f_{\theta_1}(z) - f_{\theta_2}(z)}
	   \le A^2 \beta_2 \norm{\theta_1 - \theta_2},
	\end{align*}
	where the second inequality holds from the following optimal transport interpretation of $W_1$:
	\[
	    W_1(\mu_{\theta_1}, \mu_{\theta_2})
	    = \inf_{\substack{\gamma \in \mcP(X \times X): \\ \text{coupling of $\mu_{\theta_1}$ and $\mu_{\theta_2}$}}}
	    \int \norm{x - y} \, d \gamma
	    \le \E_{z \sim \omega} \norm{f_{\theta_1}(z) - f_{\theta_2}(z)}.
	\]
	Lastly, from (D2$'$), (G1$'$) and the Cauchy-Schwarz inequality, the term (c) is bounded as
	\[
	    \Norm{\E_{z \sim \omega}[
	        \left(
	            \nabla \Phi_{\mu_{\theta_2}}(f_{\theta_1}(z))
	            -  \nabla \Phi_{\mu_{\theta_2}}(f_{\theta_2}(z))
	        \right) \cdot D_{\theta_2} f_{\theta_2}
	   ]} \le A \beta_1 \E_{z \sim \omega} \norm{f_{\theta_1}(z) - f_{\theta_2}(z)}
	   \le A^2 \beta_1 \norm{\theta_1 - \theta_2}.
	\]
	Combining the above upper bounds for (a)--(c), we conclude that
	\[
	    || \nabla_{\theta_1} J(\mu_{\theta_1}) - \nabla_{\theta_2} J(\mu_{\theta_2}) ||
	    \le (\alpha B + A^2 (\beta_1 + \beta_2)) \norm{\theta_1 - \theta_2}.
	\]
\end{proof}

\section{Proofs for \cref{sec:answer_q2}}

In the finite-dimensional case, \cref{prop:minimizer_finite_dim} says that taking inf-convolution with a ``coercive'' regularizer does not change the set of minimizers of the original objective. A similar invariance holds for GAN objectives, which are defined on infinite-dimensional space of signed measures, for the regularizers $R_1$, $R_2$ and $R_3$ introduced in \cref{sec:condition_d1,sec:condition_d2,sec:condition_d3}:

\PropSuperGANInvariance*
\begin{proof}
	In the following, we endow $\mathcal{M}(X)$ with the Gaussian RKHS norm $\norm{ \,\hat\cdot\,}_{\mathcal{H}}$, and for convenience $\norm{\,\cdot\,}$ will refer to this norm if not otherwise specified. To show the result, we apply \cref{lem:inf_conv_chaining} three times, first on $R_3$ and $R_1$, and then again on $R_3 \ic R_1$ and $R_2$, and then finally on $R_3 \ic R_1 \ic R_2$ and $J(\cdot + \mu_0)$. The result follows from noting that $\tilde{J}(\mu) = [R_3 \ic R_1 \ic R_2 \ic J(\cdot + \mu_0)](\mu - \mu_0)$.
	
	In order to apply the lemma, it suffices to show that $R_3$ is uniformly continuous on bounded sets and coercive, and that there exist constants $c_1$ and $c_2$ such that $R_1(\xi) \ge c_1 \norm{\xi}$ and $R_2(\xi) \ge c_2 \norm{\xi}$. $R_3$ is uniformly continuous on bounded sets: suppose $\norm{\xi} < C$ and $\norm{\xi'} < C$; then $|R_3(\xi) - R_3(\xi')| = |\frac{1}{2}\norm{\xi}^2 - \frac{1}{2}\norm{\xi'}^2| = |\frac{1}{2}\langle \xi - \xi', \xi + \xi' \rangle| \le C \norm{ \xi - \xi'}$ by the Cauchy--Schwarz and triangle inequality. $R_3$ is also coercive, as $R_3(\xi) = \frac{1}{2}\norm{\xi}^2 \to \infty$ when $\norm{\xi} \to \infty$. $R_1$ satisfies $R_1(\xi) \ge c_1 \norm{\xi}$ by \cref{lem:gaussian_mmd_smooth}. $R_2$ satisfies $R_2(\xi) \ge c_2 \norm{\xi}$ by \cref{lem:r3_ge_rkhs}.
\end{proof}

Recall that a function $F$ is said to be \emph{coercive} if $F(\xi) \to \infty$ when $\norm\xi \to \infty$.

\begin{lemma} \label{lem:inf_conv_chaining}
	Suppose $F : \mathcal{M}(X) \to \R$ has a unique minimizer at $0$ with $F(0) = 0$, and is uniformly continuous on bounded sets and coercive. Suppose $G : \mathcal{M}(X) \to \bar\R$ has a unique minimizer at $0$ with $G(0) = 0$, and $G(\cdot) \ge c\norm{\,\cdot\,}$ for some $c > 0$. Then the inf-convolution $F \ic G$ has a unique minimizer at $0$ with $(F\ic G)(0) = 0$, and is uniformly continuous on bounded sets, coercive, and real-valued.
\end{lemma}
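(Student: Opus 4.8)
The plan is to verify the four asserted properties of $F \ic G$ in turn, using throughout that $F \ge 0$ and $G \ge 0$ (each attains its minimum value $0$, at the point $0$) and that $G$ is bounded below by $c\norm{\cdot}$. Recall $(F \ic G)(\xi) = \inf_{\tilde\xi \in \mcM(X)} F(\tilde\xi) + G(\xi - \tilde\xi)$. Three of the four claims come essentially for free: every term in this infimum is nonnegative, so $F \ic G \ge 0$; taking $\tilde\xi = \xi$ gives $(F \ic G)(\xi) \le F(\xi) + G(0) = F(\xi) < \infty$, so $F \ic G$ is real-valued and in fact dominated pointwise by $F$; and taking $\xi = \tilde\xi = 0$ gives $(F \ic G)(0) \le F(0) + G(0) = 0$, so $(F \ic G)(0) = 0$ and $0$ is a minimizer.

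Next I would dispatch uniqueness and coercivity by short squeezing arguments. If $(F \ic G)(\xi) = 0$, pick $\tilde\xi_n$ with $F(\tilde\xi_n) + G(\xi - \tilde\xi_n) \to 0$; nonnegativity forces $F(\tilde\xi_n) \to 0$ and $G(\xi - \tilde\xi_n) \to 0$, the latter giving $\tilde\xi_n \to \xi$ through $G \ge c\norm{\cdot}$, whence continuity of $F$ (a consequence of uniform continuity on bounded sets) yields $F(\xi) = \lim_n F(\tilde\xi_n) = 0$, so $\xi = 0$. For coercivity, suppose $\norm{\xi_n} \to \infty$ but $(F \ic G)(\xi_n) \le M$; choose $\tilde\xi_n$ with $F(\tilde\xi_n) + G(\xi_n - \tilde\xi_n) < M + 1$. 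Then $F(\tilde\xi_n) < M+1$ bounds $\norm{\tilde\xi_n}$ by coercivity of $F$, while $G(\xi_n - \tilde\xi_n) < M+1$ bounds $\norm{\xi_n - \tilde\xi_n}$ by $G \ge c\norm{\cdot}$; the triangle inequality then bounds $\norm{\xi_n}$, a contradiction.

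The real work is uniform continuity on bounded sets. Since any bounded set sits inside a centered ball and uniform continuity restricts to subsets, it suffices to treat $B_C := \{\norm{\xi} \le C\}$ and to produce, for each $\epsilon > 0$, a $\delta > 0$ with $(F \ic G)(\xi') \le (F \ic G)(\xi) + \epsilon$ whenever $\xi, \xi' \in B_C$ and $\norm{\xi - \xi'} < \delta$ (symmetrizing then finishes). The device is a translation trick: given an $\eta$-almost-minimizer $\tilde\xi$ for $\xi$ with $\eta \in (0,1]$, the shifted measure $\tilde\xi' := \tilde\xi + (\xi' - \xi)$ is a competitor for $\xi'$ satisfying $\xi' - \tilde\xi' = \xi - \tilde\xi$, so
\[
    (F \ic G)(\xi') \le F(\tilde\xi + (\xi' - \xi)) + G(\xi - \tilde\xi) \le \bigl(F(\tilde\xi + (\xi' - \xi)) - F(\tilde\xi)\bigr) + (F \ic G)(\xi) + \eta,
\]
which reduces the whole estimate to bounding the increment of $F$ under a small perturbation. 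For that to be usable I must keep the almost-minimizers $\tilde\xi$ inside one bounded set independent of $\xi \in B_C$: this is where I expect the only genuine subtlety. I claim a function uniformly continuous on the convex bounded set $B_C$ is automatically bounded there — chain along the segment from $0$ to $\xi$ in steps shorter than the modulus associated with $\epsilon_0 = 1$ — so $F(\tilde\xi) \le (F \ic G)(\xi) + \eta \le F(\xi) + 1 \le \sup_{B_C} F + 1 =: M_C < \infty$, and then coercivity of $F$ confines $\tilde\xi$ to a ball $B_R$ with $R$ depending only on $M_C$. Uniform continuity of $F$ on $B_{R+1}$ now supplies $\delta \le 1$ with $|F(\tilde\xi + (\xi' - \xi)) - F(\tilde\xi)| < \epsilon/2$ for $\norm{\xi - \xi'} < \delta$; substituting and letting $\eta \downarrow 0$ gives $(F \ic G)(\xi') \le (F \ic G)(\xi) + \epsilon/2 < (F \ic G)(\xi) + \epsilon$, as required.

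To summarize the difficulty: every property except uniform continuity follows from a one-line squeeze using $G(0) = 0$ and $G \ge c\norm{\cdot}$, and within uniform continuity the crux is the uniform localization of near-minimizers of the inf-convolution — showing they stay in a single ball as $\xi$ ranges over $B_C$ — so that uniform continuity (hence boundedness) of $F$ on balls can actually be brought to bear; the translation trick itself is routine once that localization is in hand.
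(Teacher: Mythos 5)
Your proof is correct, and it takes a more self-contained route than the paper's. The paper delegates most of the work to two results of Stromberg (1996): Theorem~2.3 gives $\inf F \oplus G = 0$ and $0 \in \argmin F \oplus G$, and Theorem~2.10 gives uniform continuity on bounded sets and real-valuedness; the paper only argues uniqueness and coercivity directly (with essentially the same squeeze and triangle-inequality arguments as yours, though coercivity is phrased via bounded sublevel sets rather than by contradiction). You instead prove everything from first principles. The trivial pieces (nonnegativity, $(F\oplus G)(0)=0$, and $(F\oplus G)\le F$ hence real-valued) are genuinely one-liners as you say, and your uniqueness and coercivity arguments match the paper's in substance. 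The real added content is your direct proof of uniform continuity on bounded sets: the translation trick $\tilde\xi' := \tilde\xi + (\xi'-\xi)$ reduces the increment of $F\oplus G$ to the increment of $F$, and the uniform localization of near-minimizers (via $(F\oplus G)\le F$, boundedness of a uniformly-continuous $F$ on the convex ball $B_C$ by chaining, and coercivity of $F$) makes that increment controllable. This effectively reproves the relevant portion of Stromberg's Theorem~2.10 in this setting. The trade-off is familiar: the paper's citation is shorter and cleaner, while your argument is longer but transparent and does not require the reader to look up the cited theorems. Both are sound.
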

\begin{proof}
From Theorem 2.3 of \citet{Stromberg1996}, $\inf F \ic G = 0$, and $0 \in \argmin F \ic G$. To show that $0$ is the unique minimizer, let $\xi \in \argmin F \ic G$. From the definition of inf-convolution, for every $n$ there exists an $\xi_n \in \mathcal{M}(X)$ that satisfies \[
	\lim_{n\to\infty }F(\xi_n) + G(\xi - \xi_n) = (F \ic G)(\xi) = 0.
\] Since $F$ and $G$ are non-negative, $F(\xi_n) \to 0$ and $G(\xi - \xi_n) \to 0$. By our assumption that $G(\cdot) \ge c \norm{\,\cdot\,}$, the latter limit implies that $\norm{\xi - \xi_n} \to 0$, which implies that $F(\xi_n) \to F(\xi)$ by continuity of $F$. Comparing our two expressions for the limit of $F(\xi_n)$, we find that $F(\xi) = 0$, which implies that $\xi = 0$, since $F$ has a unique minimizer at $0$. Hence $\argmin F \ic G = \{ 0 \}$.

From Theorem 2.10 of \citet{Stromberg1996}, $F \oplus G$ is uniformly continuous on bounded sets and real-valued. To show that $F \oplus G$ is coercive, we show the equivalent condition its sublevel sets are bounded (see Proposition 11.11 of \citet{bauschke_combettes}). That is, we show that for all $a > 0$, there exists a constant $b$ such that if $(F\oplus G)(\xi) \le a$, then $\norm\xi \le b$. Let $a > 0$, and let $\xi \in \mathcal{M}(X)$ be such that $(F \ic G)(\xi) \le a$. Let $\epsilon > 0$; from the definition of inf-convolution, there exists a $\tilde\xi \in \mathcal{M}(X)$ such that \[
	F(\tilde\xi) + G(\xi - \tilde\xi) \le (F \ic G)(\xi) + \epsilon \le a + \epsilon.
\] Because $F$ and $G$ are non-negative, we have that \[
	F(\tilde\xi) \le a + \epsilon, \qquad G(\xi - \tilde\xi) \le a + \epsilon.
\]
Using the fact that $F$ is coercive and hence its sublevel sets are bounded, let $b'$ be such that if $F(\nu) \le a+\epsilon$, then $\norm\nu \le b'$. By the triangle inequality and our assumption that $G(\cdot) \ge c\norm{\,\cdot\,}$,
\[
	\norm\xi \le \norm{\tilde\xi} + \norm{\xi - \tilde\xi} \le b' + \frac{G(\xi - \tilde\xi)}{c} \le b' + \frac{a+\epsilon}{c}.
\] Because this constant is independent of $\xi$, this shows that $F \ic G$ is coercive.

\end{proof}

\section{Proofs for \cref{sec:condition_d1}}

\PropWassersteinLipschitz*
\begin{proof}
    First, we check the forward implication:
    \[
        |J(\mu) - J(\nu)| \le \alpha W_1(\mu, \nu)
        \quad \Rightarrow \quad
        \sup_{x \sim \mu} \Norm{\nabla_x \Phi_\mu(x)}_2 \leq \alpha.
    \]
    From the duality between $L^\infty$- and $L^1$-norms, we have
    \begin{equation}
        \sup_{x \sim \mu} \Norm{\nabla_x \Phi_\mu(x)}_2
        = \Norm{\nabla_x \Phi_\mu(\cdot)}_{L^\infty(\mu; \, \R^d)}
        = \sup_{\norm{v}_{L^1(\mu; \, \R^d)} = 1} \int \nabla \Phi_\mu(x) \cdot v(x) \, d \mu(x).
        \label{eq:prop_wasserstein_lipschitz_1}
    \end{equation}
    Below, we will abbreviate $\norm{\cdot}_{L^1(\mu; \, \R^d)}$ as $\norm{\cdot}_{L^1}$.
    We utilize the fact that the optimal discriminator $\Phi_\mu$ is the functional gradient of $J$ in the sense of \cref{defi:functional_derivative}. For any $t \geq 0$, let $\mu_t = (\mathrm{id} + tv)_\# \mu$ be the law of $x + tv(x)$ with $x \sim \mu$. Then, $\mu_t$ converges weakly to $\mu = \mu_0$ as $t \to 0$ since $W_1(\mu_t, \mu) \le t \norm{v}_{L^1(\mu)} \to 0$. Applying \cref{prop:chain_rule_for_optimal_discriminators} to $J$ and $\mu_t$, we have
    \[
        \left. \frac{d}{dt} J(\mu_t) \right|_{t = 0}
        = \left. \frac{d}{dt} \int \Phi_\mu(x + tv(x))\, d\mu(x) \right|_{t = 0}
        = \int \nabla \Phi_\mu(x) \cdot v(x) \, d\mu.
    \]
    In particular, the right-hand side of \eqref{eq:prop_wasserstein_lipschitz_1} is bounded from above as
    \begin{align*}
        & \sup_{\norm{v}_{L^1(\mu)} = 1} \int \nabla \Phi_\mu(x) \cdot v(x) \, d \mu(x) \\
        & \qquad = \sup_{\norm{v}_{L^1(\mu)} = 1} \left. \frac{d}{dt} J(\mu_t) \right|_{t = 0} \\
        & \qquad = \sup_{\norm{v}_{L^1(\mu)} = 1} \lim_{t \to 0}
        \frac{J(\mu_t) - J(\mu)}{t} \\
        & \qquad \le \sup_{\norm{v}_{L^1(\mu)} = 1} \lim_{t \to 0}
        \frac{\alpha W_1(\mu_t, \mu)}{t} \\
        & \qquad \le \sup_{\norm{v}_{L^1(\mu)} = 1} \lim_{t \to 0} \frac{\alpha t \norm{v}_{L^1(\mu)}}{t} \\
        & \qquad = \alpha,
    \end{align*}
    which proves the first half of the statement.

	For the converse, we borrow some techniques from the optimal transport theory. Let $p > 1$, and let $W_p(\mu, \nu)$ denote the Wasserstein-$p$ distance between $\mu$ and $\nu$. Suppose that $\mu_t: [0, 1] \to \mcP(X)$ is an $\mcP_p(X)$-absolutely continuous curve, that is, every $\mu_t$ has a finite $p$-moment and there exists $v \in L^1([0, 1])$ such that $W_p(\mu_s, \mu_t) \le \int_s^t v(r)\, dr$ holds for any $0 \le s \le t \le 1$. Then, the limit $|\mu'|_{W_p}(t) := \lim_{h \to 0} |h|^{-1} W_p(\mu_{t + h}, \mu_t)$ exists for almost all $t \in [0, 1]$ (see \citet{ambrosio2008gradientflow}, Theorem 1.1.2). Such $|\mu'|_{W_p}$ is called the metric derivative of $\mu_t$.
	
	\begin{lemma}[\citet{ambrosio2008gradientflow}, Theorem 8.3.1]\label{lem:ambrosio_continuity}
	    Let $\mu_t: [0, 1] \to \mcP(X)$ be an $\mcP_p(X)$-absolutely continuous curve, and let $|\mu'|_{W_p} \in L^1([0, 1])$ be its metric derivative. Then, there exists a vector field $v: (x, t) \mapsto v_t(x) \in \R^d$ such that
	    \[
	        v_t \in L^p(\mu_t; \, X), \quad
	        \norm{v_t} \leq |\mu'|_{W_p}(t)
	    \]
	    for almost all $t \in [0, 1]$, and
	    \[
	        \int_0^1 \frac{d}{dt} \varphi(x, t) \mu_t(dx) dt
	        = - \int_0^1 \langle v_t(x), \nabla_x \varphi(x, t)\rangle \mu_t(dx) dt
	    \]
	    holds for any cylindrical function $\varphi(x, t)$. \kmcom{define cylindrical function}
	\end{lemma}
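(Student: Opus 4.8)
This lemma is quoted verbatim from \citet{ambrosio2008gradientflow} (Theorem 8.3.1), so strictly speaking the ``proof'' in the paper is just that citation; for completeness I outline the structure of the argument. The plan has three stages: (i) reduce to curves with smooth strictly positive densities by spatial mollification; (ii) for such a curve, produce $v_t$ as a gradient solving an elliptic equation and bound its $L^p(\mu_t)$ norm by the metric speed; (iii) remove the mollification by a weak compactness and lower semicontinuity argument.

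For (i), fix a standard mollifier $\rho_\varepsilon$ and set $\mu_t^\varepsilon := \mu_t * \rho_\varepsilon$. Coupling $\mu_s$ and $\mu_t$ optimally and adding the same independent $\rho_\varepsilon$-distributed noise to both marginals shows $W_p(\mu_s^\varepsilon, \mu_t^\varepsilon) \le W_p(\mu_s, \mu_t)$, so $\mu_t^\varepsilon$ is again $\mcP_p$-absolutely continuous with metric derivative $\le |\mu'|_{W_p}(t)$; moreover each $\mu_t^\varepsilon$ has a smooth strictly positive density and, after a further mollification in $t$ if needed, $t \mapsto \mu_t^\varepsilon$ is differentiable with $\partial_t \mu_t^\varepsilon$ a smooth mean-zero function of $x$. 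For (ii), set $v_t^\varepsilon := \nabla \psi_t^\varepsilon$ where $-\nabla\cdot(\mu_t^\varepsilon \nabla\psi_t^\varepsilon) = \partial_t \mu_t^\varepsilon$; this is exactly the continuity equation $\partial_t \mu_t^\varepsilon + \nabla\cdot(v_t^\varepsilon \mu_t^\varepsilon) = 0$, and among all solving vector fields $v_t^\varepsilon$ has minimal $L^p(\mu_t^\varepsilon)$ norm, since gradients are orthogonal to the divergence-free part. The quantitative bound $\Norm{v_t^\varepsilon}_{L^p(\mu_t^\varepsilon)} \le |\mu'|_{W_p}(t)$ for a.e.\ $t$ comes from the difference-quotient competitor: the rescaled displacement $(T_t^h - \mathrm{id})/h$ of the optimal transport map $T_t^h$ from $\mu_t^\varepsilon$ to $\mu_{t+h}^\varepsilon$ approximately solves the continuity equation, has $L^p(\mu_t^\varepsilon)$ norm $\approx W_p(\mu_t^\varepsilon,\mu_{t+h}^\varepsilon)/|h| \to |\mu'|_{W_p}(t)$, and $v_t^\varepsilon$ being the minimal-norm solution is no larger. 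For (iii), the momenta $E_t^\varepsilon := v_t^\varepsilon \mu_t^\varepsilon$ are bounded in the appropriate $L^p$-in-$(x,t)$ sense, hence converge weak-$\ast$ along a subsequence to some $E_t$; since $\mu_t^\varepsilon \to \mu_t$ narrowly and the Benamou--Brenier functional $(\mu,E)\mapsto \int |dE/d\mu|^p\,d\mu$ is jointly lower semicontinuous, $E_t = v_t \mu_t$ for some $v_t \in L^p(\mu_t)$ with $\Norm{v_t}_{L^p(\mu_t)} \le \liminf_\varepsilon \Norm{v_t^\varepsilon}_{L^p(\mu_t^\varepsilon)} \le |\mu'|_{W_p}(t)$, and the identity $\int_0^1\!\int \partial_t\varphi\, d\mu_t\, dt = -\int_0^1\!\int \langle v_t, \nabla_x\varphi\rangle\, d\mu_t\, dt$ passes to the limit because it is linear and jointly continuous in $(\mu, E)$ tested against cylindrical $\varphi$.

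The main obstacle is the bound $\Norm{v_t^\varepsilon}_{L^p(\mu_t^\varepsilon)} \le |\mu'|_{W_p}(t)$: this is where the geometry of $W_p$ genuinely enters — existence of optimal maps between the mollified measures and Kantorovich duality — rather than soft functional analysis, and it is the technical core of \citet{ambrosio2008gradientflow}, Theorem 8.3.1. The mollification, elliptic solvability, and lower-semicontinuity steps are otherwise routine; one also notes that it suffices to verify the weak formulation on product test functions $\varphi(x,t) = \eta(t)\zeta(x)$ with $\eta \in C_c^\infty(0,1)$ and $\zeta \in C_c^\infty(\R^d)$, which is the natural reading of ``cylindrical function'' here.
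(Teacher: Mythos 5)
The paper does not prove this lemma; it is quoted directly from \citet{ambrosio2008gradientflow} (Theorem 8.3.1), exactly as you observe at the outset. Your three-stage outline (mollify, solve for a velocity field, pass to the limit via lower semicontinuity of the Benamou--Brenier functional) is a faithful reconstruction of the argument in that reference, and you correctly identify the technical crux as the bound $\lVert v_t\rVert_{L^p(\mu_t)} \le |\mu'|_{W_p}(t)$, which requires genuine optimal-transport input rather than soft analysis. One small imprecision: in step (ii) you justify minimality of the gradient velocity by saying ``gradients are orthogonal to the divergence-free part,'' but orthogonality only gives norm minimality in the Hilbert case $p=2$. For general $p$, \citet{ambrosio2008gradientflow} do not realize the minimal velocity as a gradient; they instead characterize it through a duality with test-function pairings and obtain it as an element of the closure of $\{\nabla\phi : \phi \in C_c^\infty\}$ in $L^p(\mu_t)$, with the norm bound coming from the difference-quotient competitor you describe. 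This distinction matters here since the paper's application (in the proof of \cref{prop:wasserstein_lipschitz}) ultimately sends $p \to 1$, a regime where the Hilbert-space heuristic is unavailable. Apart from that, your sketch captures the structure of the cited proof.
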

	
	Given $p > 1$, let $\mu_t$ be a $\mathcal{P}_p(X)$-absolutely continuous curve such that $\mu_0 = \mu$ and $\mu_1 = \nu$. Then \[
		\begin{aligned}
			|J(\mu) - J(\nu)|
				&= \left| \int_0^1 \frac{d}{dt} J(\mu_t) \,dt \right| \\
				&= \left| \int_0^1 \int v_t \cdot \nabla \Phi_{\mu_t} \,d\mu_t\,dt \right| \\
				&\le \int_0^1 || v_t||_{L^p(\mu_t)} \Norm{\nabla \Phi_{\mu_t}}_{L^q(\mu_t)}\,dt \\
				&\le \int_0^1 || v_t||_{L^p(\mu_t)}\,dt \cdot \sup_{\tilde{\mu}} \Norm{\nabla \Phi_{\tilde{\mu}}}_{L^q(\mu)} \\
				&\le \int_0^1 |\mu_t'|_{W_p} \,dt \cdot \sup_{\tilde{\mu}} \Norm{\nabla \Phi_{\tilde{\mu}}}_{L^q(\mu)},
		\end{aligned}
	\] where $|\mu_t'|$ is the metric derivative. The existence of $v_t$ and the bound $|| v_t||_{L^p(\mu_t)} \le |\mu_t'|_{W_p}$ is due to \cref{lem:ambrosio_continuity}. Taking the infimum over all possible such curves, we find that \[
		|J(\mu) - J(\nu)|
		\le W_p(\mu, \nu) \cdot\sup_{\tilde{\mu}} \Norm{\nabla \Phi_{\tilde{\mu}}}_{L^q(\mu)}.
	\]
	\kmcom{question: is $\Phi_\mu$ cylindrical?}

	To conclude, we consider the limit $p \to 1$. Using the fact that $q \mapsto ||f||_{L^q(\mu)}$ is increasing in $q$, we have \[
		|J(\mu) - J(\nu)| \le W_p(\mu, \nu) \cdot \sup_{\tilde{\mu}}\Norm{\nabla \Phi_{\tilde{\mu}}}_{L^\infty(\mu)} \le \alpha W_p(\mu, \nu).
	\]
	Then \[
		\begin{aligned}
			|J(\mu) - J(\nu)| 
				&\le \inf_{p > 1} \alpha W_p(\mu, \nu) \\ 
				&= \alpha \inf_{\pi\in\Pi(\mu, \nu)} \inf_{p > 1} ||x-y ||_{L^p(\pi)} \\
				&= \alpha \inf_{\pi\in\Pi(\mu, \nu)}  ||x-y ||_{L^1(\pi)} \\
				&= \alpha W_1(\mu, \nu).
		\end{aligned}
	\]
\end{proof}

\PropGANLipschitz*
\PropWGANLipschitz*
\begin{proof}[Proof for \cref{prop:gan_lipschitz} and \cref{prop:wgan_lipschitz}]
The counterexample for \cref{prop:gan_lipschitz} is a slight simplification of Example 1 of \cite{arjovsky2017wasserstein}. Let $\delta_x$ denote the distribution that outputs $x \in \R$ with probability $1$. Then, we have\[
   D_{\mathrm{JS}}(\delta_x \parallel \delta_0) = \begin{cases}
      \log 2 & x \ne 0 \\
      0 & x = 0,
   \end{cases} \quad
   D_{\mathrm{KL}}(\tfrac{1}{2}\delta_x + \tfrac{1}{2}\delta_0 \parallel \delta_0) = \begin{cases}
      \infty & x \ne 0 \\
      0 & x = 0,
   \end{cases} \quad \text{and} \quad
   W_1(\delta_x, \delta_0) = |x|.
\]
Therefore, for $J(\mu) = D_{\mathrm{JS}}(\mu \parallel \delta_0)$, the inequality $|J(\delta_x) - J(\delta_0)| \le \alpha W_1(\delta_x, \delta_0)$ cannot hold for sufficiently small $x \ne 0$, because the left-hand side equals $\log 2$ while the right-hand side equals $\alpha |x|$. For $J(\mu) = D_{\mathrm{KL}}(\frac{1}{2}\mu + \frac{1}{2}\delta_0 \parallel \delta_0)$, the inequality will not hold for any $x \ne 0$. 

Next, we verify \cref{prop:wgan_lipschitz}. For $J(\mu) = W_1(\mu, \mu_0)$, we have $
   J(\mu) - J(\nu) = W_1(\mu, \mu_0) - W_1(\nu, \mu_0) \le W_1(\mu, \nu)
$ by the triangle inequality (see e.g., Section 6 in \citet{villani2009}). Reversing the roles of $\mu$ and $\nu$, we can conclude that $|J(\mu) - J(\nu)| \le W_1(\mu, \nu)$. The result follows from \cref{prop:wasserstein_lipschitz}.
\end{proof}

\PropPaschHausdorff*
\begin{proof}
   Observe that by the triangle inequality, \[
      \begin{aligned}
         \tilde{J}(\mu) - \tilde{J}(\nu) 
            &= \sup_{\tilde\nu} \inf_{\tilde\mu} J(\tilde\mu) + \alpha W_1(\mu, \tilde\mu) - J(\tilde\nu) - \alpha W_1(\nu, \tilde\nu) \\
            &\le \sup_{\tilde\nu} \alpha W_1(\mu, \tilde\nu) - \alpha W_1(\nu, \tilde\nu) \\
            &\le \sup_{\tilde\nu} \alpha W_1(\mu, \nu) \\
            &= \alpha W_1(\mu, \nu).
      \end{aligned}
   \] Interchanging the roles of $\mu$ and $\nu$ completes the proof.
\end{proof}

\kmcom{probably this lemma is not cited in the main body}
\begin{lemma}\label{lem:conjugate_kr}
    The convex conjugate of $\mu \mapsto \alpha ||\mu||_{\mathrm{KR}}$ is given as $\varphi \mapsto \chi\{ \norm{\varphi}_{\mathrm{Lip}} \le \alpha \}$.
\end{lemma}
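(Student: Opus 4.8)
The plan is to evaluate the convex conjugate directly from its definition, $R_1^\star(\varphi) = \sup_{\xi \in \mathcal{M}(X)} \int \varphi\,d\xi - \alpha\norm{\xi}_{\mathrm{KR}}$ (taking $\alpha > 0$, as in \eqref{eq:r1}), and to split into the two cases $\norm{\varphi}_{\mathrm{Lip}} \le \alpha$ and $\norm{\varphi}_{\mathrm{Lip}} > \alpha$.

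First I would handle the case $\norm{\varphi}_{\mathrm{Lip}} \le \alpha$ and show the supremum equals $0$. For any $\xi$ with $\norm{\xi}_{\mathrm{KR}} < \infty$, the function $\varphi/\alpha$ lies in $\mathcal{C}(X)$ with $\norm{\varphi/\alpha}_{\mathrm{Lip}} \le 1$, so it is admissible in the defining supremum of the KR norm, whence $\int\varphi\,d\xi = \alpha\int(\varphi/\alpha)\,d\xi \le \alpha\norm{\xi}_{\mathrm{KR}}$ and therefore $\int\varphi\,d\xi - \alpha\norm{\xi}_{\mathrm{KR}} \le 0$; when $\norm{\xi}_{\mathrm{KR}} = \infty$ the objective is $-\infty$. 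Since $\xi = 0$ attains the value $0$, this gives $R_1^\star(\varphi) = 0$.

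Next I would handle the case $\norm{\varphi}_{\mathrm{Lip}} > \alpha$ by exhibiting measures along which the objective diverges. Pick $x_0, x_1 \in X$ with $\varphi(x_0) - \varphi(x_1) > \alpha\norm{x_0 - x_1}_2$ (relabel if needed), and for $t > 0$ set $\xi_t = t(\delta_{x_0} - \delta_{x_1}) \in \mathcal{M}(X)$. The elementary identity $\norm{\delta_{x_0} - \delta_{x_1}}_{\mathrm{KR}} = \norm{x_0 - x_1}_2$ holds (the bound ``$\le$'' is immediate from $1$-Lipschitzness of every admissible $f$, and the admissible choice $x \mapsto \norm{x - x_1}_2$ attains it), so the objective at $\xi_t$ equals $t\bigl(\varphi(x_0) - \varphi(x_1) - \alpha\norm{x_0 - x_1}_2\bigr) \to +\infty$ as $t \to \infty$. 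Hence $R_1^\star(\varphi) = \infty$. Combining the two cases yields $R_1^\star(\varphi) = \chi\{\norm{\varphi}_{\mathrm{Lip}} \le \alpha\}$.

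Equivalently, one may argue abstractly: $\norm{\cdot}_{\mathrm{KR}}$ is by definition the support function $\sigma_C$ of the set $C = \{f \in \mathcal{C}(X) : \norm{f}_{\mathrm{Lip}} \le 1\}$, which is convex and weak-$*$ closed, being the intersection of the closed half-spaces $\langle \delta_x - \delta_y, f\rangle \le \norm{x - y}_2$ over $x, y \in X$; therefore $(\sigma_C)^\star(\varphi) = \chi\{\varphi \in C\}$, and by positive homogeneity $(\alpha\sigma_C)^\star(\varphi) = \chi\{\varphi \in \alpha C\} = \chi\{\norm{\varphi}_{\mathrm{Lip}} \le \alpha\}$. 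I do not expect a genuine obstacle here; the only mild point of care is the bookkeeping around $\norm{\xi}_{\mathrm{KR}} = +\infty$ (which happens exactly when $\xi(X) \ne 0$) and the behavior on the boundary of the Lipschitz ball, both routine, which is why the explicit two-case computation is the cleanest route.
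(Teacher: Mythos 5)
Your proof is correct, but it takes a genuinely different route from the paper's. The paper substitutes the defining supremum for $\norm{\cdot}_{\mathrm{KR}}$ into the conjugate, obtaining a nested $\sup_\mu\inf_{\norm{g}_{\mathrm{Lip}}\le\alpha}$, and then swaps the order via Sion's minimax theorem to reduce to $\inf_{\norm{g}_{\mathrm{Lip}}\le\alpha}\chi\set{f=g}$. You instead evaluate the conjugate directly in two cases: when $\norm{\varphi}_{\mathrm{Lip}}\le\alpha$ you bound the objective above by $0$ via $\int\varphi\,d\xi\le\alpha\norm{\xi}_{\mathrm{KR}}$ and note $\xi=0$ attains it; when $\norm{\varphi}_{\mathrm{Lip}}>\alpha$ you exhibit the scaled dipoles $\xi_t=t(\delta_{x_0}-\delta_{x_1})$ along which the objective tends to $+\infty$, using the identity $\norm{\delta_{x_0}-\delta_{x_1}}_{\mathrm{KR}}=\norm{x_0-x_1}_2$. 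Your route is more elementary and self-contained: it sidesteps the minimax swap entirely, and hence also the (unstated in the paper) compactness and topology hypotheses that Sion's theorem requires, which are slightly delicate here because the Lipschitz ball is not bounded in the sup norm. The paper's approach is terser on the page and the same template applies verbatim to \cref{lem:conjugate_ipm_norm}, where the paper explicitly reuses it. Your closing abstract remark that the KR norm is the support function of the weak-$*$ closed convex Lipschitz ball, so its conjugate is the indicator of that ball, is the cleanest conceptual summary and would also serve as a complete proof if fleshed out; both of your arguments are valid.
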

\begin{proof}
	The convex conjugate is \[
		\begin{aligned}
			\sup_\mu \Big[ \int f \,d\mu - \alpha ||\mu||_{\mathrm{KR}} \Big]
				&= \sup_\mu \Big[ \int f\,d\mu - \alpha \sup_{||g||_{\mathrm{Lip}}\le 1} \int g\,d\mu\Big] \\
				&= \sup_\mu\Big[ \int f\,d\mu - \sup_{||g||_{\mathrm{Lip}}\le \alpha} \int g\,d\mu\Big] \\
				&= \sup_\mu \Big[\inf_{||g||_{\mathrm{Lip}}\le \alpha}  \int f\,d\mu - \int g\,d\mu \Big] \\
				&= \inf_{||g||_{\mathrm{Lip}}\le \alpha} \sup_\mu\Big[ \int f\,d\mu - \int g\,d\mu \Big] \\
				&= \inf_{||g||_{\mathrm{Lip}}\le \alpha} \chi \{ f = g \} \\
				&= \chi \{||f||_{\mathrm{Lip}}\le \alpha \},
		\end{aligned}
   \] where Sion's minimax theorem ensures that we can swap the inf and the sup.
\end{proof}

\section{Proofs for \cref{sec:condition_d2}}

\PropNonSmoothWGAN*
\begin{proof}
First, we prove the statement for the Wasserstein GAN.
Consider $J(\mu) = W_1(\mu, \delta_0)$ evaluated at $\mu = \frac{1}{2}\delta_{-1} + \frac{1}{2}\delta_1$, a mixture of spikes at $\pm 1$. The optimal discriminator of $J$ at this mixture is the Kantorovich potential that transfers $\mu$ to $\delta_0$, which is $\Phi_\mu(x) = |x|$. The gradient of this Kantorovich potential is discontinuous at $0$, and hence not Lipschitz.

For the minimax GAN $J(\mu) = D_\mathrm{JS}(\mu \parallel \mu_0)$, let $\mu$ and $\mu_0$ be probability distributions on $\R$ whose densities are $p(x) \propto \exp(-|x|/2)$ and $p_0(x) \propto \exp(-|x|)$, respectively (i.e., Laplace distributions). Then, the optimal discriminator at $\mu$ is given as
$
    \Phi_\mu(x) = \frac{1}{2} \log \frac{p(x)}{p(x) + p_0(x)}
    = - \frac{1}{2} \log (1 + 2 \exp(-|x|/2)).
$
By elementary calculations, we can see that this function is not differentiable at $x = 0$, which implies that the minimax GAN does not satisfy (D2). For the non-saturating GAN, we obtain a similar result by swapping the role of $\mu$ and $\mu_0$.
\end{proof}

\PropIPMConvolution*
\begin{proof}
   $\tilde{J}$ is convex, proper, lower semicontinuous. Hence \[
      \tilde{J}(\mu) = (J \ic \lambda || \cdot ||_{\mathcal{F}^*})(\mu) = \sup_{\varphi} \int \varphi\,d\mu - J^\star(\varphi) - \chi\{ \tfrac{1}{\lambda}\varphi \in \mathcal{F} \}.
   \] Then by the envelope theorem, $\frac{\delta J}{\delta \mu}$ is the $\varphi$ that maximizes the right-hand side, and hence $\frac{1}{\lambda}\frac{\delta J}{\delta \mu} \in \mathcal{F}$.
\end{proof}

\begin{lemma}\label{lem:conjugate_ipm_norm}
   The convex conjugate of $\mu \mapsto \beta_1 \norm{\mu}_{\mathcal{S}^*}$ is $\varphi \mapsto \chi\{  \varphi \in \beta_1 \mathcal{S}\}$.
\end{lemma}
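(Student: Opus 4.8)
\textit{Proof proposal.} The plan is to mirror the proof of \cref{lem:conjugate_kr} almost verbatim, replacing the Lipschitz ball by the smoothness class $\mathcal{S}$ and carrying the scalar $\beta_1$ through. Writing out the convex conjugate from the definition and unfolding the dual norm, I would compute
\[
    \sup_{\mu \in \mcM(X)} \Big[ \int \varphi \, d\mu - \beta_1 \norm{\mu}_{\mathcal{S}^*} \Big]
    = \sup_{\mu} \Big[ \int \varphi\,d\mu - \sup_{g \in \beta_1 \mathcal{S}} \int g\,d\mu \Big]
    = \sup_{\mu} \inf_{g \in \beta_1 \mathcal{S}} \int (\varphi - g)\,d\mu,
\]
where the first step uses $\beta_1 \sup_{f \in \mathcal{S}} \int f\,d\mu = \sup_{g \in \beta_1 \mathcal{S}} \int g\,d\mu$ (substitute $g = \beta_1 f$), with $\beta_1 \mathcal{S}$ the convex, symmetric, $0$-containing set of $\beta_1$-smooth functions. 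The crux is then to interchange the inf and the sup via Sion's minimax theorem — the integrand $(\mu, g) \mapsto \int(\varphi - g)\,d\mu$ is affine in each argument separately and $\mcM(X)$, $\beta_1\mathcal{S}$ are convex — after which the inner problem collapses: $\sup_{\mu \in \mcM(X)} \int(\varphi - g)\,d\mu = \chi\{\varphi = g\}$, since two continuous functions that differ somewhere differ on an open set, and placing a signed point mass of arbitrarily large weight there drives the integral to $+\infty$. Finally $\inf_{g \in \beta_1\mathcal{S}} \chi\{\varphi = g\} = \chi\{\varphi \in \beta_1\mathcal{S}\}$, which is the claim.

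I expect the only real obstacle to be the rigorous justification of the minimax interchange (and, relatedly, that $\beta_1\mathcal{S}$ is closed in $\mcC(X)$, which is what makes the conjugate land exactly on this indicator). If one prefers to sidestep Sion, I would argue the two inclusions directly. For the ``$\le$'' direction: if $\varphi \in \beta_1\mathcal{S}$ then $\varphi/\beta_1 \in \mathcal{S}$ is feasible in the dual norm, so $\int\varphi\,d\mu \le \beta_1\norm{\mu}_{\mathcal{S}^*}$ for every $\mu$, and evaluating at $\mu = 0$ shows the supremum is exactly $0$. For the ``$\ge$'' direction, i.e.\ $R_2^\star(\varphi) = +\infty$ when $\varphi \notin \beta_1\mathcal{S}$: since $\beta_1\mathcal{S}$ is a closed convex subset of $\mcC(X)$, Hahn--Banach separation yields a signed measure $\nu \in \mcM(X)$ with $\int\varphi\,d\nu > \sup_{g \in \beta_1\mathcal{S}} \int g\,d\nu = \beta_1\norm{\nu}_{\mathcal{S}^*}$; replacing $\nu$ by $n\nu$ and using positive homogeneity of $\norm{\cdot}_{\mathcal{S}^*}$ sends $\int\varphi\,d(n\nu) - \beta_1\norm{n\nu}_{\mathcal{S}^*} \to +\infty$. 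The closedness of $\beta_1\mathcal{S}$ in $\mcC(X)$ used here would follow from an Arzel\`a--Ascoli argument: a $\beta_1$-smooth function bounded on the compact set $X$ has uniformly bounded gradient, so a uniformly convergent sequence in $\beta_1\mathcal{S}$ has equicontinuous, uniformly bounded gradients, a subsequence of which converges uniformly to a $\beta_1$-Lipschitz field that must be the gradient of the limit.

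Either way the computation is short. Given the paper's style, I would present the Sion-based version and simply remark that the argument is identical to that of \cref{lem:conjugate_kr}, with the constraint $\norm{\cdot}_{\mathrm{Lip}} \le 1$ replaced by membership in $\mathcal{S}$ and the feasible set rescaled by $\beta_1$.
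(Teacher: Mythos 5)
Your primary (Sion-based) route is exactly the paper's proof: the paper literally states that the proof is ``nearly identical to the proof of \cref{lem:conjugate_kr},'' which is the same unfold-the-dual-norm, swap-inf-and-sup-via-Sion, collapse-to-an-indicator computation you wrote out. The alternative Hahn--Banach/Arzel\`a--Ascoli argument you sketch as a fallback is extra detail the paper omits, but it is not the route the paper takes.
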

\begin{proof}
    The proof of \cref{lem:conjugate_ipm_norm} is nearly identical to the proof of \cref{lem:conjugate_kr}.
\end{proof}

\PropSmoothActivation*
\begin{proof}
   In this section, let $\ell(f)$ and $s(f)$ denote the Lipschitz constant of $f$ and $\nabla f$ respectively. This inequality holds \begin{equation}
   	s(f \circ g) \le s(f) \ell(g)^2 + \ell(f) s(g), \label{eq:lipschitz_chain_rule}
   \end{equation} since \[
      \begin{aligned}
         || d (f \circ g)(x) - d (f \circ g)(y)||
            &= || df(g(x)) dg(x) - df(g(y)) dg(y)|| \\
            &= || df(g(x)) dg(x) - df(g(y)) dg(x) + df(g(y)) dg(x) - df(g(y)) dg(y)|| \\
            &\le ||df(g(x)) - df(g(y))||\,||dg(x)|| + ||df(g(y))||\,||dg(x) - dg(y)|| \\
            &\le s(f) ||g(x) - g(y)|| \ell(g) + \ell(f) s(g) ||x - y|| \\
            &\le \big(s(f) \ell(g)^2 + \ell(f) s(g)\big) ||x - y||.
      \end{aligned}
   \]

   Let $\sigma$ be an elementwise activation function with $\ell(\sigma)=s(\sigma)=1$, and let $A$ be a linear layer with spectral norm $1$. Then $\ell(A) = 1$ and $s(A) = 0$, so \[
      \begin{aligned}
         \ell(\sigma \circ A) &\le \ell(\sigma) \,\ell(A) = 1\\
      s(\sigma \circ A) &\le s(\sigma)\ell(A)^2 + \ell(\sigma) s(A) = 1.
      \end{aligned}
   \]
   We note that we can use the same value of $s(\sigma)$ whether we consider $\sigma : \R \to \R$ or elementwise as $\sigma : \R^d \to \R^d$, since for an elementwise $\sigma$, we have \[
      \begin{aligned}
         ||d\sigma(x) -d\sigma(y)||_2
            &= ||\operatorname{diag}\sigma'(x) -  \operatorname{diag}\sigma'(y)||_2 \\
            &= ||\operatorname{diag}(\sigma'(x) - \sigma'(y))||_2 \\
            &= \max_i |\sigma'(x_i) - \sigma'(y_i)| \\
            &\le \max_i s(\sigma) |x_i - y_i| \\
            &= s(\sigma) ||x - y||_\infty \\
            &\le s(\sigma)||x - y||_2.
      \end{aligned}
   \]

   We apply the inequality \eqref{eq:lipschitz_chain_rule} recursively for all $k$ layers to obtain that the entire network is $k$-smooth.
\end{proof}

\begin{lemma} \label{lem:r3_ge_rkhs}
	Let $\mathcal{H}$ be an RKHS with the Gaussian kernel. Then $|| \hat\mu ||_{\mathcal{H}} \le \frac{\sqrt{2}}{\sigma^2} || \mu ||_{\mathcal{S}^*}$.
\end{lemma}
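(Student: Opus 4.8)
The plan is to turn $\|\hat\mu\|_{\mathcal{H}}$ into a supremum over a ball of functions via the reproducing property, and then compare that ball against the class $\mathcal{S}$ of $1$-smooth functions. For $f\in\mathcal{H}$, integrating the reproducing identity $f(x)=\langle f, K(\cdot,x)\rangle_{\mathcal{H}}$ against $\mu$ (the Bochner integral $\hat\mu=\int K(\cdot,x)\,\mu(dx)$ exists since $X$ is compact) gives $\langle f,\hat\mu\rangle_{\mathcal{H}}=\int f\,d\mu$, so $\|\hat\mu\|_{\mathcal{H}}=\sup_{\|f\|_{\mathcal{H}}\le 1}\int f\,d\mu$. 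It therefore suffices to show that every $f$ with $\|f\|_{\mathcal{H}}\le 1$ is $\tfrac{\sqrt 2}{\sigma^2}$-smooth: then $\tfrac{\sigma^2}{\sqrt 2}f\in\mathcal{S}$, whence $\int f\,d\mu=\tfrac{\sqrt 2}{\sigma^2}\int\!\big(\tfrac{\sigma^2}{\sqrt 2}f\big)\,d\mu\le\tfrac{\sqrt 2}{\sigma^2}\|\mu\|_{\mathcal{S}^*}$, and taking the supremum over such $f$ gives the claim. Equivalently, the unit ball of $\mathcal{H}$, viewed inside $\mathcal{C}(X)$, lies in $\tfrac{\sqrt 2}{\sigma^2}\mathcal{S}$, and passing to support functions (dual norms) yields the inequality.

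To control the smoothness of a norm-one RKHS function I would use that, for the Gaussian kernel, the derivative-evaluation functionals $f\mapsto\partial_i f(z)$ and $f\mapsto\partial_i\partial_j f(z)$ are bounded on $\mathcal{H}$, with representers $\partial_{x_i}K(\cdot,z)$ and $\partial_{x_i}\partial_{x_j}K(\cdot,z)$; this is standard and in particular shows every $f\in\mathcal{H}$ is $C^2$ (indeed $C^\infty$), so that the Lipschitz constant of $\nabla f$ equals $\sup_z\|\nabla^2 f(z)\|_2$. Differentiating the reproducing identity twice along a unit vector $v$ gives $v^\top\nabla^2 f(z)\,v=\langle f,\ \partial^2_{vv}K(\cdot,z)\rangle_{\mathcal{H}}$, hence $|v^\top\nabla^2 f(z)\,v|\le\|f\|_{\mathcal{H}}\,\|\partial^2_{vv}K(\cdot,z)\|_{\mathcal{H}}$ by Cauchy--Schwarz. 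By translation invariance of the Gaussian kernel this RKHS norm does not depend on $z$, and by rotation invariance it does not depend on the unit vector $v$; its square is the mixed fourth derivative $\partial_s^2\partial_t^2 K\big(z+sv,\,z+tv\big)\big|_{s=t=0}$, which collapses to $\phi''''(0)$ for the one-variable Gaussian $\phi(u)=e^{-u^2/(2\sigma^2)}$ — an elementary computation yielding a constant of the form $c/\sigma^2$. Taking the supremum over unit $v$ then bounds $\|\nabla^2 f(z)\|_2$, hence the Lipschitz constant of $\nabla f$, by $\|f\|_{\mathcal{H}}\cdot(c/\sigma^2)$, and one checks $c\le\sqrt 2$ under the paper's normalization of the Gaussian kernel.

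The main obstacle is getting the constant exactly right. The crude ``bound $\|\nabla^2 f(z)\|_2$ pointwise'' estimate above already gives some explicit $c/\sigma^2$, but matching the stated $\sqrt 2/\sigma^2$ requires care with the kernel normalization and, if necessary, working directly with the difference $\partial^2_{vv}K(\cdot,z)-\partial^2_{vv}K(\cdot,z')$ and computing $\|\partial^2_{vv}K(\cdot,z)-\partial^2_{vv}K(\cdot,z')\|_{\mathcal{H}}^2$ from the Gaussian, so that the bound is genuinely proportional to $\|z-z'\|^2$ rather than obtained from a uniform Hessian estimate. A secondary, routine point is to justify the boundedness of the first- and second-order derivative functionals on the Gaussian RKHS and the consequent $C^2$ regularity of its elements — e.g.\ by dominated convergence using the rapid decay of $w\mapsto\partial^\alpha_x K(w,x)$ — which is what makes the reduction in the first paragraph legitimate.
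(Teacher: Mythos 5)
Your reduction is the same as the paper's: write $\|\hat\mu\|_{\mathcal{H}}=\sup_{\|f\|_{\mathcal{H}}\le 1}\int f\,d\mu$ and show that the RKHS unit ball sits in a scaled copy of $\mathcal{S}$, via the derivative-reproducing identities plus Cauchy--Schwarz. The implementation differs slightly: the paper applies Cauchy--Schwarz coordinatewise to the gradient representers $\partial_{x_i}K(x,\cdot)-\partial_{x_i}K(y,\cdot)$ and then sums over $i$, whereas you apply it to the single Hessian representer $\partial^2_{vv}K(\cdot,z)$ and use $\mathrm{Lip}(\nabla f)=\sup_z\|\nabla^2 f(z)\|_2$. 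Your route is cleaner and, unlike the coordinatewise one, does not pick up a spurious dimension dependence from the $d$-fold sum.

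The genuine gap is the constant, which you flagged but did not resolve correctly. Your own calculation gives $c=\sqrt{\phi''''(0)}=\sqrt{3}/\sigma^2$ with $\phi(u)=e^{-u^2/(2\sigma^2)}$, and this is \emph{tight}: taking $f$ proportional to the representer $\partial^2_{vv}K(\cdot,z)$ makes Cauchy--Schwarz an equality, so on the unit sphere of $\mathcal{H}$ one has $\sup_{z,\|v\|=1}|v^\top\nabla^2 f(z)v|=\sqrt 3/\sigma^2$ and no better. The closing claim that ``one checks $c\le\sqrt 2$'' therefore does not hold; the RKHS unit ball sits in $\tfrac{\sqrt 3}{\sigma^2}\mathcal{S}$, not in $\tfrac{\sqrt 2}{\sigma^2}\mathcal{S}$. (For the record, the paper's own computation of $\Gamma(x,y)$ drops the $1-K(x,y)$ contribution to $\|\partial_{x_i}K(x,\cdot)-\partial_{x_i}K(y,\cdot)\|_{\mathcal{H}}^2$, which is how it arrives at $\sqrt 2$; the sharp constant is $\sqrt 3/\sigma^2$. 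Since the lemma is used in Proposition~\ref{prop:super_gan_invariance} only to furnish some finite $c$ with $R_2(\xi)\ge c\|\hat\xi\|_{\mathcal{H}}$, the downstream argument is unaffected.) Finally, the fallback idea of estimating $\|\partial^2_{vv}K(\cdot,z)-\partial^2_{vv}K(\cdot,z')\|_{\mathcal{H}}$ is not needed and would control a third-order quantity; the pointwise Hessian bound you already set up is precisely the right object.
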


\begin{proof}
It follows from $\frac{\partial f(x)}{\partial x_i} = \langle f,\frac{\partial K(x,\cdot)}{\partial x_i}\rangle $ that, 
for $f\in \mathcal{H}$,
\begin{align*}
	\norm{\nabla f(x) - \nabla f(y)}^2 
    	&= \sum_{i=1}^d \left| \frac{\partial f(x)}{\partial x_i} - \frac{\partial f(y)}{\partial y_i}    \right|^2 \\
		&\le \sum_{i=1}^d \ \norm{f}_{\mathcal{H}}^2 \Norm{ \frac{\partial K(x,\cdot)}{\partial x_i} - \frac{\partial K(y,\cdot)}{\partial y_i} }_{\mathcal{H}}^2 \\
		&= \norm{f}_{\mathcal{H}}^2 \sum_{i=1}^d \left\{ \frac{\partial^2 K(x,\tilde{x})}{\partial x_i\partial\tilde{x}_i} -2 \frac{\partial^2 K(x,y)}{\partial x_i\partial y_i}  + 
		\frac{\partial^2 K(y,\tilde{y})}{\partial y_i\partial\tilde{y}_i}\right\} \\
		&\le \norm{f}_{\mathcal{H}}^2 \cdot (\Gamma(x, y) + \Gamma(y,x)),
\end{align*}
where $\tilde{x}$ and $\tilde{y}$ denote copies of $x$ and $y$, and \[
	\Gamma(x,y) = \sum_{i=1}^d \left| \frac{\partial^2 K(x,\tilde{x})}{\partial x_i\partial\tilde{x}_i} - \frac{\partial^2 K(x,y)} {\partial x_i\partial\tilde{x}_i}\right|.
\]
From \eqref{eq:gaussian_kernel_computation}, we find that for a Gaussian kernel, \[
	\Gamma(x,y) = \frac{1}{\sigma^4} \exp\!\Big({-\frac{\norm{x-y}^2}{2\sigma^2}}\Big)\, \norm{x-y}^2,
\] and thus, \[
	\norm{\nabla f(x) - \nabla f(y)} \le \frac{\sqrt{2}}{\sigma^2} \norm{f}_{\mathcal{H}} \norm{x-y}. 
\]

This implies that 
\[
	\norm{\hat{\mu}}_{\mathcal{H}} = \sup_{f\in \mathcal{H},\, \norm{f}_{\mathcal{H}} \le 1} \int f \,d\mu
	\le \sup_{\norm{f}_{\mathcal{S}}\le \frac{\sqrt{2}}{\sigma^2}} \int f\, d\mu = \frac{\sqrt{2}}{\sigma^2} \norm\mu_{\mathcal{S}^*}.
\]
\end{proof}

\section{Proofs for \cref{sec:condition_d3}}

\begin{lemma}
	The convex conjugate of $\mu \mapsto \frac{\lambda}{2} ||\mu||^2_{\mathrm{KR}}$ is $f \mapsto \frac{1}{2\lambda}||f||_{\mathrm{Lip}}^2$. \label{lem:kr_norm_sq_conjugate}
	\cc{I just pasted this in; haven't checked it}
\end{lemma}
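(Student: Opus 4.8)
The plan is to reduce the statement to two ingredients: (i) the elementary fact that the conjugate of one-half of a squared norm is one-half of the squared dual norm, after a scalar rescaling, and (ii) Kantorovich--Rubinstein duality, which identifies $\norm{\cdot}_{\mathrm{Lip}}$ as the norm dual to $\norm{\cdot}_{\mathrm{KR}}$ in the pairing between $\mcM(X)$ and $\mcC(X)$.

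First I would write the conjugate out directly, $\big(\tfrac{\lambda}{2}\norm{\cdot}_{\mathrm{KR}}^2\big)^\star(f) = \sup_{\mu \in \mcM(X)} \big[ \int f\,d\mu - \tfrac{\lambda}{2}\norm{\mu}_{\mathrm{KR}}^2 \big]$, and observe that since $X$ is compact, every $\mu$ with $\mu(X) \neq 0$ has $\norm{\mu}_{\mathrm{KR}} = \infty$ (adding a constant to the test function shifts $\int f\,d\mu$ without changing its Lipschitz constant), hence contributes $-\infty$, whereas every $\mu$ with $\mu(X) = 0$ has finite KR norm (bounded by $2\,\mathrm{diam}(X)$ times its half-mass). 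So the supremum is effectively over the subspace $\mcM_0(X) := \set{\mu : \mu(X) = 0}$, on which $\norm{\cdot}_{\mathrm{KR}}$ is a genuine norm. Writing a nonzero such $\mu$ as $\mu = t\nu$ with $t = \norm{\mu}_{\mathrm{KR}} > 0$ and $\norm{\nu}_{\mathrm{KR}} = 1$ (and $\mu = 0$ as the case $t = 0$), the objective becomes $\sup_{t \ge 0}\sup_{\norm{\nu}_{\mathrm{KR}} = 1} \big[ t \int f\,d\nu - \tfrac{\lambda}{2}t^2 \big]$; swapping the suprema and maximizing the scalar quadratic over $t \ge 0$ gives $\tfrac{1}{2\lambda}\big( \sup_{\norm{\nu}_{\mathrm{KR}} = 1} (\int f\,d\nu)^+ \big)^2$, which — since the unit sphere is symmetric under $\nu \mapsto -\nu$ and the unit ball contains $0$ — equals $\tfrac{1}{2\lambda}\big( \sup_{\norm{\nu}_{\mathrm{KR}} \le 1} \int f\,d\nu \big)^2$. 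It then remains to identify $\sup_{\norm{\nu}_{\mathrm{KR}} \le 1} \int f\,d\nu = \norm{f}_{\mathrm{Lip}}$: by definition the KR unit ball is the polar of $\set{g \in \mcC(X) : \norm{g}_{\mathrm{Lip}} \le 1}$, the latter set is convex and closed in $\mcC(X)$ (uniformly Lipschitz functions pinned at a point are compact by Arzel\`a--Ascoli on the compact set $X$), so by the bipolar theorem the support function of the KR unit ball is the Minkowski gauge of the Lipschitz unit ball, namely $\norm{f}_{\mathrm{Lip}}$. Equivalently, one may invoke \cref{lem:conjugate_kr} with $\alpha = 1$, which gives $(\norm{\cdot}_{\mathrm{KR}})^\star = \chi\set{\norm{\cdot}_{\mathrm{Lip}} \le 1}$, and then apply Fenchel--Moreau. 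Combining the two displays yields $\big(\tfrac{\lambda}{2}\norm{\cdot}_{\mathrm{KR}}^2\big)^\star(f) = \tfrac{1}{2\lambda}\norm{f}_{\mathrm{Lip}}^2$.

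The only real subtlety, and the step I would state most carefully, is the bookkeeping around constant functions: $\norm{\cdot}_{\mathrm{Lip}}$ is only a seminorm, so strictly speaking $\norm{\cdot}_{\mathrm{KR}}$ and $\norm{\cdot}_{\mathrm{Lip}}$ are dual as a norm and a seminorm on $\mcM_0(X)$ and $\mcC(X)/\R$ respectively. This is exactly why it is important that the conjugate only ``sees'' $\mcM_0(X)$ — the part of $\mcM(X)$ with nonzero total mass is annihilated by the infinite KR norm — so no finiteness or duality obstruction from constants ever arises. Everything else (the scalar optimization, the symmetrization step, and the bipolar argument) is routine, and the case $\lambda > 0$ is all that is needed.
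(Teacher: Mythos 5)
Your proof is correct, and it takes a genuinely different route from the paper's for the lower bound. Both arguments share the same upper-bound step — reduce to the scalar optimization $\sup_{t \ge 0}[\norm{f}_{\mathrm{Lip}}\, t - \tfrac{\lambda}{2}t^2] = \tfrac{1}{2\lambda}\norm{f}_{\mathrm{Lip}}^2$ — but they diverge on how to show the bound is attained. The paper's proof is constructive: for non-constant $f$ it exhibits explicit near-maximizing signed measures $\mu_{x,y} = \frac{\norm{f}_{\mathrm{Lip}}^2}{\lambda(f(x)-f(y))}(\delta_x - \delta_y)$, evaluates $\int f\,d\mu_{x,y}$ and $\norm{\mu_{x,y}}_{\mathrm{KR}}$ directly, and takes a supremum over $(x,y)$ to recover $\tfrac{1}{2\lambda}\norm{f}_{\mathrm{Lip}}^2$. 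You instead restrict the conjugate's supremum to the zero-mass subspace $\mcM_0(X)$ (observing that $\norm{\mu}_{\mathrm{KR}} = \infty$ off it), symmetrize, and then invoke polar duality in the $(\mcM(X), \mcC(X))$ pairing — the KR unit ball is the polar of the Lipschitz unit ball, and by the bipolar theorem the support function of the former is the gauge of the latter, i.e.\ $\norm{\cdot}_{\mathrm{Lip}}$. What your approach buys is a clean handling of the constant-function edge case: the observation that measures with nonzero total mass are annihilated by the infinite KR norm makes it transparent why the seminorm defect of $\norm{\cdot}_{\mathrm{Lip}}$ never causes trouble, where the paper passes over this silently (the inequality $\int f\,d\mu \le \norm{f}_{\mathrm{Lip}}\norm{\mu}_{\mathrm{KR}}$ involves a $0\cdot\infty$ when $f$ is constant and $\mu(X)\neq 0$, though the quadratic penalty still saves the argument). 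The cost is the functional-analytic overhead of the bipolar theorem and the closedness of $\set{g : \norm{g}_{\mathrm{Lip}} \le 1}$ in the relevant weak topology, which the paper's explicit two-point measures sidestep entirely. One minor caveat: your parenthetical alternative — citing \cref{lem:conjugate_kr} and ``applying Fenchel--Moreau'' — does not quite close the loop by itself, since Fenchel--Moreau gives back $(\chi\set{\norm{\cdot}_{\mathrm{Lip}}\le 1})^\star = \norm{\cdot}_{\mathrm{KR}}$ rather than the support function of the KR ball that you actually need; passing from one to the other is exactly the polar-duality step, so it is not really a shortcut around the bipolar argument.
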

\begin{proof}
	This proof is generalized from the finite-dimensional case \citep{boyd2004convex}. We can bound the convex conjugate from above by \[
		\begin{aligned}
			\sup_\mu \Big[ \int f \,d\mu - \frac{\lambda}{2} ||\mu||_{\mathrm{KR}}^2 \Big]
				&\le \sup_\mu \Big[||f||_{\mathrm{Lip}} ||\mu||_{\mathrm{KR}} - \frac{\lambda}{2}||\mu||_{\mathrm{KR}}^2\Big] \\
				&\le \sup_{z \in \R} \Big[||f||_{\mathrm{Lip}} z - \frac{\lambda}{2}z^2\Big] \\
				&= ||f||_{\mathrm{Lip}} \cdot \frac{||f||_{\mathrm{Lip}}}{\lambda} - \frac{\lambda}{2} \cdot \frac{||f||_{\mathrm{Lip}}^2}{\lambda^2} \\
				&= \frac{1}{2\lambda}||f||_{\mathrm{Lip}}^2.
		\end{aligned}
	\]

	If $f$ is constant, then we may choose $\mu = 0$ to see that \[
		\sup_\mu \Big[ \int f \,d\mu - \frac{\lambda}{2} ||\mu||_{\mathrm{KR}}^2 \Big]
			\ge 0 = \frac{1}{2\lambda}||f||_{\mathrm{Lip}}^2,
	\] and we are done.

	Otherwise, for $f(x) \ne f(y)$, let $\mu_{x,y}$ be the signed measure given by \[
		\mu_{x,y} = \frac{||f||_{\mathrm{Lip}}^2}{\lambda (f(x) - f(y))}(\delta_{x} - \delta_{y}).
	\]
	Note that \[
		\begin{aligned}
			||\mu_{x,y}||_{\mathrm{KR}} 
				&= \sup_{||g||_{\mathrm{Lip}} \le 1} \int g\,d\mu_{x,y} \\
				&= \frac{||f||_{\mathrm{Lip}}^2}{\lambda( f(x) - f(y))}\sup_{||g||_{\mathrm{Lip}} \le 1} \Big(g(x) - g(y)\Big) \\
				&= \frac{||f||_{\mathrm{Lip}}^2}{\lambda(f(x) - f(y))} \cdot d(x,y) \\
		\end{aligned}
	\] and \[
		\int f\,d\mu_{x,y} = \frac{||f||_{\mathrm{Lip}}^2}{\lambda (f(x) - f(y))} \Big(f(x) - f(y)\Big) = \frac{1}{\lambda} ||f||_{\mathrm{Lip}}^2.
	\]

	Then \[
		\begin{aligned}
			\sup_\mu \Big[ \int f \,d\mu - \frac{\lambda}{2} ||\mu||_{\mathrm{KR}}^2 \Big] 
				&\ge \sup_{f(x)\ne f(y)} \Big[ \int f \,d\mu_{x,y} - \frac{\lambda}{2} ||\mu_{x,y}||_{\mathrm{KR}}^2 \Big] \\
				&= \sup_{f(x)\ne f(y)} \Big[ \frac{1}{\lambda} ||f||_{\mathrm{Lip}}^2 - \frac{\lambda}{2} \Big( \frac{1}{\lambda}||f||_{\mathrm{Lip}}^2 \frac{d(x,y)}{f(x)-f(y)} \Big)^2 \Big] \\
				&= \frac{1}{\lambda} ||f||_{\mathrm{Lip}}^2 - \frac{\lambda}{2} \Big( \frac{1}{\lambda}||f||_{\mathrm{Lip}}^2 \frac{1}{||f||_{\mathrm{Lip}}} \Big)^2 \\
				&= \frac{1}{2\lambda}||f||_{\mathrm{Lip}}^2.
		\end{aligned}
	\]
\end{proof}

\PropVariationalSmoothness*
\begin{proof}
    Recall the definition of the Bregman divergence:
    \[
        \mathfrak{D}_J(\nu, \mu)
        := J(\nu) - J(\mu) - \int \Phi_\mu(x)\, d(\nu - \mu).
    \]
    \cref{prop:variational_smoothness_bregman} claims that the optimal discriminator $\Phi_\mu$ is $\beta_2$-smooth as a function of $\mu$ if and only if
    \begin{equation}
        \forall \mu, \nu \in \mcM(X), \quad
        J(\nu) - J(\mu) - \int \Phi_\mu(x)\, d(\nu - \mu)
        \le \frac{\beta_2}{2} \norm{\mu - \nu}_\mathrm{KR}^2.
        \label{eq:prop_variational_1}
    \end{equation}

	This proof is generalized from the finite-dimensional case \citep{zhou2018fenchel,sidfordmse213}. Suppose that $\norm{\nabla \Phi_\mu(x) - \nabla \Phi_\nu(x)}_2 \le \beta_2 \norm{\mu - \nu}_\mathrm{KR}$ holds for all $x$. Let $\mu_t$ ($0 \le t \le 1$) be defined as a mixture between $\mu_t = (1 - t) \mu + t \nu$ so that
	\[
	    \mu_0 = \mu,\quad
	    \mu_1 = \nu,\quad \text{and} \quad
	    \norm{\mu_t - \mu_s}_\mathrm{KR} = |t - s| \cdot \norm{\mu - \nu}_\mathrm{KR}.
	\]
	Then, we have
	\begin{align*}
	    \left | \mathfrak{D}_J(\nu, \mu) \right| & 
	    = \left| \int_0^1 \frac{d}{dt}J(\mu_t)\, dt - \int \Phi_\mu\, d(\nu - \mu) \right|\\
	    & = \left| \int_0^1 \int \Phi_{\mu_t}\, d(\nu - \mu) \, dt - \int \Phi_\mu\, d(\nu - \mu) \right| \\
	    & = \left| \int_0^1 \int (\Phi_{\mu_t} - \Phi_{\mu}) \, d(\nu - \mu) \, dt \right| \\
	    & \le \int_0^1 \Norm{\Phi_{\mu_t} - \Phi_{\mu}}_\mathrm{Lip} \norm{\mu - \nu}_\mathrm{KR}\, dt\\
	    & \le \int_0^1 \beta_2 \norm{\mu_t - \mu}_\mathrm{KR} \norm{\mu - \nu}_\mathrm{KR} \, dt \\
	    & \le \int_0^1 t \beta_2  \norm{\mu - \nu}_\mathrm{KR}^2 \, dt \\
	    & = \frac{\beta_2}{2} \norm{\mu - \nu}_\mathrm{KR}^2.
	\end{align*}
	Since the choice of $\mu$ and $\nu$ is arbitrary, we have proved \eqref{eq:prop_variational_1} by assuming (D2).
	
	Next, we move on to prove the converse.
	Before proceeding, note that the Bregman divergence $\mathfrak{D}_J(\nu, \mu)$ is always non-negative. In fact, for any $\mu, \nu \in \mcM(X)$, we have
	\begin{align*}
	    \int \Phi_\mu \, d(\nu - \mu) &
	    = \underbrace{\int \Phi_\mu \, d\nu - J^\star(\Phi_\mu)}_{\le J(\nu)} - \underbrace{\left[ \int \Phi_\mu \, d\mu - J^\star(\Phi_\mu) \right]}_{= J(\mu)}\\
        & \le J(\nu) - J(\mu),
	\end{align*}
	which implies $\mathfrak{D}_J(\nu, \mu) \ge 0$.

	Choose $\xi \in \mcM(X)$ arbitrarily. If $\xi = \mu$, the inequality $\norm{\nabla \Phi_\mu - \nabla \Phi_\xi}_2 \leq \beta_2 \norm{\xi - \mu}_\mathrm{KR}$ is trivial, so we assume $\xi \neq \mu$ below.
	By the non-negativity of the Bregman divergence, we have
	\begin{align}
	    0 & \le \mathfrak{D}_J(\nu, \mu) \nonumber\\
	    & = J(\nu) - \left[ J(\mu) + \int \Phi_\mu\,d(\nu - \mu) \right] \nonumber\\
	    & \le \left[
	        J(\xi) + \int \Phi_\xi \,d(\nu - \xi) + \frac{\beta_2}{2}||\xi - \nu||_{\mathrm{KR}}^2
	    \right]
	    - \left[ J(\mu) + \int \Phi_\mu\,d(\nu - \mu) \right] \nonumber\\
	    & = J(\xi) - J(\mu) - \int \Phi_\mu \, d(\xi - \mu)
	    + \int [\Phi_\xi - \Phi_\mu]\, d(\nu - \xi) + \frac{\beta_2}{2}||\xi - \nu||_{\mathrm{KR}}^2. \label{eq:prop_variational_2}
	\end{align}
	Since the above inequality holds for all $\nu \in \mathcal{M}(X)$, the last expression is still non-negative if we take the infimum over all $\nu$. In particular, 
	\begin{align*}
	    & \inf_{\nu \in \mcM(X)} \int [\Phi_\xi - \Phi_\mu]\, d(\nu - \xi) + \frac{\beta_2}{2}||\xi - \nu||_{\mathrm{KR}}^2 \\
	    & \qquad = - \sup_{\nu \in \mcM(X)} \left[
	        \int [\Phi_\mu - \Phi_\xi]\, d(\nu - \xi) - \frac{\beta_2}{2}||\xi - \nu||_{\mathrm{KR}}^2
	    \right] \\
	    & \qquad = - \sup_{\zeta \in \mcM(X)} \left[
	        \int [\Phi_\mu - \Phi_\xi]\, d\zeta - \frac{\beta_2}{2}||\zeta||_{\mathrm{KR}}^2
	    \right] \\
	    & \qquad = - \frac{1}{2\beta_2} \Norm{\Phi_\mu - \Phi_\xi}_\mathrm{Lip}^2,
	\end{align*}
	using \cref{lem:kr_norm_sq_conjugate} for the last equality.
	
	Continuing from \eqref{eq:prop_variational_2}, we have \[
		0 \le J(\xi) - J(\mu) - \int \Phi_\mu \, d(\xi - \mu) - \frac{1}{2\beta_2} \Norm{\Phi_\mu - \Phi_\xi}_\mathrm{Lip}^2.
	\]
	Swapping the roles of $\mu$ and $\xi$, we obtain a similar inequality. Adding both sides of thus obtained two inequalities, we obtain \[
		0 \le \int [\Phi_\xi - \Phi_\mu] \, d(\xi - \mu) - \frac{1}{\beta_2} \Norm{\Phi_\mu - \Phi_\xi}_\mathrm{Lip}^2.
	\]
	Finally, we have \[
	    \Norm{\Phi_\mu - \Phi_\xi}_\mathrm{Lip}^2
	    \le \beta_2 \int [\Phi_\xi - \Phi_\mu] \, d(\xi - \mu)
	    \le \beta_2 \Norm{\Phi_\mu - \Phi_\xi}_\mathrm{Lip} \norm{\xi - \mu}_\mathrm{KR},
	\]
	and hence we have the desired result: \[
		\Norm{\Phi_\mu - \Phi_\xi}_\mathrm{Lip} \le \beta_2 \norm{\xi - \mu}_\mathrm{KR}.
	\]
\end{proof}

\begin{lemma} \label{lem:gaussian_mmd_smooth}
    Let $K(x,y) = \exp({-\frac{||x-y||^2}{2\sigma^2}})$. Then $\mathrm{MMD}_K^2(\mu, \nu) \le  \frac{1}{\sigma^2} || \mu - \nu||^2_{\mathrm{KR}}$.
\end{lemma}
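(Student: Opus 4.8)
The plan is to exploit the two dual (IPM-style) representations of the quantities involved: $\mathrm{MMD}_K(\mu,\nu) = \norm{\hat\mu - \hat\nu}_{\mathcal{H}} = \sup_{\norm{f}_{\mathcal{H}} \le 1} \int f\, d(\mu - \nu)$, and $\norm{\mu - \nu}_{\mathrm{KR}} = \sup_{\norm{g}_{\mathrm{Lip}} \le 1} \int g\, d(\mu - \nu)$. Once we show that every $f$ in the RKHS unit ball is $\tfrac{1}{\sigma}$-Lipschitz, the first supremum is taken over a subset of $\tfrac{1}{\sigma}$-Lipschitz functions, so $\mathrm{MMD}_K(\mu,\nu) \le \tfrac{1}{\sigma}\norm{\mu - \nu}_{\mathrm{KR}}$, and squaring both sides gives the claim. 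This mirrors the structure of the argument in \cref{lem:r3_ge_rkhs}, only here we need a first-order (Lipschitz) bound on $f$ rather than a second-order bound on $\nabla f$.

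The key step is therefore the Lipschitz bound on members of the unit ball. First I would use the reproducing property to write, for $f \in \mathcal{H}$ and any $x, y$,
\[
    |f(x) - f(y)| = |\langle f,\ K(x,\cdot) - K(y,\cdot)\rangle_{\mathcal{H}}| \le \norm{f}_{\mathcal{H}}\,\norm{K(x,\cdot) - K(y,\cdot)}_{\mathcal{H}}.
\]
Then I would compute $\norm{K(x,\cdot) - K(y,\cdot)}_{\mathcal{H}}^2 = K(x,x) - 2K(x,y) + K(y,y) = 2 - 2\exp(-\tfrac{\norm{x-y}^2}{2\sigma^2})$ using the reproducing property again, and bound this using the elementary inequality $1 - e^{-t} \le t$ for $t \ge 0$ to get $\norm{K(x,\cdot) - K(y,\cdot)}_{\mathcal{H}}^2 \le \tfrac{\norm{x-y}^2}{\sigma^2}$. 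Combining, $|f(x) - f(y)| \le \tfrac{1}{\sigma}\norm{f}_{\mathcal{H}}\norm{x-y}$, so $\norm{f}_{\mathrm{Lip}} \le \tfrac{1}{\sigma}$ whenever $\norm{f}_{\mathcal{H}} \le 1$.

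I do not anticipate a real obstacle here; the only thing to be slightly careful about is that the inequality goes in the correct direction — enlarging the feasible set of the supremum only increases it — so that the RKHS ball being contained in the scaled Lipschitz ball yields $\mathrm{MMD}_K \le \tfrac1\sigma\norm{\cdot}_{\mathrm{KR}}$ and not the reverse. One might also remark, for completeness, that the compactness of $X$ ensures the embedded kernel means $\hat\mu, \hat\nu$ lie in $\mathcal{H}$ so all the quantities are finite, but this is routine.
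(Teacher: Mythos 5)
Your proof is correct, and it takes a genuinely different route from the paper's. The paper bounds $\mathrm{MMD}_K^2(\mu,\nu) = \int K(x,y)\,(\mu-\nu)(dx)\,(\mu-\nu)(dy)$ by peeling off the KR norm twice, which requires controlling the ``bi-Lipschitz'' constant $c = \sup_{x\ne x',\,y\ne y'}\frac{K(x,y)-K(x,y')-K(x',y)+K(x',y')}{\|x-x'\|\,\|y-y'\|}$; this is then bounded by $\sup_{x,y}\|\nabla_x\nabla_y K(x,y)\|_{\mathrm{op}}$, whose evaluation to $\frac{1}{\sigma^2}$ requires computing the eigenvalues of a rank-one-plus-identity matrix. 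Your approach instead works entirely on the dual (IPM) side: you show the RKHS unit ball is contained in the $\frac{1}{\sigma}$-Lipschitz ball by applying Cauchy--Schwarz to the reproducing property and the elementary bound $1 - e^{-t}\le t$, and then the inequality between the two IPMs is immediate from inclusion of the function classes. The constants match ($\frac{1}{\sigma}$ at first order becomes $\frac{1}{\sigma^2}$ after squaring), your argument avoids the eigenvalue computation entirely, and, as you note, it mirrors the structure of \cref{lem:r3_ge_rkhs} one derivative lower, which makes the two lemmas pleasingly parallel. One small caveat worth remembering: the lemma is meaningful only on measures of equal total mass (otherwise $\|\cdot\|_{\mathrm{KR}}$ as defined in the paper, a sup over all $1$-Lipschitz functions, is $+\infty$), but both proofs share this restriction and it is satisfied in every application in the paper.
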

\begin{proof}
   First, we note that \[
      \begin{aligned} 
         \mathrm{MMD}_K^2(\mu, \nu) 
            &= \int K(x,y)\, (\mu - \nu)(dx) \,(\mu - \nu)(dy) \\
            &\le \Big[ \sup_{x\ne x'} \frac{\int (K(x,y) - K(x',y))\,(\mu-\nu)(dy)}{d(x,x')}  \Big]\,||\mu - \nu||_{\mathrm{KR}} \\
            &\le \Big[ \sup_{x\ne x',y\ne y'} \frac{ K(x,y) - K(x,y')  - K(x',y)+ K(x',y')}{d(x,x')\,d(y,y')}   \Big]\,||\mu - \nu||_{\mathrm{KR}}^2.
      \end{aligned}
   \]
   
   Next, we show that \[
      c = \sup_{x\ne x',y\ne y'} \frac{ K(x,y) - K(x,y')  - K(x',y)+ K(x',y')}{d(x,x')\,d(y,y')} = \frac{1}{\sigma^2}.
   \]
   
	Because $K$ is differentiable, it suffices to check that the operator norm of $\partial_{x_i} \partial_{y_j} K(x,y)$ is bounded. To see this, note that \[
		\begin{aligned}
			c 
				&= \sup_{x\ne x'}\frac{\mathrm{Lip}\big( K(x,\cdot )-K(x',\cdot)\big)}{||x-x'||} \\
				&= \sup_{x\ne x',y} \frac{||\nabla_y K(x,y) - \nabla_y K(x',y)||_2}{||x-x'||} \\
				&\le \sup_{x,y} ||\nabla_x \nabla_y K(x,y) ||_{\text{op}},
		\end{aligned}
	\] where we obtained by the last equality by applying the vector-valued mean value theorem \citep{rudin1964principles} to $t \mapsto \nabla_y K((1-t)x+tx', y)$, thereby obtaining \[
		\begin{aligned}
			|| \nabla_y K(x, y) - \nabla_y K(x', y)||_{2} 
			&\le || \partial_t \nabla_y K((1-t)x+tx', y) ||_2 \\
			&= || (x'-x) \cdot \nabla_x\nabla_y K((1-t)x+tx',y) ||_2 \\
			&\le ||x'-x||_2 ||\nabla_x\nabla_y K((1-t)x+tx',y) ||_{\text{op}}.
		\end{aligned}
	\]

	Now \begin{equation}
		\begin{aligned}
			\partial_{x_i}\partial_{y_j} K(x,y) 
				&= \partial_{x_i}\partial_{y_j} \exp\Big( {-\frac{1}{2\sigma^2}\sum_{k} (x_k - y_k)^2} \Big) \\
				&=\partial_{x_i} \exp\Big( {-\frac{1}{2\sigma^2}\sum_{k} (x_k - y_k)^2} \Big) \cdot {-\frac{1}{\sigma^2}(x_j - y_j)}\\
				&= \exp\Big( {-\frac{1}{2\sigma^2}\sum_{k} (x_k - y_k)^2} \Big) \Big[ \frac{1}{\sigma^4} (x_i -y_i)(x_j - y_j) -  \frac{1}{\sigma^2}\delta_{ij}  \Big].
		\end{aligned}\label{eq:gaussian_kernel_computation}
	\end{equation} This is the symmetric matrix \[
		\frac{1}{\sigma^2} \exp\Big({-\frac{||x-y||^2}{2\sigma^2}}\Big) \Big[ \frac{1}{\sigma^2} (x-y)(x-y)^T - I \Big]. 
	\]
	By inspection, we see that $x-y$ is an eigenvector with eigenvalue $\frac{1}{\sigma^2} \exp({-\frac{||x-y||^2}{2\sigma^2}}) (\frac{1}{\sigma^2} ||x-y||^2 - 1)$, and the other eigenvectors are orthogonal to $x-y$ with eigenvalues $-\frac{1}{\sigma^2} \exp({-\frac{||x-y||^2}{2\sigma^2}})$.
	
	Setting $z = \frac{||x-y||^2}{2\sigma^2}$ and taking the absolute value of the eigenvalues, we see the maximum operator norm over all $x, y$ is equal to
	\[
		\begin{aligned}
			\max \Big\{ \sup_{z \ge 0} \frac{1}{\sigma^2} e^{-z}|2z-1|,\ \sup_{z \ge 0} \frac{1}{\sigma^2} e^{-z} \Big\} = \frac{1}{\sigma^2}.
		\end{aligned}
   \]
   Therefore \[
      c \le \frac{1}{\sigma^2}.
   \]

   It is not strictly necessary for the proof to show equality. However, to show equality, let $u$ be a unit vector, and set $x' = y' = 0$, and $x = y = tu$. Then \[
      \begin{aligned}
         c 
            &\ge \sup_{t \ne 0} \frac{ K(tu,tu) - K(tu,0)  - K(0,tu)+ K(0,0)}{d(tu,0)\,d(tu,0)} \\
            &= \sup_{t \ne 0} \frac{2 - 2\exp(-\frac{t^2}{2\sigma^2})}{t^2} \\
            &\ge \lim_{t \to 0}\frac{2 - 2\exp(-\frac{t^2}{2\sigma^2})}{t^2} \\
            &= \lim_{t \to 0}\frac{- 2\exp(-\frac{t^2}{2\sigma^2}) \cdot -\frac{2t}{2\sigma^2}}{2t} \\
            &= \frac{1}{\sigma^2},
      \end{aligned}
   \] where we used l'H\^opital's rule to evaluate the limit.
\end{proof}

\PropVariationalSmoothnessGAN*
\begin{proof}
	For the minimax loss, the Bregman divergence is:
	\[
		\begin{aligned}
			\mathfrak{D}_{D_{\mathrm{JS}}(\cdot \parallel \mu_0)}(\nu, \mu) 
				&= \int \Big[ \frac{1}{2}\log \frac{\mu_0}{\frac{1}{2}\mu_0 + \frac{1}{2}\nu} \,d\mu_0 + \frac{1}{2}\log \frac{\nu}{\frac{1}{2}\mu_0 + \frac{1}{2}\nu} \,d\nu \Big] \\
				&\qquad - \int \Big[ \frac{1}{2}\log \frac{\mu_0}{\frac{1}{2}\mu_0 + \frac{1}{2}\mu}\,d\mu_0 + \frac{1}{2}\log \frac{\mu}{\frac{1}{2}\mu_0 + \frac{1}{2}\mu}\,d\mu  \Big] \\
				&\qquad - \int \frac{1}{2}\log \frac{\mu}{\mu_0 + \mu} \,d(\nu - \mu)  \\
				&= \frac{1}{2}\int \log \frac{\mu_0 + \mu}{\mu_0 + \nu}\, d\mu_0 + \frac{1}{2} \int \Big[ \log \frac{\nu}{\frac{1}{2}\mu_0 + \frac{1}{2}\nu} - \log \frac{\mu}{\mu_0 + \mu} \Big]\,d\nu + \frac{1}{2}\log \frac{1}{2}\\
				&= \frac{1}{2}\int \log \frac{\mu_0 + \mu}{\mu_0 + \nu}\, d(\mu_0 + \nu) + \frac{1}{2} \int \log \frac{\nu}{\mu} \,d\nu\\
				&= D_{\mathrm{KL}}(\tfrac{1}{2}\nu + \tfrac{1}{2}\mu_0 \parallel \tfrac{1}{2}\mu + \tfrac{1}{2}\mu_0) + \frac{1}{2}D_{\mathrm{KL}}(\nu \parallel \mu),
			\end{aligned}
		\]
		and for the non-saturating loss, the Bregman divergence is
		\[
		\begin{aligned}
			\mathfrak{D}_{D_{\mathrm{KL}}(\frac{1}{2}\cdot + \frac{1}{2}\mu_0 \parallel \mu_0)}(\nu, \mu)
				&= \int \log \frac{\frac{1}{2}\mu_0 + \frac{1}{2}\nu}{\mu_0} \,d(\tfrac{1}{2}\nu + \tfrac{1}{2}\mu_0) \\
				&\qquad - \int \log \frac{\frac{1}{2}\mu_0 + \frac{1}{2}\mu}{\mu_0} \,d(\tfrac{1}{2}\mu + \tfrac{1}{2}\mu_0) \\
				&\qquad + \int \frac{1}{2}\log \frac{\mu_0}{\mu_0 + \mu} \,d(\nu - \mu)  \\
				&= \frac{1}{2}\int \log \frac{\mu_0 + \nu}{\mu_0 + \mu} \,d\mu_0 +  \frac{1}{2}\int \log \frac{\frac{1}{2}\mu_0 + \frac{1}{2}\nu}{\mu_0 + \mu} \,d\nu - \frac{1}{2}\log \frac{1}{2} \\
				&= D_{\mathrm{KL}}(\tfrac{1}{2}\nu + \tfrac{1}{2}\mu_0 \parallel \tfrac{1}{2}\mu + \tfrac{1}{2}\mu_0).
		\end{aligned}
	\]
	Choosing $\nu$ to be not absolutely continuous w.r.t.~$\frac{1}{2}\mu + \frac{1}{2}\mu_0$ makes the Bregman divergence $\infty$, which is sufficient to show that the Bregman divergence is not bounded by $||\mu - \nu ||^2_{\mathrm{KR}}$.
\end{proof}

\PropVariationaSmoothnessMMD*
\begin{proof}
	We work with the Gaussian kernel $K(x,y) = e^{-\frac{||x- y||^2}{2\sigma^2}}$ and use \cref{prop:variational_smoothness_bregman}:
	\[
		\begin{aligned}
			\mathfrak{D}_{\frac{1}{2}\mathrm{MMD}^2(\cdot, \mu_0)}(\nu, \mu)
				&= \frac{1}{2}\int K(x,y)\,(\nu - \mu_0)(dx)\,(\nu - \mu_0)(dy) \\
				&\qquad - \frac{1}{2}\int K(x,y)\,(\mu - \mu_0)(dx)\,(\mu - \mu_0)(dy) \\
				&\qquad - \int (K(x,y)\,\mu(dy)- K(x,y)\,\mu_0(dy)) \,(\nu - \mu)(dx) \\
				&= \frac{1}{2}\int K(x,y)\,(\nu - \mu)(dx)\,(\nu - \mu)(dy) \\
				&= \frac{1}{2}\mathrm{MMD}^2(\mu, \nu) \\
				&\le \frac{1}{2\sigma^2} || \mu - \nu||_{\mathrm{KR}}^2,
		\end{aligned}
	\] where the last line is from \cref{lem:gaussian_mmd_smooth}. The result follows with $\sigma = (2\pi)^{-1/2}$.

	The result actually applies more generally for the MMD loss with a differentiable kernel $K$ that satisfies $\beta_2:= \sup_{x,y}\|\nabla_x\nabla_y K(x,y)\|_2<\infty$.
\end{proof}

\PropMoreauYosida*
\begin{proof}
    We work on the more general case of $K(x,y) = (2\pi \sigma^2)^{-d/2} \exp(-\frac{\norm{x-y}^2}{2\sigma^2})$, and define \[
    	\tilde{J}(\mu) = \inf_{\tilde\mu} J(\tilde\mu) + \frac{cL}{2} \mathrm{MMD}^2_K(\mu, \tilde\mu)
    \] for $c = \sigma^{2}(2\pi\sigma^2)^{d/2}$. 

	Let $\mu^*$ be the unique minimizer of the infimum, which exists because the function is a strongly convex. By the envelope theorem, we compute that \[
		\begin{aligned}
			\frac{d}{d\epsilon} \tilde{J}(\mu + \epsilon\chi) 	\Big|_{\epsilon = 0}
				&= \frac{d}{d\epsilon} \frac{cL}{2} || \mu + \epsilon\chi - \mu^* ||_{\mathcal{H}}^2 \Big|_{\epsilon=0} \\
				&= \frac{d}{d\epsilon} \frac{cL}{2}  \big(  || \mu - \mu^*||_{\mathcal{H}}^2  + 2\epsilon \langle \mu - \mu^*, \chi \rangle_{\mathcal{H}} + \epsilon^2 ||\chi ||_{\mathcal{H}}^2 \big) \Big|_{\epsilon=0} \\
				&= cL \langle \mu - \mu^*, \chi \rangle_{\mathcal{H}} \\
				&= cL \int (\mu - \mu^* )\,d\chi,
		\end{aligned}
	\] so $\frac{\delta \tilde{J}}{\delta \mu}  = cL(\mu - \mu^*)$.

	Then
	\[
		\begin{aligned}
		    \mathfrak{D}_{\tilde{J}}(\nu, \mu)
			&= \tilde{J}(\nu) - \tilde{J}(\mu) - \int \frac{\delta \tilde{J}}{\delta \mu} \,d(\nu - \mu) \\
				&= \inf_{\tilde{\nu}} J(\tilde\nu) + \frac{cL}{2}|| \nu - \tilde\nu ||_\mathcal{H}^2 - \Big[ J(\mu^*) + \frac{cL}{2}|| \mu - \mu^* ||_\mathcal{H}^2 \Big] - cL \langle \mu - \mu^*, \nu - \mu\rangle_{\mathcal{H}} \\
            &\le  J(\mu^*) + \frac{cL}{2}|| \nu - \mu^* ||_\mathcal{H}^2 - \Big[ J(\mu^*) + \frac{cL}{2}|| \mu - \mu^* ||_\mathcal{H}^2 \Big] - cL \langle \mu - \mu^*, \nu - \mu\rangle_{\mathcal{H}} \\
			&= \frac{cL}{2} ||\mu - \nu||^2_{\mathcal{H}} \\
            &\le \frac{cL}{2} \cdot (2\pi\sigma^2)^{-d/2} \cdot\frac{1}{\sigma^2} || \mu - \nu||^2_{\mathrm{KR}} \\
            &= \frac{L}{2} || \mu - \nu||^2_{\mathrm{KR}},
		\end{aligned}
   \] where we used \cref{lem:gaussian_mmd_smooth} for the second-to-last line. The proposition follows for $\sigma = (2\pi)^{-1/2}$.
   
   Regarding this choice of $\sigma$, it will turn out that in the general case, the dual penalty includes a numerically unfavorable factor of $(2\pi\sigma^2)^{-d/2}$, dependent on the dimension of the problem. In practical applications, such as image generation, $d$ can be quite large, making the accurate computation of $(2\pi\sigma^2)^{-d/2}$ completely infeasible. For numerical stability, we propose choosing the critical parameter $\sigma = (2\pi)^{-1/2}$, which corresponds to the dimension-free kernel $K(x,y) = e^{-\pi||x-y||^2}$.
\end{proof}

\begin{lemma}
    Let $\mathcal{H}$ be an RKHS with a continuous kernel $K$ on a compact domain $X$. The convex conjugate of $\mu \mapsto \frac{\lambda}{2}||\mu||^2_{\mathcal{H}}$ is $f \mapsto \frac{1}{2\lambda}||f||^2_\mathcal{H} + \chi\{ f \in \mathcal{H} \}$.
\end{lemma}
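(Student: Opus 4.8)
The plan is to evaluate the convex conjugate directly, namely
$R^\star(f) = \sup_{\mu \in \mathcal{M}(X)} \bigl[ \int f\,d\mu - \tfrac{\lambda}{2}\norm{\hat\mu}_{\mathcal{H}}^2 \bigr]$,
and to show it equals $\tfrac{1}{2\lambda}\norm{f}_{\mathcal{H}}^2$ when $f \in \mathcal{H}$ and $+\infty$ otherwise; this simultaneously justifies the conjugate pair recorded in \eqref{eq:r3}. Two standard RKHS facts carry the argument. First, the reproducing identity $\int f\,d\mu = \langle f, \hat\mu \rangle_{\mathcal{H}}$ for every $f \in \mathcal{H}$ and $\mu \in \mathcal{M}(X)$: continuity of $K$ on the compact set $X$ makes $x \mapsto K(x,\cdot)$ a continuous, hence Bochner integrable, $\mathcal{H}$-valued map, so $\hat\mu = \int K(x,\cdot)\,d\mu$ and the bounded functional $\langle f,\cdot\rangle_{\mathcal{H}}$ passes inside the integral. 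Second, $\mathcal{H}_0 := \operatorname{span}\{K(x,\cdot):x\in X\}$ is dense in $\mathcal{H}$, and each $h = \sum_i a_i K(x_i,\cdot) \in \mathcal{H}_0$ is exactly the mean embedding $\hat\mu$ of the finitely supported measure $\mu = \sum_i a_i\delta_{x_i} \in \mathcal{M}(X)$. (Continuity of $K$ on compact $X$ also forces $\mathcal{H} \subseteq \mathcal{C}(X)$, so $\chi\{f \in \mathcal{H}\}$ is a legitimate function on $\mathcal{C}(X)$.)

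For $f \in \mathcal{H}$ the upper bound is Young's inequality: $\int f\,d\mu = \langle f, \hat\mu\rangle_{\mathcal{H}} \le \norm{f}_{\mathcal{H}}\norm{\hat\mu}_{\mathcal{H}} \le \tfrac{1}{2\lambda}\norm{f}_{\mathcal{H}}^2 + \tfrac{\lambda}{2}\norm{\hat\mu}_{\mathcal{H}}^2$, so every term in the supremum is at most $\tfrac{1}{2\lambda}\norm{f}_{\mathcal{H}}^2$. For the reverse, pick by density finitely supported measures $\mu_n$ with $\hat\mu_n \to f/\lambda$ in $\mathcal{H}$; then $\int f\,d\mu_n - \tfrac{\lambda}{2}\norm{\hat\mu_n}_{\mathcal{H}}^2 = \langle f, \hat\mu_n\rangle_{\mathcal{H}} - \tfrac{\lambda}{2}\norm{\hat\mu_n}_{\mathcal{H}}^2 \to \tfrac{1}{\lambda}\norm{f}_{\mathcal{H}}^2 - \tfrac{1}{2\lambda}\norm{f}_{\mathcal{H}}^2 = \tfrac{1}{2\lambda}\norm{f}_{\mathcal{H}}^2$, giving $R^\star(f) \ge \tfrac{1}{2\lambda}\norm{f}_{\mathcal{H}}^2$ and hence equality. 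This is the infinite-dimensional analogue of the elementary Hilbert-space computation and parallels \cref{lem:kr_norm_sq_conjugate}.

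The main work is the case $f \notin \mathcal{H}$, where I will argue by contradiction that $R^\star(f) = +\infty$. Suppose $R^\star(f) = M < \infty$. Testing the supremum against the rescaled measures $t\sum_i a_i\delta_{x_i}$ for arbitrary $t \in \R$, finite point sets $\{x_i\}$, and coefficients $\{a_i\}$ gives $t\sum_i a_i f(x_i) - \tfrac{\lambda t^2}{2}\norm{\sum_i a_i K(x_i,\cdot)}_{\mathcal{H}}^2 \le M$ for all $t$. Letting $t \to \pm\infty$ shows that $\sum_i a_i K(x_i,\cdot) = 0$ in $\mathcal{H}$ forces $\sum_i a_i f(x_i) = 0$, so $\ell\bigl(\sum_i a_i K(x_i,\cdot)\bigr) := \sum_i a_i f(x_i)$ is a well-defined linear functional on $\mathcal{H}_0$; optimizing the quadratic in $t$ yields $|\ell(h)| \le \sqrt{2\lambda M}\,\norm{h}_{\mathcal{H}}$ for all $h \in \mathcal{H}_0$. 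By density, $\ell$ extends to a bounded linear functional on $\mathcal{H}$, and the Riesz representation theorem produces $g \in \mathcal{H}$ with $\ell = \langle g, \cdot\rangle_{\mathcal{H}}$; evaluating at $h = K(x,\cdot)$ gives $f(x) = \ell(K(x,\cdot)) = \langle g, K(x,\cdot)\rangle_{\mathcal{H}} = g(x)$ for every $x$, so $f = g \in \mathcal{H}$, a contradiction. Hence $R^\star(f) = +\infty = \tfrac{1}{2\lambda}\norm{f}_{\mathcal{H}}^2 + \chi\{f \in \mathcal{H}\}$. The one delicate point is the well-definedness of $\ell$ on $\mathcal{H}_0$ when the $K(x_i,\cdot)$ are linearly dependent (singular Gram matrix), but the $t \to \pm\infty$ step handles precisely this, so no pseudoinverse bookkeeping is needed.
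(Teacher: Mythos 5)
Your proof is correct, but it takes a genuinely different route from the paper's. For the lower bound, the paper invokes Mercer's decomposition $K(x,y)=\sum_i\gamma_i\phi_i(x)\phi_i(y)$, restricts the supremum to measures with square-integrable Radon--Nikodym densities, expands both $f$ and the density in the eigenbasis, and solves a componentwise quadratic problem; the value $\tfrac{1}{2\lambda}\sum_j a_j^2/\gamma_j$ it obtains is finite or $+\infty$ precisely according to whether $f\in\mathcal{H}$, which unifies the two cases. You instead work entirely synthetically, using only the density of the pre-Hilbert span $\mathcal{H}_0=\operatorname{span}\{K(x,\cdot)\}$ in $\mathcal{H}$: the lower bound for $f\in\mathcal{H}$ comes from approximating $f/\lambda$ by embeddings of finitely supported signed measures, and the $f\notin\mathcal{H}$ case is a clean Riesz-representation contradiction. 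Your handling of the linear-dependence issue via the $t\to\pm\infty$ limit is the right way to get well-definedness of $\ell$ on $\mathcal{H}_0$ without Gram-matrix bookkeeping. What each approach buys: the paper's spectral expansion makes the finite-vs-infinite dichotomy manifest in a single formula, but it leans on Mercer's theorem and leaves implicit the truncation step needed when the coordinatewise optimizer is not $L^2$; your version avoids Mercer entirely, needs only the elementary RKHS facts (reproducing property, density of $\mathcal{H}_0$, $\mathcal{H}\subseteq\mathcal{C}(X)$ for continuous $K$ on compact $X$), and makes the $f\notin\mathcal{H}$ divergence genuinely watertight. Both arrive at the same conjugate; yours is the more elementary and, in the divergent case, the more rigorous derivation.
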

\begin{proof}

First we show an upper bound for $f\in\mathcal{H}$:
\[
		\begin{aligned}
			\sup_\mu \Big[ \int f \,d\mu - \frac{\lambda}{2} ||\mu||_{\mathcal{H}}^2 \Big]
				&= \sup_\mu \Big[ \langle f , \mu \rangle_{\mathcal{H}} - \frac{\lambda}{2} ||\mu||_{\mathcal{H}}^2 \Big] \\
				&\le \sup_\mu \Big[||f||_{\mathcal{H}} ||\mu||_{\mathcal{H}} - \frac{\lambda}{2}||\mu||_{\mathcal{H}}^2\Big] \\
				&\le \sup_{z \in \R} \Big[||f||_{\mathcal{H}} z - \frac{\lambda}{2}z^2\Big] \\
				&= ||f||_{\mathcal{H}} \cdot \frac{||f||_{\mathcal{H}}}{\lambda} - \frac{\lambda}{2} \cdot \frac{||f||_{\mathcal{H}}^2}{\lambda^2} \\
				&= \frac{1}{2\lambda}||f||_{\mathcal{H}}^2.
		\end{aligned}
\]

To derive a lower bound for general $f\in\mcC(X)$, we use  the Mercer decomposition of the positive definite kernel $K$,
\[
K(x,y) = \sum_{i=1}^\infty \gamma_i \phi_i(x)\phi_i(y) 
\]
where $\gamma_i\geq 0$ are eigenvalues and $\{\phi_i\}_{i=1}^\infty$ is a complete orthonormal sequence of $L^2(X;dx)$. It is well-known that the corresponding RKHS is given by 
\[
\mathcal{H} =\left\{ g\in L^2(X,dx)\mid \sum_{i=1}^\infty \frac{(g,\phi_i)_{L^2(X)}^2}{\gamma_i} < \infty \right\},
\]
and the norm of $g\in\mathcal{H}$ by 
\[
\|g\|_\mathcal{H}^2 = \sum_{j=1}^\infty \frac{(g,\phi_i)_{L^2(X)}^2}{\gamma_i}.
\]    

Let $f\in \mcC(X)\subset L^2(X,dx)$ be an arbitrary function. We replace the supremum of the conjugate function
\begin{equation}\label{lma:conjugate_KME2}
  \sup_\mu \Big[ \int f \,d\mu - \frac{\lambda}{2} ||\hat{\mu}||_{\mathcal{H}}^2 \Big]
\end{equation}
by $\mu\in\mcM(X)$ that has a square-integrable Radon--Nykodim derivative with respect to the Lebesgue measure $dx$, and then we have a lower bound. We use $\mu(x)$ for the Radon--Nykodim derivative with slight abuse of notation. 

 Suppose  
$f(x)=\sum_{j=1}^\infty a_j \phi_j(x)$ and $\mu(x) =\sum_{j=1}^\infty b_j \phi_j(x)$ are the expansion. Note that, since the kernel embedding is given by 
\[
\hat{\mu}(x) = \int K(x,y)\mu(y)\,dy = \sum_{j=1}^\infty \gamma_j b_j \phi_j(x), 
\]
the above maximization is reduced to 
\[
\sup_{b_j} \sum_{j=1}^\infty a_j b_j - \frac{\lambda}{2} \gamma_j b_j^2.
\]
This is maximized when $b_j = a_j/(\lambda\gamma_j)$, and the maximum value is 
\begin{equation}\label{eq:norm2_expanded}
     \frac{1}{2\lambda} \sum_{j=1}^\infty \frac{a_j^2}{\gamma_j}.
\end{equation}
If $f\in\mathcal{H}$, this value is finite, and the lower bound of the conjugate given by is $\frac{1}{2\lambda} \|f\|_\mathcal{H}^2$, which is the same as the upper bound.  If $f\notin \mathcal{H}$, the value \eqref{eq:norm2_expanded} is infinite, and the conjugate function takes $+\infty$.
\end{proof}

\PropGaussianNormExpansion*
\begin{proof}
    We consider the more general case of $K(x,y) = (2\pi \sigma^2)^{-d/2} e^{-||x-y||^2/2\sigma^2}$, where we have \[
      \begin{aligned}
         ||f||^2_{\mathcal{H}} 
         &= \sum_{k=0}^\infty \,(\tfrac{1}{2}\sigma^{2})^k\! \sum_{k_1 + \cdots + k_d = k} \frac{1}{\prod_{i=1}^d k_i!} \,||\partial^{k_1}_{x_1} \cdots \partial^{k_d}_{x_d} f ||^2_{L^2(\R^d)} \\
         &= ||f||^2_{L^2(\R^d)} + \tfrac{1}{2}\sigma^2||\nabla f||^2_{L^2(\R^d)} + \tfrac{1}{4}\sigma^4 || \nabla^2 f||^2_{L^2(\R^d)}+ \textrm{other terms}.
      \end{aligned}
   \]

   \cite{novak2018reproducing} prove this result for $\sigma = 1$. We sketch the proof for the general case here. We use the Fourier transform convention that \[
      \hat{f}(k) = (2\pi)^{-d/2} \int_{\R^d} f(x)\, e^{-ik \cdot x}\, dx,\quad
      f(x) = (2\pi)^{-d/2} \int_{\R^d} \hat{f}(k)\,e^{ik \cdot x}\, dk.
   \]

   Consider the inner product \[
      \begin{aligned}
         \langle f, g\rangle = \int_{\R^d} e^{\sigma^2||k||^2 / 2}\,\hat{f}(k)\,\overline{\hat{g}(k)}\,dk,
      \end{aligned}
   \] defined for functions $||f|| < \infty$. Expanding the exponential in Taylor series, this inner product gives the equation for the norm in the proposition. By use of the Fourier inversion formula, it can be shown that $\langle f, K_x \rangle = f(x)$, where \[
      K_x(y) = (2\pi\sigma^2)^{-d/2} e^{-||x - y||^2/2\sigma^2},
   \] so this is an RKHS with the Gaussian kernel.
\end{proof}

\end{document}